\renewcommand{\@biblabel}[1]{\quad#1.}
\definecolor{Gray}{gray}{.25}
\DeclareFontFamily{U}{mathx}{\hyphenchar\font45}
\DeclareFontShape{U}{mathx}{m}{n}{
	<5> <6> <7> <8> <9> <10> gen * mathx
	<10.95> mathx10 <12> <14.4> <17.28> <20.74> <24.88> mathx12
}{}
\DeclareSymbolFont{mathx}{U}{mathx}{m}{n}
\DeclareMathSymbol{\intop}  {1}{mathx}{"B3}
\let\temp\phi
\let\phi\varphi
\let\varphi\temp
\newcommand{\R}{\mathbb{R}}
\newcommand{\E}{\mathbb{E}}
\newcommand{\given}{\,|\,}  
\newcommand{\norm}[1]{\Vert#1\Vert}
\newcommand{\ip}[1]{\langle#1\rangle}
\newcommand{\eps}{\varepsilon}
\newcommand{\opt}{\textbf{OPT}}
\DeclarePairedDelimiterX{\infdivx}[2]{(}{)}{%
	#1\;\delimsize\|\;#2%
}
\DeclareMathOperator{\tr}{tr}
\DeclareMathOperator{\Cov}{Cov}
\DeclareMathOperator{\cov}{cov}
\DeclareMathOperator{\Var}{Var}
\DeclareMathOperator{\rank}{rank}
\newcommand{\dist}{\mathcal{D}}
\newcommand{\defeq}{\vcentcolon=}
\newcommand{\eqdef}{=\vcentcolon}
\newcommand{\xxspace}{\mathcal{X}}
\newcommand{\yyspace}{\mathcal{Y}}
\newcommand{\zzspace}{\mathcal{Z}}
\newcommand{\RR}{\mathbb{R}}
\newcommand{\Exp}{\mathbb{E}}
\newcommand{\ind}{\mathbb{I}}
\DeclarePairedDelimiterX{\inp}[2]{\langle}{\rangle}{#1, #2}
\newcommand{\iregion}{\mathcal{R}_{\text{CE}}}
\newcommand{\cregion}{\mathcal{R}_{\text{LS}}}
\newcommand{\deltaya}{\Delta_{Y\mid A}}
\newcommand{\deltaay}{\Delta_{A\mid Y}}
\newcommand{\opty}{E_Y^*}
\newcommand{\opta}{E_A^*}
\newtheorem{assumption}{Assumption}[section]
\definecolor{maroon}{rgb}{0.5, 0.0, 0.0}
\newcommand{\error}{\text{err}}
\newcommand{\loss}{\ell}
\newcommand{\risk}{R^{*}}
\def\thm@space@setup{\thm@preskip=2pt
	\thm@postskip=0pt}
\begin{document}

\title{Fundamental Limits and Tradeoffs in Invariant Representation Learning}

\author{
       \AND
       \name Han Zhao \thanks{Equal contribution} \email hanzhao@illinois.edu \\
       \addr{University of Illinois Urbana-Champaign}
       \AND
       \name Chen Dan $^*$ \email cdan@cs.cmu.edu \\
       \addr{Carnegie Mellon University}
       \AND
       Bryon Aragam \email bryon@chicagobooth.edu \\
       \addr{University of Chicago}
       \AND
       Tommi S. Jaakkola \email tommi@csail.mit.edu \\
       \addr{Massachusetts Institute of Technology}
       \AND 
       Geoffrey J. Gordon \email ggordon@cs.cmu.edu\\
       \addr{Carnegie Mellon University}
       \AND 
       Pradeep Ravikumar \email pradeepr@cs.cmu.edu \\
       \addr{Carnegie Mellon University}
}

\editor{Adrian Weller}

\maketitle

\begin{abstract}
A wide range of machine learning applications such as privacy-preserving learning, algorithmic fairness, and domain adaptation/generalization among others, involve learning \emph{invariant representations} of the data that aim to achieve two competing goals: (a) maximize information or accuracy with respect to a target response, and (b) maximize invariance or independence with respect to a set of protected features (e.g.\ for fairness, privacy, etc). Despite their wide applicability, theoretical understanding of the optimal tradeoffs --- with respect to accuracy, and invariance --- achievable by invariant representations is still severely lacking. In this paper, we provide an information theoretic analysis of such tradeoffs under both classification and regression settings. More precisely, we provide a geometric characterization of the accuracy and invariance achievable by any representation of the data; we term this feasible region the information plane. We provide an inner bound for this feasible region for the classification case, and an exact characterization for the regression case, which allows us to either bound or exactly characterize the Pareto optimal frontier between accuracy and invariance. Although our contributions are mainly theoretical, a key practical application of our results is in certifying the potential sub-optimality of any given representation learning algorithm for either classification or regression tasks. Our results shed new light on the fundamental interplay between accuracy and invariance, and may be useful in guiding the design of future representation learning algorithms.
\end{abstract}

\begin{keywords}
Invariant representation learning, domain adaptation, fairness, privacy-preservation, information theory
\end{keywords}


\section{Introduction}
\label{sec:intro}
A key initial step in any machine learning pipeline is to find a suitable feature representation of the raw data. When we have a specific target response in mind, a common approach is to obtain data representations that maximize predictive accuracy with respect to that target response. In practice, however, one typically faces many competing criteria beyond just accuracy. Examples of such criteria include invariance (typically to a group action), statistical independence (with respect to some feature), privacy (so that certain sensitive information cannot be inferred or otherwise exploited), and out-of-domain generalization (across multiple domains and/or tasks). Along these lines, there has been a surge of recent interest in learning   \emph{invariant representations}~\citep{zemel2013learning,hamm2015preserving,anselmi2016unsupervised,ganin2016domain,johansson2016learning,zhao2019learning} i.e. feature transformations of the input data that aim to balance two goals simultaneously. First, the features should preserve enough information with respect to the target task of interest, e.g., achieve high predictive accuracy. Second, the representations should be invariant to changes in some attribute. 
Often there is a tension between these two competing goals of accuracy and invariance maximization. Despite wide applicability of invariant representations however, a theoretical understanding of the limits and tradeoffs between accuracy and invariance achievable by any representation learning algorithm is severely lacking.

In this paper, towards such an understanding, we provide an inner bound on the  ``feasible region'' of the tuple of accuracy and invariance of any data representation.
The geometric properties of this feasible region can then be related to the Pareto optimal tradeoffs between accuracy and invariance (cf.\ Section~\ref{sec:feasible_region} and Figure~\ref{fig:feasible_region}). Interestingly, given existing representation learning algorithms, our proof technique also implies a constructive approach using randomization to achieve any desired interpolation among the given accuracy and invariance. 
Although our contributions are mainly theoretical, we also demonstrate a practical application of our results in certifying the suboptimality of several existing representation learning algorithms in both classification and regression tasks. Overall, we believe our results take an important step towards a better understanding of the interplay between accuracy and invariance.

\subsection{Our Contributions}
Our main contributions can be summarized as follows:
\begin{itemize}

\item We provide an explicit characterization of the feasible region of accuracy and invariance (which we term the information plane; see Section~\ref{sec:feasible_region} for definitions) for any representation of the data (Sections~\ref{sec:discrete},~\ref{sec:continuous}). This can be used to characterize the tradeoffs between accuracy and invariance, as well as the sub-optimality of any learned representation.


\item While we provide an inner bound of the feasible region for classification, we are able to exactly characterize this feasible region for regression tasks. Moreover, we derive an analytic solution to characterize the Pareto optimal frontier with respect to accuracy and invariance, and discuss sufficient conditions under which this frontier could be achievable. 

\item Finally, we demonstrate a practical application of our theoretical results to certify the sub-optimality of existing representation learning algorithms. In particular, we compare several existing invariant representation learning algorithms on real datasets for both classification and regression tasks using our characterization of the information plane. 
\end{itemize}
We first illustrate our results in the discrete, noiseless setting, which already presents some nontrivial analytical challenges (Section~\ref{sec:discrete}). We then generalize to the noisy, continuous setting, where we can exploit additional machinery in order to refine these results further (Section~\ref{sec:continuous}).

\subsection{Related Work}
There are abundant applications of learning invariant representations in various downstream tasks, including domain adaptation~\citep{ben2007analysis,ben2010theory,ganin2016domain,zhao2018adversarial}, algorithmic fairness~\citep{edwards2015censoring,zemel2013learning,zhang2018mitigating,zhao2019conditional}, privacy-preserving learning~\citep{hamm2015preserving,hamm2017minimax,coavoux2018privacy,xiao2019adversarial}, invariant visual representations~\citep{quiroga2005invariant,mallat2012group,anselmi2016invariance}, causal inference~\citep{johansson2016learning,shalit2017estimating,johansson2020generalization}, and multilingual machine translation~\citep{johnson2017google,aharoni2019massively,zhao2020learning}.

To the best of our knowledge, no previous work studies the particular tradeoff problem in this paper. Closest to our work are results in domain adaptation~\citep{zhao2019learning} and algorithmic fairness~\citep{menon2018cost,zhao2019inherent}, where the authors have shown a lower bound on the classification accuracy on two groups. Compared to these previous results, our work directly characterizes the tradeoff between accuracy and invariance in both classification and regression settings. Furthermore, we also give an approximation to the Pareto frontier between accuracy and invariance in both cases. Another line of related work is the \emph{information bottleneck} method~\citep{tishby2000information,tishby2015deep}, where the goal is to learn representations that preserve sufficient statistics w.r.t.\ the target task of interest while at the same time are compressed as much as possible. The main difference between the information bottleneck and the invariant representations analyzed in this paper is that in the former there is no feature that learned representations are required to be invariant to.


\section{Preliminaries}
\label{sec:setup}
Consider the typical supervised learning setting, where we have an input random vector $X \in \xxspace$, and a target response random variable $Y \in \yyspace$. In either classification or regression, we then aim to estimate a prediction function $f: \xxspace \mapsto \yyspace$ that minimizes $\E\loss(f(X), Y)$ for some loss function  $\loss:\yyspace\times\yyspace\to\R$.
In addition to these input and response variables, in our setting there is a third random variable $A$, with respect to which we desire that our prediction function be invariant. As some examples, $A$ could correspond to potential protected attributes (such as ethnicity or gender of an individual) in algorithmic fairness, or could index the environment or domain in domain adaptation. We let $\dist$ denote the joint distribution  over the triple $(X, A, Y)$, from which our observational data are sampled from. 

Throughout the paper, we will use $H(\cdot)$ to denote the entropy of a random variable and $I(\cdot;\cdot)$ to denote the mutual information between a pair of random variables. For any two random variables $X, Y$ over the same sample space, we also use $X\perp Y$ to denote their statistical independence. As usual, we use $\E[\cdot]$ and $\Var(\cdot)$ to denote the expectation and variance of a random variable, respectively.

Upon receiving the data, the goal of the learner is twofold. On the one hand, the learner aims to accurately predict the target $Y$. On the other hand, it also tries to be insensitive to variation in $A$. To achieve these dual goals, a standard approach in the literature~\citep{zemel2013learning,edwards2015censoring,hamm2015preserving,ganin2016domain,zhao2018adversarial} is \textbf{invariant representation learning}. Formally, we are interested in learning a (possibly randomized) transformation function $Z = g(X)$ that contains as much information as possible about the target $Y$, while at the same time filtering out information related to $A$.

To make these notions more precise, we first define $\risk_{Y}(g)\defeq \inf_{h} \E_{\dist}\loss(h(g(X)), Y)$, as the optimal risk in predicting $Y$ using the feature encoding $Z=g(X)$ under loss $\ell$. We similarly define $\risk_{A}(g)$ to be the optimal risk in predicting $A$ using the feature encoding $Z=g(X)$ under some loss $\ell$. Phrased in terms of these risk functions $\risk_{Y}(g)$ and $\risk_{A}(g)$, the goal of invariant representation learning is to find a representation $g$ such that the response prediction loss $\risk_{Y}(g)$ is minimized while the ``invariance variable'' prediction loss $\risk_{A}(g)$ is maximized.

In this paper, we consider two canonical choices of $\loss$: 
\begin{enumerate}
    \item   (Classification) The cross entropy loss, i.e. $ \loss(y,y')=-y \log(y') - (1-y) \log(1-y')$, typically used  in classification when $Y$ is a discrete variable. Under cross-entropy loss, using standard information-theoretic identities (see Appendix~\ref{sec:details} for more detailed derivations), the risk can be written as
\begin{align}
\begin{aligned}
\risk_{Y}(g) = H(Y\given Z),
\quad
\risk_{A}(g) = H(A\given Z).
\end{aligned}
\label{equ:mi}
\end{align}
    \item   (Regression) The squared loss, i.e. $\loss(y,y')=(y-y')^{2}$, which is suitable for regression tasks with continuous $Y$. Under the squared loss, by the law of total variance (see Appendix~\ref{sec:details} for more detailed derivations), the risk functions can be written as
\begin{align}
\begin{aligned}
\risk_{Y}(g) = \E\Var(Y\given Z),
\quad
\risk_{A}(g) = \E\Var(A\given Z).
\end{aligned}
\label{equ:var}
\end{align}
\end{enumerate}
It is worth pointing out that by using the above risks directly in our framework of analysis, we are focused on the information-theoretic limits of the population errors incurred by the learned representations. Put it in another way, this means that our analysis and results will be oblivious to the optimization methods used to learn the representations, as well as the finite sample effects that exist in practical applications. This is a significant advantage of our approach since our results apply to \emph{any} algorithm used to learn invariant representations, including future developments in this rapidly growing area.

\subsection{Motivating Examples}
\label{sec:examples}
We next discuss several motivating examples to which our framework can be applied. As can be seen from the wide range of these examples, the framework is very generally applicable, and analyzing the tradeoffs discussed in the previous section is of considerable interest.

\begin{example}[Privacy-Preservation]
\label{ex:privacy}
In privacy applications, the goal is to make it difficult to predict sensitive data, represented by the attribute $A$, while retaining information about $Y$. One way to achieve this is to pass information through $Z$, the ``privatized'' data, while simultaneously to preserve the information related to the target application.
\end{example}

\begin{example}[Algorithmic Fairness]
\label{ex:fairness}
In fairness applications, we seek to make predictions about the response $Y$ without discriminating based on the information contained in the protected attributes $A$. For example, $A$ may represent a protected class of individuals defined by, e.g. race or gender. This definition of fairness is also known as \emph{statistical parity} in the literature, and has received increasing attention recently from an information-theoretic perspective~\citep{mcnamara2019costs,zhao2019inherent,dutta2019information}. In particular, one way to achieve this goal is through learning \emph{fair representations}~\citep{zemel2013learning,madras2018learning,zhao2019conditional}. 
\end{example}

\begin{example}[Domain Adaptation / Domain Generalization]
\label{ex:domain}
In domain adaptation and domain generalization, the goal is to train a predictor using labeled data from the source domain that generalizes to the target domain. In this case, $A$ corresponds to the identity (or index) of domains, and the hope here is to learn domain-invariant representations $Z$ that are informative about the target $Y$~\citep{ben2007analysis,ben2010theory,zhao2018adversarial,muandet2013domain,li2018deep}.
\end{example}

\begin{example}[Group Invariance]
In computer vision, it is desirable to learn predictors that are invariant to the action of a group $G$ on the input space. Typical examples include rotation, translation, and scale. By considering random variables $A$ that take their values in $G$, one approach to this problem is to learn representations $Z$ that ``ignore'' changes in $A$ while preserving discriminative features. For example, the recent proposal of contrastive learning~\citep{chen2020simple,he2020momentum,khosla2020supervised} falls into this category.
\end{example}

\begin{example}[Information Bottleneck]
\label{ex:ib}
The \emph{information bottleneck} \citep{tishby2000information} is the problem of finding a representation $Z$ that maximizes the objective $I(Z;Y) - \lambda I(Z;X)$ in an unsupervised manner. This is related to, but not the same as the problem we study, owing to the additional invariant attribute $A$ in our setting. Instead, the goal of information bottleneck methods are to obtain \emph{compressed} representations that also preserve target-related information. Hence at a high-level, information bottleneck seeks to find a tradeoff between \emph{accuracy} and \emph{compression}.
\end{example}




\section{Feasible Region for Accuracy \& Invariance}
\label{sec:feasible_region}

Every representation $Z=g(X)$ can be represented by a point in a two-dimensional \emph{information plane}, which is a notion we introduce in the sequel. The information plane not only facilitates our quantitative analysis, but also enables visualizing the quality of a learned representation with respect to the two competing imperatives of accuracy and invariance, and further, makes it easier to compare different representations.

Let us first consider the classification setting. We have that the risk can be expressed as: 
\[\risk_{Y}(g)=H(Y\given Z) = H(Y) - I(Y;Z).\] 
Noting that $H(Y)$ does not depend on $Z$, we can thus characterize the quality of the representations $Z = g(X)$ with respect to accuracy, i.e.\ the $\risk_{Y}(g)$, by simply measuring the non-constant term $I(Y;Z)$. Similarly, instead of $\risk_A(g)$, we could gauge the quality of the representation $Z = g(X)$ with respect to invariance by simply measuring the non-constant term $I(A;Z)$.

Similarly, for the regression setting, we have the risk defined as:
\begin{equation*}
    \risk_{Y}(g) = \E\Var(Y\given Z) = \Var(Y) - \Var\E[Y\mid Z].
\end{equation*}
As before, since $\Var(Y)$ does not depend on the representations $Z$, we can then characterize the quality of the representations $Z = g(X)$ with respect to accuracy, i.e.\ the $\risk_{Y}(g)$, by measuring the non-constant term $\Var\E[Y\mid Z]$. Likewise, instead of working with $\risk_A(g)$ directly, we could measure the invariance of the representations $Z = g(X)$ by looking at the non-constant term $\Var\E[A\mid Z]$.

We refer to the space of all possible pairs $(I(Y; Z), I(A; Z))$, as we range over possible random vectors $Z$, the information plane. With a slight abuse of terminology, we also refer to the space of all possible pairs $(\Var\Exp[Y\mid Z], \Var\Exp[A\mid Z])$, as we range over possible random vectors $Z$, as the information plane. But we are interested in a particular class of random vectors $Z = g(X)$ that are transformations of the input $X$; these then specify the following regions:
\begin{align*}
\text{(Classification):}&\quad \iregion \defeq\{(I(Y; Z), I(A; Z)) : Z = g(X)\}, \\
\text{(Regression):}&\quad \cregion\defeq\{(\Var\Exp[Y\mid Z], \Var\Exp[A\mid Z]) : Z = g(X)\}.
\end{align*}

Both $\iregion$ and $\cregion$ are subsets of the information plane, which we refer to as \emph{feasible regions} for the classification and regression settings, respectively, since each point in this feasible region corresponds to the quality of any representation $Z = g(X)$ with respect to accuracy, and the quality with respect to invariance. We provide an illustration of the information planes and the feasible regions for both cases in Figure~\ref{fig:feasible_region}.

\begin{figure}[tb]
\centering
\begin{subfigure}{0.45\linewidth}
\centering
    \includegraphics[width=\linewidth]{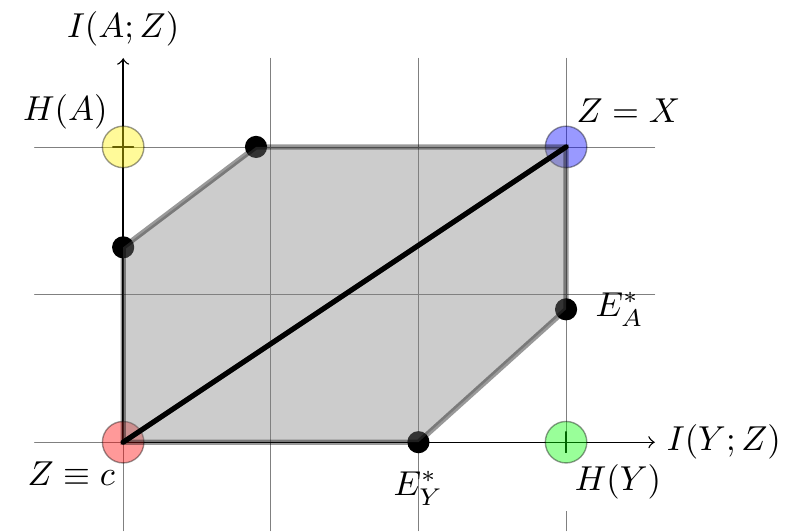}
    \caption{Classification}
\end{subfigure}
~
\begin{subfigure}{0.5\linewidth}
\centering
    \includegraphics[width=\linewidth]{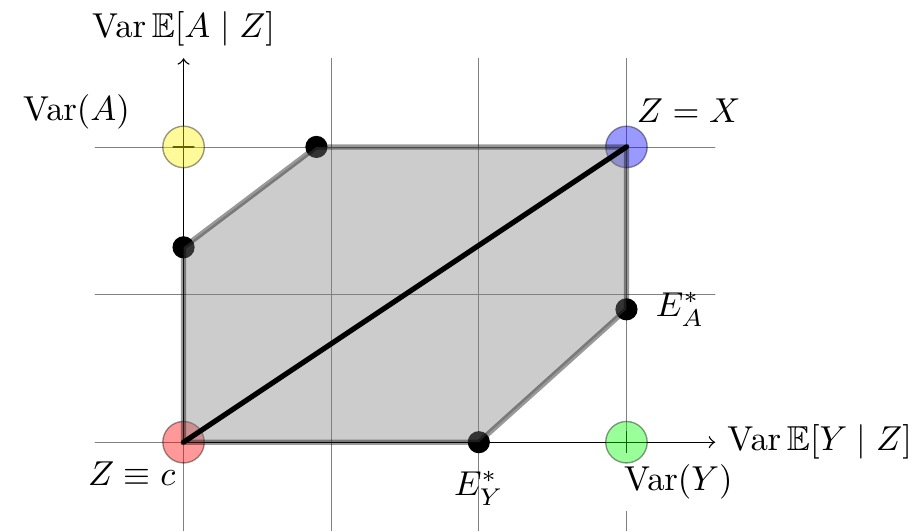}
    \caption{Regression}
\end{subfigure}
\caption{2D information plane. The shaded area correspond to the feasible region.}
\label{fig:feasible_region}
\end{figure}

The primary goal of invariant representation learning is to find representations $Z$ such that $I(Y; Z)$ (resp. $\Var\Exp[Y\mid Z]$) is large and $I(A; Z)$ (resp. $\Var\Exp[A\mid Z]$) is small, corresponding to the lower right corner of Figure~\ref{fig:feasible_region}. By the previous discussion, this is equivalent to finding representations $Z$ that maximize accuracy (i.e. $\Exp_{\dist}[\ell(f(Z), Y)]$) while simultaneously maximizing invariance (i.e. $\Exp_\dist[\ell(f'(Z), A)]$). More precisely, consider the four vertices (not necessarily part of the feasible region) in Figure~\ref{fig:feasible_region}. These four corners have intuitive interpretations:
\begin{itemize}
    \item (Red; Bottom Left) The so-called ``informationless'' regime, in which all of the information regarding both $Y$ and $A$ is destroyed. This is trivially achieved by choosing a constant representation $Z\equiv c$.

    \item (Yellow; Top Left) Here, we retain all of the information in $A$ while removing all of the information about $Y$. This is not a particularly interesting regime for the aforementioned applications.

    \item (Blue; Top Right) The full information regime, where $Z=X$ and no information is lost. This is the ``classical'' setting, wherein information about $A$ is allowed to leak into $Y$.

    \item (Green; Bottom Right) This is the ideal representation that we would like to attain: we preserve all the relevant information about $Y$ while simultaneously removing all the information about $A$.
\end{itemize}

Unfortunately, in general, the ideal representation above (the green corner) may not be attainable due to potential dependency between $Y$ and $A$. That is, there can never exist a representation $Z = g(X)$ that will have invariance and accuracy quality measures corresponding to the green corner. We are thus interested in characterizing how ``close'' we can get to attaining this ideal transformation via any possible representation $Z = g(X)$.
Towards this, it is useful to consider the following extreme points on the boundary of the feasible region:

\newcommand{\optzero}{\opt_0}
\newcommand{\optinf}{\opt_{\infty}}
\begin{itemize}
\item $\opty$:~This point corresponds to a representation $Z$ that maximizes accuracy subject to a hard constraint on the invariance that there be no leakage of information about $A$ into the representation $Z$. In classification, we enforce this via the mutual information constraint $I(A;Z)=0$ and in regression via the conditional variance constraint $\Var\Exp[A\mid Z]=0$. 
\item $\opta$:~This point corresponds to a representation $Z$ that maximizes invariance subject to a hard constraint on the accuracy that there be no loss of information about $Y$ in the representation $Z$. In classification, we enforce this via the mutual information constraint $I(Y;Z)=H(Y)$ and in regression we enforce this via the conditional variance constraint $\Var\Exp[Y\mid Z]=\Var(Y)$. 
\end{itemize}
Now that we have formalized the feasible region, we next geometrically characterize this region, and analytically find the solutions to the extremal points of the region. This characterization will imply a variety of upper and lower bounds on constrained accuracy and invariance, as discussed in Sections~\ref{sec:discrete} and \ref{sec:continuous}. 

\section{Classification}
\label{sec:discrete}
We focus on the case where the original input $X$ contains sufficient information to predict both $Y$ and $A$, so that any loss of accuracy or invariance from a representation $Z = g(X)$ is not due to the non-informative input $X$, but due to the representation itself. This allows us to delineate the effects of the tradeoffs between the two competing goals of accuracy and invariance. Our analysis thus follows the classical setup of Probably Approximately Correct (PAC) learning framework~\citep{valiant1984theory}:
\begin{assumption}
\label{assm:discrete:perfect}
There exist functions $f^*_Y(\cdot)$ and $f^*_A(\cdot)$, such that $Y = f^*_Y(X)$ and $A = f^*_A(X)$. 
\end{assumption}

\vskip0.1in
\noindent The perfect predictors $f^*_Y$ and $f^*_A$ are also known as concepts in the PAC learning framework. Alternatively, this assumption corresponds to the noiseless setting with Bayes optimal classification error being 0.\footnote{This assumption is not necessary; see Appendix~\ref{app:regression}. Essentially, accounting for the Bayes error rate complicates the analysis but does not fundamentally change the conclusions.}
In order to characterize the feasible region $\iregion$, first note that due to the data processing inequality, the following inequalities hold: 
\begin{align*}
    0 \leq I(Y; Z) \leq I(Y; X) &= H(Y), \\
    0 \leq I(A; Z)\leq I(A; X) &= H(A).
\end{align*}
Thus given any $Z$, the point $(I(Y;Z), I(A;Z))$ must lie within a rectangle shown in Figure~\ref{fig:mi0}. The following proposition shows that the feasible region $\iregion$ is convex:
\begin{restatable}{proposition}{iconvex}
The feasible region $\iregion$ for the classification setting is convex.
\label{lemma:convexity}
\end{restatable}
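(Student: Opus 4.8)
The plan is to show convexity by a \emph{randomization} (time-sharing) argument: given two representations and a point on the segment between the pair of information-plane points they realize, I would construct a single representation realizing that intermediate point. Concretely, suppose $Z_0 = g_0(X)$ and $Z_1 = g_1(X)$ realize the points $(I(Y;Z_0), I(A;Z_0))$ and $(I(Y;Z_1), I(A;Z_1))$ in $\iregion$, and fix $\lambda \in [0,1]$. Introduce an auxiliary Bernoulli random variable $T \sim \mathrm{Ber}(\lambda)$, drawn independently of $(X,A,Y)$, and define the new (randomized) representation $Z = (Z_T, T)$, i.e. $Z = g(X)$ with $g$ the randomized map that flips $T$ and then outputs $(g_T(X), T)$. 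Note this is a legitimate element of the class we range over, since the definition of $\iregion$ allows randomized transformations $Z = g(X)$.

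The key computation is then to evaluate $I(Y;Z)$ and $I(A;Z)$ for this $Z$ and check they equal the convex combinations. Using the chain rule, $I(Y; Z) = I(Y; Z_T, T) = I(Y; T) + I(Y; Z_T \mid T)$. Since $T$ is independent of $(X,A,Y)$ we have $I(Y;T) = 0$, and $I(Y; Z_T \mid T) = \sum_{t \in \{0,1\}} \pr(T=t)\, I(Y; Z_t) = (1-\lambda) I(Y;Z_0) + \lambda I(Y;Z_1)$, where I use that conditioned on $T=t$ the variable $Z_T$ is distributed exactly as $Z_t$ and $T$'s independence ensures the conditional mutual information $I(Y;Z_t\mid T=t)$ equals the unconditional $I(Y;Z_t)$. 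The identical argument applied to $A$ gives $I(A;Z) = (1-\lambda) I(A;Z_0) + \lambda I(A;Z_1)$. Hence the convex combination of the two feasible points is itself feasible, which is exactly convexity of $\iregion$.

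The one point that needs a little care — and is the main (minor) obstacle — is the independence bookkeeping: one must be explicit that $T$ is drawn independently of everything else so that both $I(Y;T)=0$ and the conditioning identity $I(Y;Z_t\mid T) = I(Y;Z_t)$ hold, rather than picking up spurious cross terms. It is also worth remarking that this construction is precisely the ``constructive approach using randomization'' advertised in the introduction: it shows not only that $\iregion$ is convex but that any point on a segment between two achievable points is achievable by an explicit randomized encoder built from the two given encoders. I would close by noting that the same argument works verbatim in the regression setting for $\cregion$, replacing $I(Y;Z)$ by $\Var\Exp[Y\mid Z]$ and using the law of total variance together with the independence of $T$ — though that is not needed for the present proposition.
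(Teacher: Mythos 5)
Your proposal is correct and is essentially the paper's own proof: the paper likewise forms $Z=(Z',S)$ with an independent switching variable ($S\sim U(0,1)$ thresholded at $u$, equivalent to your Bernoulli $T$), applies the chain rule $I(Y;Z',S)=I(Y;S)+I(Y;Z'\mid S)$, and uses the independence of the switch to kill the first term and split the second into the convex combination. The point you flag as needing care --- including the auxiliary variable in the representation and drawing it independently of $(X,A,Y)$ --- is exactly the step the paper also makes explicit.
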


\begin{figure*}[tb]
    \centering
    \begin{subfigure}[b]{0.48\linewidth}
        \includegraphics[width=\linewidth]{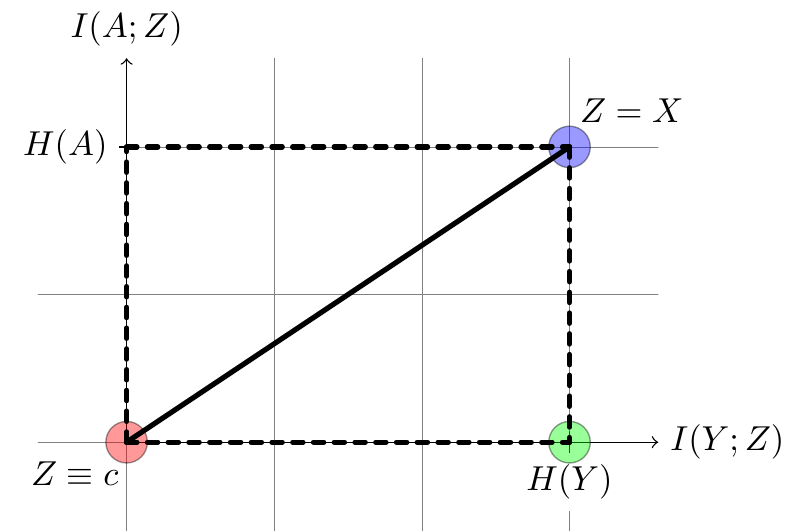}
        \caption{\small Rectangle bounding box.}
        \label{fig:mi0}
    \end{subfigure}
    ~
    \begin{subfigure}[b]{0.48\linewidth}
        \includegraphics[width=\linewidth]{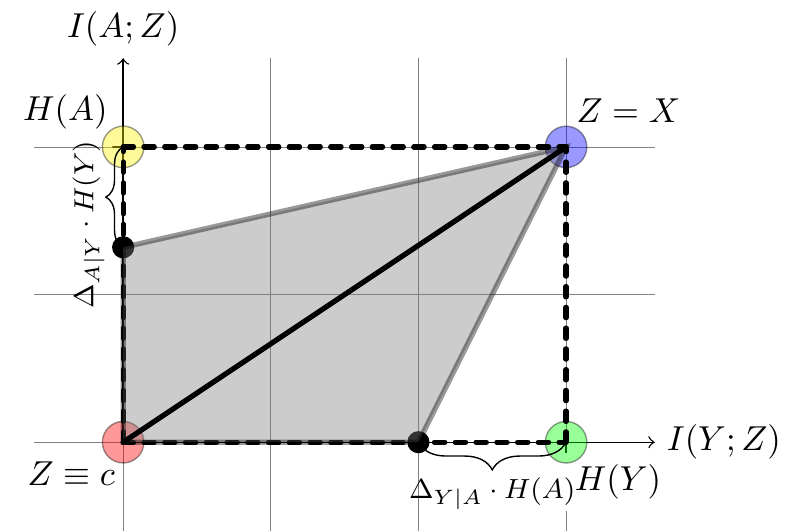}
        \caption{\small Maximal mutual information.}
        \label{fig:mi1}
    \end{subfigure}
    ~
    \begin{subfigure}[b]{0.48\linewidth}
        \includegraphics[width=\linewidth]{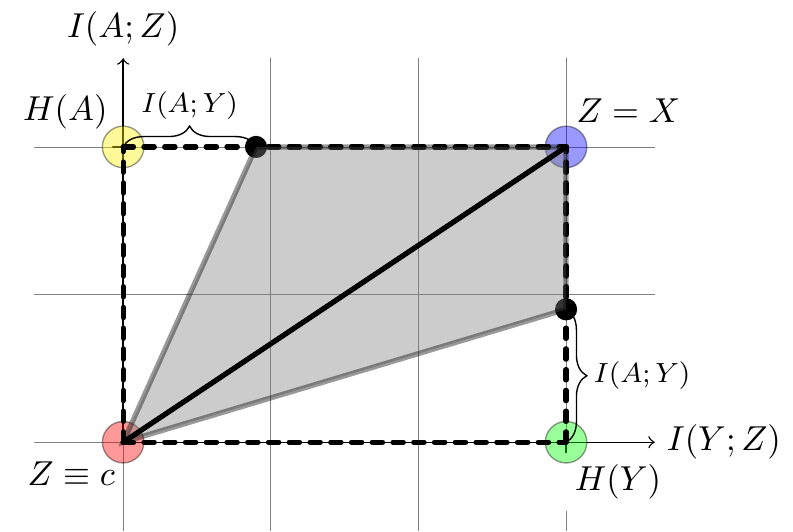}
        \caption{\small Minimum mutual information.}
        \label{fig:mi2}
    \end{subfigure}
    ~
    \begin{subfigure}[b]{0.48\linewidth}
        \includegraphics[width=\linewidth]{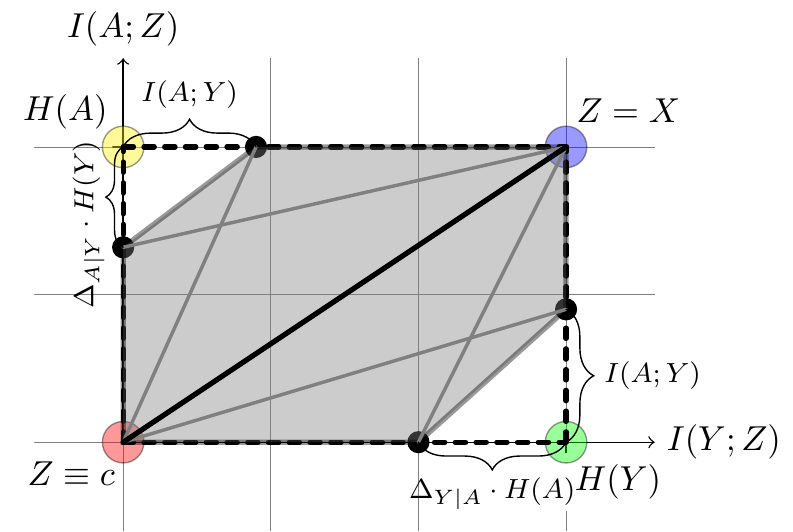}
        \caption{\small The convex polygon characterization of $\iregion$.}
        \label{fig:mi3}
    \end{subfigure}
    \caption{Information plane in classification. Shaded area corresponds to the known feasible region.}
\end{figure*}

\noindent
The convexity of $\iregion$ can be shown by constructing randomized feature transformations to interpolate between any two given representations. Given two feature transformations $Z_0 = g_0(X)$, $Z_1 = g_1(X)$, we can define a uniform random variable $S \sim \text{Bernoulli}(u)$, for some constant $u \in [0,1]$, such that $S$ is independent of $Z_0$ and $Z_1$. Now consider the randomized feature $Z = S Z_0 + (1 - S) Z_1$, i.e.,
\begin{equation}
    Z = \begin{cases}
    Z_0 & \text{If } S \leq u, \\
    Z_1 & \text{otherwise}.
    \end{cases}
\end{equation}
It can then be seen that the invariance and accuracy tuple corresponding to $Z$ is a convex combination of the two points in $\iregion$ corresponding to $Z_0$ and $Z_1$, where the constant $u \in [0,1]$ specifies the convex combination weights. From an algorithmic perspective, this construction also suggests a simple way to achieve a new accuracy-invariance trade-off given existing representations by simply randomizing between them. This geometric nature, convexity, of the feasible region has natural consequences for feasible tradeoffs achievable by some representation. For instance, this entails that all the points on the diagonal of the bounding rectangle are also attainable by some feature representation. 

In the next two sections, we characterize this feasible region in further detail.

\subsection{$\opty$: Maximal Mutual Information under the Independence Constraint}
\label{sec:maxmi}
In this section we explore the first extreme point $\opty$ (Figure~\ref{fig:feasible_region}), which corresponds to maximizing the mutual information of $Z$ w.r.t.\ $Y$ while simultaneously being independent of $A$:
\begin{equation}
\opty := \underset{Z \,:\, I(A;Z) = 0}{\text{maximize}}~I(Y;Z)~.
\label{equ:maxmi}
\end{equation}
First of all, realize that the optimal solution of~\eqref{equ:maxmi} clearly depends on the coupling between $A$ and $Y$. To see this, consider the following two extreme cases:
\begin{example}
If $A = Y$ almost surely, then $I(A; Z) = 0$ directly implies $I(Y;Z) = 0$, hence $\max_Z I(Y;Z) = 0$ under the constraint that $I(A;Z) = 0$.
\label{exp:equal}
\end{example}
\begin{example}
If $A\perp Y$, then $Z = f^*_Y(X) = Y$ satisfies the constraint that $I(A;Z) = I(A;Y) = 0$. Furthermore, $I(Y;Z) = I(Y; Y) = H(Y) \geq I(Y;Z')$, $\forall Z'\neq Y$. Hence $\max_Z I(Y;Z) = H(Y)$.
\label{exp:perp}
\end{example}

\noindent
These two examples show that the optimal solution of~\eqref{equ:maxmi} must involve a quantity that characterizes the dependency between $A$ and $Y$. This motivates the following definition:
\begin{definition}
Define $\deltaya\defeq |\Pr_\dist(Y = 1\mid A = 0) - \Pr_\dist(Y = 1\mid A =1)|$. 
\end{definition}
It is easy to verify the following claims: $0\leq \deltaya\leq 1$ and
\begin{align}
    \deltaya = 0 &\iff A\perp Y, \nonumber\\
    \deltaya = 1 &\iff A = Y \text{ or } A = 1 - Y.
\label{prop:delta}
\end{align}
Armed with this notation, we can now analyze the solution to~\eqref{equ:maxmi}.
\begin{restatable}{theorem}{maxmi}
The optimal value $\opty$ of the program in Eqn. \eqref{equ:maxmi} i.e. the maximal mutual information under the independence constraint is given as: 
\begin{equation}
    \underset{Z \,:\, I(A;Z) = 0}{\max} I(Y;Z) = H(Y) - \deltaya\cdot H(A).
\end{equation}
\label{thm:maxmi}
\end{restatable}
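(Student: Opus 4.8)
The plan is to prove the claimed identity by establishing two matching inequalities: a converse upper bound showing every feasible $Z$ satisfies $I(Y;Z)\le H(Y)-\deltaya\cdot H(A)$, and an explicit construction of a representation $Z=g(X)$ attaining it.

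For the converse, fix any $Z=g(X)$ with $I(A;Z)=0$, i.e.\ $A\perp Z$. Let $p:=\Pr_\dist(A=0)$, and for each value $z$ set $\alpha_z:=\Pr(Y=1\mid A=0, Z=z)$ and $\beta_z:=\Pr(Y=1\mid A=1, Z=z)$. Since $A\perp Z$ gives $\Pr(A=a\mid Z=z)=\Pr(A=a)$, we get $\Pr(Y=1\mid Z=z)=p\,\alpha_z+(1-p)\,\beta_z$ and hence $H(Y\mid Z=z)=h\!\left(p\,\alpha_z+(1-p)\,\beta_z\right)$, where $h(t):=-t\log t-(1-t)\log(1-t)$ is the binary entropy function. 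The crux is the elementary inequality
\[
h\!\left(\lambda a+(1-\lambda)b\right)\;\ge\;|a-b|\cdot h(\lambda),\qquad a,b,\lambda\in[0,1].
\]
Granting it, averaging over $Z$ and applying Jensen's inequality yields $H(Y\mid Z)\ge h(p)\,\Exp_Z|\alpha_Z-\beta_Z|\ge h(p)\,\bigl|\Exp_Z\alpha_Z-\Exp_Z\beta_Z\bigr|$; and $A\perp Z$ forces $\Exp_Z\alpha_Z=\Pr(Y=1\mid A=0)$ and $\Exp_Z\beta_Z=\Pr(Y=1\mid A=1)$, so the bound equals $\deltaya\cdot H(A)$, giving $I(Y;Z)=H(Y)-H(Y\mid Z)\le H(Y)-\deltaya\cdot H(A)$. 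To prove the displayed inequality, assume WLOG $a\ge b$ and put $d:=a-b\in[0,1]$; the difference of the two sides is concave in $b$ on $[0,1-d]$ (as $h$ is concave), so it suffices to verify the two endpoints $b\in\{0,1-d\}$, where — using $h(x)=h(1-x)$ — the claim reduces to $h(dt)\ge d\,h(t)$ for $d,t\in[0,1]$; this holds because $d\mapsto h(dt)-d\,h(t)$ is concave and vanishes at $d\in\{0,1\}$.

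For achievability, assume WLOG $\alpha:=\Pr(Y=1\mid A=0)\ge\beta:=\Pr(Y=1\mid A=1)$ (so $\deltaya=\alpha-\beta$), and build a randomized $Z=g(X)$ with values in $\{0,1,\star\}$ as a randomized function of $(Y,A)=(f^*_Y(X),f^*_A(X))$: output $0$ on $(Y,A)=(0,0)$; $1$ on $(1,1)$; on $(1,0)$ output $1$ with probability $\beta/\alpha$ and $\star$ otherwise; on $(0,1)$ output $0$ with probability $(1-\alpha)/(1-\beta)$ and $\star$ otherwise (the degenerate cases $\alpha\in\{0,1\}$ or $\beta\in\{0,1\}$ are separate and immediate). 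A direct computation gives $\Pr(Z=z\mid A=a)=\beta,\,1-\alpha,\,\deltaya$ for $z=1,0,\star$, independent of $a$, hence $I(A;Z)=0$; moreover $Z\in\{0,1\}$ determines $Y$, while $\Pr(Y=1\mid Z=\star)=\Pr(A=0)=p$ with $\Pr(Z=\star)=\deltaya$, so $H(Y\mid Z)=\deltaya\,h(p)=\deltaya\,H(A)$ and $I(Y;Z)=H(Y)-\deltaya\,H(A)$. Since $Z$ is a randomized function of $(Y,A)$ and thus of $X$, it is an admissible representation, which proves the matching lower bound.

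The step I expect to be the main obstacle is the pointwise entropy inequality $h(\lambda a+(1-\lambda)b)\ge|a-b|\,h(\lambda)$ underlying the converse: the ternary construction for achievability is natural once one pictures $Y$ as a shared bit subject to an $A$-dependent flip of probability $\deltaya$, but making the converse tight requires the two-stage concavity reduction above rather than a single Jensen step (which points the wrong way). If $Z$ is continuous the sums become integrals and one needs the usual disintegration, but the argument is otherwise unchanged.
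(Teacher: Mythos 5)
Your proof is correct, and the converse direction takes a genuinely different route from the paper's. The paper first reduces to a functional representation $Y = f(A, Z, S)$ with $S\sim U(0,1)$ independent of $(A,Z)$, and then argues combinatorially: on the event $(z,\eps)\in D_1\setminus D_0$ (which has probability at least $\deltaya$ by the independence $A\perp Z$) one has $Y=A$, whence $H(Y\mid Z)\ge \deltaya\cdot H(A)$. You instead work directly with the conditional probabilities $\alpha_z,\beta_z$ and prove the pointwise scalar inequality $h(\lambda a+(1-\lambda)b)\ge |a-b|\,h(\lambda)$, then average and use $\Exp|{\cdot}|\ge|\Exp(\cdot)|$ together with $\Exp_Z\alpha_Z=\Pr(Y=1\mid A=0)$; your two-stage concavity reduction to $h(dt)\ge d\,h(t)$ is valid (the difference is concave in $d$ and vanishes at $d\in\{0,1\}$). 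What each buys: the paper's coupling argument sidesteps any explicit entropy inequality but requires the somewhat delicate step of replacing $(Z,A,Y)$ by a realization in which $Y$ is a deterministic function of $(A,Z,S)$ without decreasing $I(Y;Z)$; your argument is entirely self-contained calculus on the binary entropy function and yields the slightly stronger conditional statement $H(Y\mid Z=z)\ge|\alpha_z-\beta_z|\,H(A)$ for each $z$, at the cost of being tied to binary $Y$ and $A$. The achievability constructions are essentially the same three-cell partition in different clothing (the paper's uniform $Z$ with thresholds $\alpha<Z\le\beta$ corresponds exactly to your $\star$ cell), though yours has the minor advantage of being explicitly a randomized function of $(f_Y^*(X),f_A^*(X))$ and hence manifestly admissible.
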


\noindent
As a sanity check of this result, note that if $A\perp Y$, then $\deltaya = 0$, and in this case the optimal value given by Theorem~\ref{thm:maxmi} reduces to $H(Y) - 0\cdot H(A) = H(Y)$, which is consistent with Example~\ref{exp:perp}. Next, consider the other extreme case where $A = Y$. In this case $\deltaya = 1$ and $H(Y) = H(A)$, therefore the optimal solution given by Theorem~\ref{thm:maxmi} becomes $H(Y) - 1\cdot H(A) = H(Y) - H(Y) = 0$. This is consistent with Example~\ref{exp:equal}. 

Furthermore, due to the symmetry between $A$ and $Y$, we can now characterize the locations of the two extremal points on the lower and left boundaries (Figure~\ref{fig:mi1}). In Figure~\ref{fig:mi1}, $\deltaay$ is defined analogously as $\deltaya$ by swapping $Y$ and $A$.

\subsection{$\opta$: Minimal Mutual Information under the Sufficient Statistics Constraint}
\label{sec:minmi}
Next, we characterize $\opta$ in Figure~\ref{fig:feasible_region}. As before, it suffices to solve the following optimization problem, whose optimal solution is $\opta$.
\begin{equation}
\opta := \underset{Z \,:\, I(Y;Z) = H(Y)}{\text{minimize}}~I(A;Z)~.
\label{equ:minmi}
\end{equation}
\begin{restatable}{theorem}{minmi}
The optimal value $\opta$ of the program in Eqn.~\eqref{equ:minmi} i.e. the Minimum Mutual Information under the Sufficient Statistics Constraint is 
\begin{equation}
    \underset{Z \,:\, I(Y;Z) = H(Y)}{\min} I(A;Z) = I(A;Y).
\end{equation}
\label{thm:minmi}
\end{restatable}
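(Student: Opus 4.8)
The plan is to establish the identity by a two-sided argument: first a lower bound $I(A;Z)\ge I(A;Y)$ that holds for \emph{every} feasible $Z$, and then a single explicit construction that attains it, so the infimum is in fact a minimum.

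For the lower bound I would begin by unpacking the constraint. Since $I(Y;Z)=H(Y)-H(Y\mid Z)$, the sufficiency constraint $I(Y;Z)=H(Y)$ is equivalent to $H(Y\mid Z)=0$. In the classification setting $Y$ is discrete, so $H(Y\mid Z)=0$ forces $Y$ to be (almost surely) a deterministic measurable function of $Z$, say $Y=\phi(Z)$. Consequently $A-Z-\phi(Z)$ is trivially a Markov chain --- the last coordinate is a function of the middle one --- and the data processing inequality yields $I(A;Z)\ge I(A;\phi(Z))=I(A;Y)$. I would emphasize that this step uses only that $Y$ is recoverable from $Z$, and in particular it goes through verbatim when $g$ is a randomized encoder; one should be careful to apply DPI to this trivial chain rather than attempting to route through $X$.

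For achievability, I would exhibit one feasible representation meeting the bound. By Assumption~\ref{assm:discrete:perfect} we have $Y=f^*_Y(X)$, so $g=f^*_Y$, i.e.\ $Z=Y$, is a legitimate transformation of the input. This $Z$ satisfies $I(Y;Z)=I(Y;Y)=H(Y)$, hence lies in the constraint set, and $I(A;Z)=I(A;Y)$. Combined with the lower bound, this shows $\opta=\min_{Z:\,I(Y;Z)=H(Y)}I(A;Z)=I(A;Y)$, attained at $Z=f^*_Y(X)$.

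The argument is short, and there is essentially no optimization to carry out because the minimizer is explicit; the only points deserving care are the passage from $H(Y\mid Z)=0$ to ``$Y$ is a.s.\ a function of $Z$'' and the correct bookkeeping of the Markov chain in the DPI step, both of which are elementary. I would close by remarking that this pins down the extremal point $\opta=(H(Y),I(A;Y))$ on the right boundary of Figure~\ref{fig:mi2}, which together with Theorem~\ref{thm:maxmi} and the convexity of $\iregion$ (Proposition~\ref{lemma:convexity}) starts to delineate the feasible region.
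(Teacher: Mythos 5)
Your proposal is correct, and it reaches the same two-part structure as the paper (a universal lower bound $I(A;Z)\ge I(A;Y)$ plus the explicit achiever $Z=f^*_Y(X)=Y$), but the mechanism you use for the lower bound is genuinely different. The paper never passes through the functional characterization of the constraint; instead it notes that $I(Y;Z)=H(Y)$ forces $H(Z)\ge H(Y)$ and $H(Y\mid Z,A)=0$, and then runs a chain of entropy identities and inequalities starting from $I(A;Z)=H(Z)-H(Z\mid A)$ and bounding $H(Z\mid A)$ via $H(Y,Z\mid A)$. You instead observe that $H(Y\mid Z)=0$ with $Y$ discrete means $Y=\phi(Z)$ almost surely, so $A-Z-\phi(Z)$ is a (trivial) Markov chain and the data processing inequality gives $I(A;Z)\ge I(A;\phi(Z))=I(A;Y)$ in one step. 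Your route is arguably cleaner and more robust: it makes explicit why the constraint buys you anything (recoverability of $Y$ from $Z$), it transparently covers randomized encoders, and it avoids the delicate entropy bookkeeping in the paper's chain, whose intermediate decomposition of $H(Y,Z\mid A)$ is easy to mis-state (one can equivalently get the same bound cleanly from $I(A;Z)\ge I(A;Y,Z)-I(A;Y\mid Z)$ together with $I(A;Y\mid Z)\le H(Y\mid Z)=0$). The paper's approach buys nothing extra here; both arguments are elementary and tight, and the achievability step is identical in both.
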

Clearly, if $A$ and $Y$ are independent, then the gap $I(A;Y) = 0$, meaning that we can simultaneously preserve all the target related information and filter out all the information related to $A$. We can now characterize the locations of the remaining two extremal points on the top and right boundaries of bounding rectangle (Figure~\ref{fig:mi2}). 

\subsection{Application: Bounds on the Classification Loss}
\label{sec:discrete:bounds}

One application of the optimal solutions derived in Theorems~\ref{thm:maxmi} and~\ref{thm:minmi} is to derive bounds on the risk functions:
\begin{restatable}{corollary}{distbounds}
\label{cor:discrete:bounds}
Let $Z=g(X)$ be any representation. If $I(Z; A) = 0$ then \[\inf_h\Exp[\ell(Y, h(Z))] \ge \deltaya\cdot H(A).\] 
If $I(Y;Z) = H(Y)$, then 
\[\inf_h\Exp[\ell(A, h(Z))]\le H(A\mid Y).\]
\end{restatable}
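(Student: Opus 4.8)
The plan is to reduce both inequalities to the information-theoretic identities in Eqn.~\eqref{equ:mi} and then invoke Theorems~\ref{thm:maxmi} and~\ref{thm:minmi} directly. Recall that under the cross-entropy loss, for \emph{any} representation $Z = g(X)$ one has the exact identities $\inf_h \Exp[\ell(Y, h(Z))] = H(Y\mid Z) = H(Y) - I(Y;Z)$ and, symmetrically, $\inf_h \Exp[\ell(A, h(Z))] = H(A\mid Z) = H(A) - I(A;Z)$. Thus the corollary is simply a restatement of upper/lower bounds on $I(Y;Z)$ and $I(A;Z)$ that hold under the stated constraints.

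For the first bound, suppose $Z = g(X)$ satisfies $I(Z;A) = 0$. Then $Z$ is feasible for the maximization program~\eqref{equ:maxmi}, so Theorem~\ref{thm:maxmi} gives $I(Y;Z) \le \opty = H(Y) - \deltaya\cdot H(A)$. Substituting into the identity above yields $\inf_h \Exp[\ell(Y, h(Z))] = H(Y) - I(Y;Z) \ge \deltaya\cdot H(A)$, which is the claimed lower bound. For the second bound, suppose $Z = g(X)$ satisfies $I(Y;Z) = H(Y)$. Then $Z$ is feasible for the minimization program~\eqref{equ:minmi}, so Theorem~\ref{thm:minmi} gives $I(A;Z) \ge \opta = I(A;Y)$. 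Substituting yields $\inf_h \Exp[\ell(A, h(Z))] = H(A) - I(A;Z) \le H(A) - I(A;Y) = H(A\mid Y)$, as required.

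The only point needing care is that the constrained optima in Theorems~\ref{thm:maxmi} and~\ref{thm:minmi} are taken over all representations of the form $Z = g(X)$, so that any particular feasible $Z$ may legitimately be compared against them; this is exactly how those theorems are stated, so no extra work is needed. In short, there is no real obstacle: this is a direct corollary, and the proof is just the chain of substitutions above together with the observation that a representation satisfying the hypothesis of each part is feasible for the corresponding extremal program.
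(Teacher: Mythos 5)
Your proof is correct and follows essentially the same route as the paper: reduce $\inf_h\Exp[\ell(\cdot,h(Z))]$ to the conditional entropy via Eqn.~\eqref{equ:mi}, then apply Theorems~\ref{thm:maxmi} and~\ref{thm:minmi} as bounds on $I(Y;Z)$ and $I(A;Z)$ for the feasible $Z$. If anything, your version is slightly more careful than the paper's, which writes the chain as equalities (implicitly treating $Z$ as attaining the extremal value) where the inequality direction you use is what the corollary actually asserts.
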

In the context of learning fair representations where $A$ corresponds to a protected attribute, the first inequality implies that any fair classifier has to incur a population cross-entropy error of at least $\deltaya\cdot H(A)$, even if there exists a perfect classifier $f_Y^*$ over the input $X$.\footnote{If Assumption~\ref{assm:discrete:perfect} is violated, the lower bound will involve the Bayes error rate as well. In either case, $\deltaya\cdot H(A)$ represents an \emph{additional} cost that is incurred by any learning algorithm.} 
 Furthermore, the term $H(A)$ also takes into account the population ratio between different demographic subgroups, in the sense that the more balanced the subgroups in the population, the larger the error lower bound. At first glance this may seem counter-intuitive, but the key is to observe that if the subpopulation proportions are unbalanced, then a classifier could try to achieve invariance by increasing the classification error on the smaller subpopulation. In the limit where one subpopulation has probability zero, then there is no loss in classification error.

In the context of privacy, if we interpret $A$ as a sensitive attribute, and recall that $H(A\mid Y) = \inf_{h'}\Exp[\ell(A, h'(Y))]$ (see Appendix~\ref{sec:details} for more details on this variational form of the conditional entropy), then the second inequality shows that, based on such representation $Z$, the worst case adversary could steal more sensitive information about $A$ by looking at the feature $Z$ instead of the target variable $Y$.

\subsection{Application: Certifying Sub-optimality}
\label{sec:cls:application}
For the aforementioned applications in Section~\ref{sec:examples}, our characterization of the feasible region is critical in order to be able to \emph{certify} that a given model is \emph{not} optimal. For example, given some practically computed representation $Z$ and by using known estimators of the mutual information, it is possible to estimate $I(A;Y)$, $I(Y;Z)$, $I(A;Z)$ in order to directly bound the (sub)-optimality of $Z$ using Figure~\ref{fig:mi3}. In particular, we could compute how far away the point $(I(Y;Z), I(A; Z))$ is from the line segment between $E_Y^*$ and $E_A^*$. This distance lower bounds the distance to the optimal representations on the Pareto frontier. 

More concretely, the following corollary illustrates one example of applying these ideas. We say that $Z$ is suboptimal if there exists a representation $Z'$ such that $I(Y;Z')>I(Y;Z)$ and $I(A;Z')=I(A;Z)$ (resp. $I(A;Z')<I(A;Z)$ and $I(Y;Z')=I(Y;Z)$), i.e. $Z'$ contains strictly \emph{more} information about $Y$ without leaking additional information about $A$ (resp. $Z'$ contains strictly \emph{less} information about $A$ without losing any information about $Y$). In other words, $Z'$ achieves a better accuracy-invariance tradeoff.
\begin{restatable}{corollary}{suboptdist}
\label{cor:subopt:discrete}
Suppose that the representation $Z$ satisfies
\begin{equation}
\label{eq:cor:subopt}
    \frac{I(A;Z)}{I(A;Y)} + \frac{H(Y\given Z)}{\deltaya H(A)}
    > 1.
\end{equation}
Then $Z$ is suboptimal.
\end{restatable}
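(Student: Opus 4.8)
The plan is to restate condition~\eqref{eq:cor:subopt} as a statement about the position of the point $(I(Y;Z), I(A;Z))$ relative to the segment joining the two extremal points of $\iregion$, and then use convexity to exhibit an explicit representation that dominates $Z$. First I would record, from Theorems~\ref{thm:maxmi} and~\ref{thm:minmi}, the two extreme points $E_Y^* = \bigl(H(Y) - \deltaya H(A),\, 0\bigr)$ and $E_A^* = \bigl(H(Y),\, I(A;Y)\bigr)$ in the information plane with coordinates $(I(Y;Z), I(A;Z))$; both lie in $\iregion$ ($E_A^*$ via $Z = f^*_Y(X) = Y$, and $E_Y^*$ as the optimum of~\eqref{equ:maxmi}, which has a hard constraint, so points $(H(Y)-\deltaya H(A)-\epsilon, 0)$ are attained for every $\epsilon>0$). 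A one-line computation shows that the line $L$ through $E_Y^*$ and $E_A^*$ is exactly the set of $(u,v)$ with $\frac{H(Y)-u}{\deltaya H(A)} + \frac{v}{I(A;Y)} = 1$; since $H(Y)-I(Y;Z) = H(Y\mid Z)$, hypothesis~\eqref{eq:cor:subopt} says precisely that $(I(Y;Z), I(A;Z))$ lies strictly on the side of $L$ away from the ideal corner $(H(Y),0)$. Equivalently, writing $a_L(v):=H(Y)-\deltaya H(A)\bigl(1-v/I(A;Y)\bigr)$ for the accuracy coordinate of the point of $L$ at height $v$, the hypothesis reads $I(Y;Z) < a_L\bigl(I(A;Z)\bigr)$. (I assume throughout the non-degenerate case $\deltaya>0$, equivalently $A\not\perp Y$, so the denominators are positive; otherwise the statement is vacuous.)

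Next I would invoke convexity of $\iregion$ (Proposition~\ref{lemma:convexity}) together with the randomized-interpolation construction following it: since $E_Y^*, E_A^*\in\iregion$, every point of the segment $[E_Y^*, E_A^*]$ is realized by an actual representation, and since $(H(Y),H(A))\in\iregion$ (take $Z=X$, using Assumption~\ref{assm:discrete:perfect}), so is every point of $\{(H(Y),v): I(A;Y)\le v\le H(A)\}$. Now split on $I(A;Z)$ versus $I(A;Y)$. If $0\le I(A;Z)\le I(A;Y)$, then $\bigl(a_L(I(A;Z)),\, I(A;Z)\bigr)$ is a convex combination of $E_Y^*$ and $E_A^*$, hence lies in $\iregion$; it has the same invariance coordinate as $Z$ but strictly larger accuracy coordinate $a_L(I(A;Z))>I(Y;Z)$, which is exactly the definition of $Z$ being suboptimal. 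If instead $I(A;Z)>I(A;Y)$, then $I(A;Z)\le I(A;X)=H(A)$ by the data processing inequality, so $\bigl(H(Y),\, I(A;Z)\bigr)\in\iregion$; when $I(Y;Z)<H(Y)$ this witnesses suboptimality (same invariance coordinate, strictly larger accuracy), and when $I(Y;Z)=H(Y)$ one instead takes $Z'=Y$, which has $I(Y;Z')=H(Y)=I(Y;Z)$ and $I(A;Z')=I(A;Y)<I(A;Z)$ — the other branch of the definition. In the case $E_Y^*$ is only approached rather than attained, replace it by $(H(Y)-\deltaya H(A)-\epsilon, 0)\in\iregion$ and let $\epsilon\to 0$; the strict inequality in~\eqref{eq:cor:subopt} leaves room to spare.

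The main obstacle I anticipate is not a hard estimate but the bookkeeping forced by the \emph{formal} definition of suboptimality, which demands a representation agreeing with $Z$ \emph{exactly} in one of the two coordinates: this rules out simply moving in an arbitrary dominating direction and is why the argument must travel along the axis-aligned-at-the-endpoint segments $[E_Y^*, E_A^*]$ and $[E_A^*, (H(Y),H(A))]$ inside $\iregion$, relying on the fact that their points are genuinely attained via the Proposition~\ref{lemma:convexity} randomization. The only other subtlety is the attainment of $E_Y^*$ and the handling of the degenerate cases ($\deltaya=0$, or $I(A;Z)$ at the boundary value $I(A;Y)$), both of which are routine.
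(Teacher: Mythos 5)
Your proposal is correct and follows essentially the same route as the paper: write down the line through $\opty$ and $\opta$, observe that \eqref{eq:cor:subopt} says $(I(Y;Z),I(A;Z))$ lies strictly on the upper-left side of that line, and conclude suboptimality from convexity of $\iregion$. The paper's proof stops at the geometric observation, whereas you additionally spell out the attainment/convexity bookkeeping (the case split on $I(A;Z)$ versus $I(A;Y)$, the $\epsilon$-approximation of $E_Y^*$, and the degenerate case $\deltaya=0$) that the paper leaves implicit.
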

Geometrically,~\eqref{eq:cor:subopt} states that if $(I(Y;Z), I(A; Z))$ lies in the strict interior of the feasible region in Figure~\ref{fig:mi3}, then the corresponding representation $Z$ is suboptimal. Since each of the quantities in \eqref{eq:cor:subopt} is estimable, this provides a practical certificate of suboptimality for a learned representation $Z$.

In practice, while we do not have access to the population quantities like $I(A, Z)$, we can estimate them using finite samples. The key idea is to use \emph{entropy estimators} in a plug-in fashion. For example, due to the identity $I(A,Z) = H(A)+H(Z) - H(A, Z)$, any consistent entropy estimator $\hat{H}(A), \hat{H}(Z), \hat{H}(A,Z)$ can be turned into a consistent estimator of $ I(A,Z)$. The associated estimators and rates are standard, see the work of \citep{gao2017estimating, ross2014mutual, wu2016minimax}. Besides, $\deltaya$ can also be easily estimated via relative frequencies (i.e. empirical probabilities) for discrete data. Therefore, with any existing entropy estimator, one can construct an estimator of the quantity in Eq.~\eqref{eq:cor:subopt} with finite sample guarantee. For concreteness, here is an example where one can use \citep{wu2016minimax} when $Z$ is discrete with alphabet size at most $k$:
\begin{restatable}{proposition}{certify}
\label{prop:finite_sample}
When $Z$ is discrete with alphabet size at most $k$, with the plug-in estimator~\citep{wu2016minimax} for mutual information applied to the left-hand side of \eqref{eq:cor:subopt} on a sample of size $n$ from the joint distribution, if
\begin{equation}
    \frac{\hat{I}(A;Z)}{\hat{I}(A;Y)} + \frac{\hat{H}(Y\given Z)}{\hat{\Delta}_{Y\mid A} \hat{H}(A)}
    > 1 + \varepsilon,
\end{equation}
then the representation $Z$ is certifiably suboptimal with probability at least $1 - \exp(-\Omega(n\varepsilon^2))$. 
\end{restatable}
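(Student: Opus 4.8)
The plan is to reduce this finite-sample claim to the population-level certificate of Corollary~\ref{cor:subopt:discrete} via a concentration-plus-sensitivity argument. Write $\Phi \defeq \frac{I(A;Z)}{I(A;Y)} + \frac{H(Y\mid Z)}{\deltaya\, H(A)}$ for the population functional in~\eqref{eq:cor:subopt} and $\hat\Phi$ for its plug-in analogue formed from $\hat I(A;Z), \hat I(A;Y), \hat H(Y\mid Z), \hat\Delta_{Y\mid A}, \hat H(A)$. By the contrapositive of Corollary~\ref{cor:subopt:discrete} it suffices to show that $|\hat\Phi - \Phi| < \varepsilon$ on an event of probability at least $1 - \exp(-\Omega(n\varepsilon^2))$: on that event, $\hat\Phi > 1 + \varepsilon$ forces $\Phi > 1$, hence $Z$ is suboptimal, so the empirical test can fire on an ``optimal'' $Z$ only on the complementary event. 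I work under the non-degeneracy already implicit in Corollary~\ref{cor:subopt:discrete} — $A \not\perp Y$ (so $I(A;Y) > 0$ and $\deltaya > 0$) and $A$ non-constant (so $H(A) > 0$) — which furnishes a distribution-dependent constant $c_0 > 0$ lower-bounding all three denominators of $\Phi$.

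\textbf{Step 1 (concentration of the building blocks).} The functional $\Phi$ is an explicit algebraic function of $\deltaya$ and of the marginal and pairwise-joint entropies of the coordinates of $(A,Y,Z)$, through $I(A;Z) = H(A) + H(Z) - H(A,Z)$, $I(A;Y) = H(A)+H(Y)-H(A,Y)$, and $H(Y\mid Z) = H(Y,Z) - H(Z)$. Each of the $O(1)$ entropies involved lives on an alphabet of size at most $4k$; the minimax entropy estimator~\citep{wu2016minimax} estimates each with bias $o(1)$ — below $\varepsilon/4$ once $n \gtrsim \mathrm{poly}(k, 1/\varepsilon)$, which I absorb into the hypothesis — and, being (up to $\mathrm{polylog}(n,k)$ factors) a bounded-difference function of the i.i.d.\ sample, concentrates with a sub-Gaussian tail, $\Pr(|\hat H - H| > t) \le \exp(-\Omega(nt^2))$, the logarithmic factors being absorbed into $\Omega(\cdot)$ (equivalently, $k$ treated as fixed). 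The plug-in estimate $\hat\Delta_{Y\mid A}$, a difference of two conditional Bernoulli means, concentrates at the same rate by Bernstein's inequality applied conditionally on the (themselves concentrated) counts of $\{A=0\}$ and $\{A=1\}$. A union bound over the $O(1)$ quantities yields, for any $\delta \in (0, c_0/2)$, an event $\mathcal E_\delta$ of probability at least $1 - \exp(-\Omega(n\delta^2))$ on which every building block is within $\delta$ of its population value.

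\textbf{Step 2 (sensitivity and conclusion).} On $\mathcal E_\delta$ every denominator of $\hat\Phi$ exceeds $c_0/2$, so $\hat\Phi$ is a smooth function of the estimates on the $\delta$-ball around the population point, with Lipschitz constant bounded by some $L_0 = L_0(c_0,\log k)$ depending only on $\dist$; hence $|\hat\Phi - \Phi| \le C' L_0\, \delta$ for an absolute constant $C'$. Taking $\delta \defeq \varepsilon/(2 C' L_0)$ (and restricting to $\varepsilon$ small enough that $\delta < c_0/2$) gives $|\hat\Phi - \Phi| < \varepsilon$ on $\mathcal E_\delta$, while $\Pr(\mathcal E_\delta) \ge 1 - \exp(-\Omega(n\delta^2)) = 1 - \exp(-\Omega(n\varepsilon^2))$ since $\delta \asymp \varepsilon$; combining with Corollary~\ref{cor:subopt:discrete} finishes the argument.

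\textbf{Main obstacle.} The crux is the bias control in Step 1 for the entropies involving $Z$: the naive empirical plug-in has bias $\Theta(k/n)$, which would dominate the $\varepsilon$ margin whenever $k \gtrsim n\varepsilon$, so the clean $\exp(-\Omega(n\varepsilon^2))$ guarantee genuinely requires the bias-corrected minimax estimator of~\citep{wu2016minimax} — this is precisely why that reference is invoked — together with the standard but non-trivial fact that such polynomial-approximation estimators retain sub-Gaussian concentration. A secondary subtlety is that $L_0$ blows up as $\dist$ approaches the degenerate boundary $A\perp Y$, so the statement should be read for fixed $\dist$ and $n$ beyond a $\dist$- and $\varepsilon$-dependent threshold, with the dependence on $\dist$ and $k$ hidden in $\Omega(\cdot)$.
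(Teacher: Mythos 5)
Your proposal follows essentially the same route as the paper's (sketch of a) proof: reduce to estimating the population quantity in Corollary~\ref{cor:subopt:discrete}, express $I(A;Z)$, $I(A;Y)$, $H(Y\mid Z)$ through the entropy identities, estimate each entropy with the estimator of \citet{wu2016minimax} and $\deltaya$ by counting, and use the fact that the denominators are bounded away from zero to pass to the ratio. Your write-up is in fact more detailed than the paper's --- the explicit contrapositive reduction, the union bound, and the Lipschitz sensitivity step in your Step~2 are all left implicit there --- and your remarks on the bias of the plug-in entropy estimator and on the degeneration of the Lipschitz constant near $A\perp Y$ correctly identify the points the paper glosses over.
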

Furthermore, if $\deltaya, H(Y), I(A; Y)$ are all lower bounded by some positive constant $c$, the estimator converges at a rate of $1/\sqrt{n}$.
Finite sample versions of other results in the paper follow by similar arguments, as well.

\subsection{Application: Implications for Learning Invariant Representations in Domain Generalization}
One potential application of particular interest of Theorem~\ref{thm:maxmi} is domain generalization. Domain generalization refers to the problem where we aim to train a model on data from a set of source domains so that the learned model can generalize to unseen target
domains. Under this context, the protected attribute $A$ could be understood to be the index of the training domains, where it further specifies from which domain/group the labeled data comes from.

A line of recent works have proposed to learn domain-invariant representations~\citep{albuquerque2019generalizing,deng2020representation,nguyen2021domain,dong2022algorithms} for domain generalization, where intuitively the goal is to learn representations $Z$ from which one cannot infer the domain index $A$. Clearly, in this case if the domain index $A$ also contains discriminative information about the target attribute $Y$, there is a potential loss of accuracy by using domain-invariant representations for domain generalization. More precisely, applying Theorem~\ref{thm:maxmi} to this context, we have that for domain-invariant representations $Z$ $(I(A;Z) = 0)$, the weighted error of any classifier $h$ over $Z$ has the following error lower bound:
\begin{equation}
\label{eq:dg}
    \inf_h\Exp[\ell(Y, h(Z))] = \inf_h~\sum_{a}\Pr(A = a)\cdot\Exp[\ell(Y, h(Z))\mid A = a]\geq \deltaya\cdot H(A).
\end{equation}
Note that although we present our theory for binary classification, the implications here we obtained for domain generalization also hold more generally for multi-class classification problems with more than two source domains. This kind of impossibility result for domain generalization is similar in nature to the one in domain adaptation~\citep[Theorem 4.3]{zhao2019learning}, where the main difference lies in that \citet[Theorem 4.3]{zhao2019learning} considers a sum of balanced errors of the source and target domains whereas the one in~\eqref{eq:dg} is concerned with weighted (given by the size of each source domain) errors among the source domains.


\section{Regression}
\label{sec:continuous}
We now present analogous results in the regression setting, where entropy is replaced by conditional variance. To simplify the presentation, we assume a similar noiseless setting:
\begin{assumption}
\label{assm:continuous:perfect}
	There exist functions $ f_Y^*, f_A^* \in \mathbb{H}$, such that  $Y = f_Y^*(X)$ and $A = f_A^*(X) $, where $\mathbb{H}$ is a given reproducing kernel Hilbert space (RKHS).
\end{assumption}
This assumption is not necessary; the results presented here are special cases of more general results for the noisy setting presented in Appendix~\ref{app:regression}. 

Let $\langle\cdot,\cdot\rangle$ be the canonical inner product in RKHS $\mathbb{H}$. Under this assumption, there exists a feature map $\phi(X)$ and $a\neq 0, y\neq 0$, such that $Y = f_Y^*(X) = \langle \phi(X), y\rangle$ and $A = f_A^*(X) = \langle \phi(X), a \rangle$. This feature map does not have to be finite-dimensional; our analysis applies to infinite-dimensional feature maps as well. 
\begin{figure*}[tb]
    \centering
    \begin{subfigure}[b]{0.46\linewidth}
        \includegraphics[width=\linewidth]{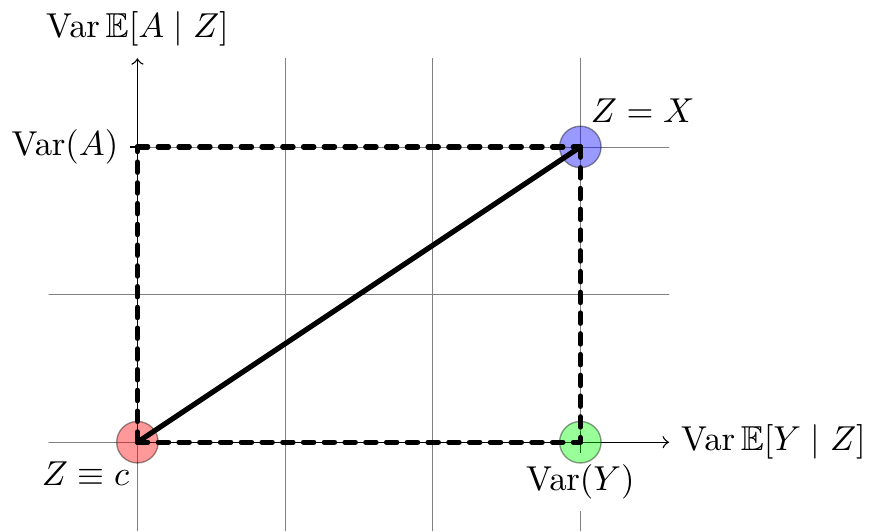}
        \caption{\small Rectangle bounding box.}
        \label{fig:var0}
    \end{subfigure}
    ~
    \begin{subfigure}[b]{0.48\linewidth}
        \includegraphics[width=\linewidth]{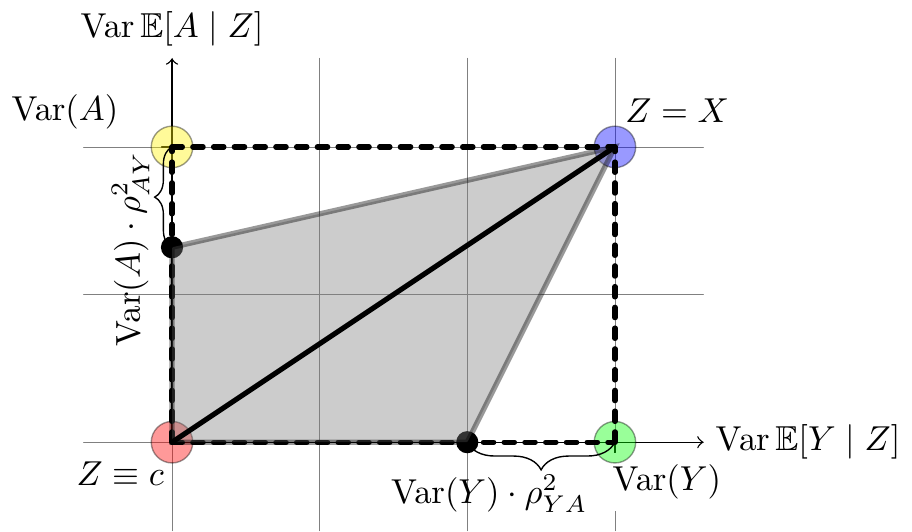}
        \caption{\small Maximal Variance.}
        \label{fig:var1}
    \end{subfigure}
    ~
    \begin{subfigure}[b]{0.46\linewidth}
        \includegraphics[width=\linewidth]{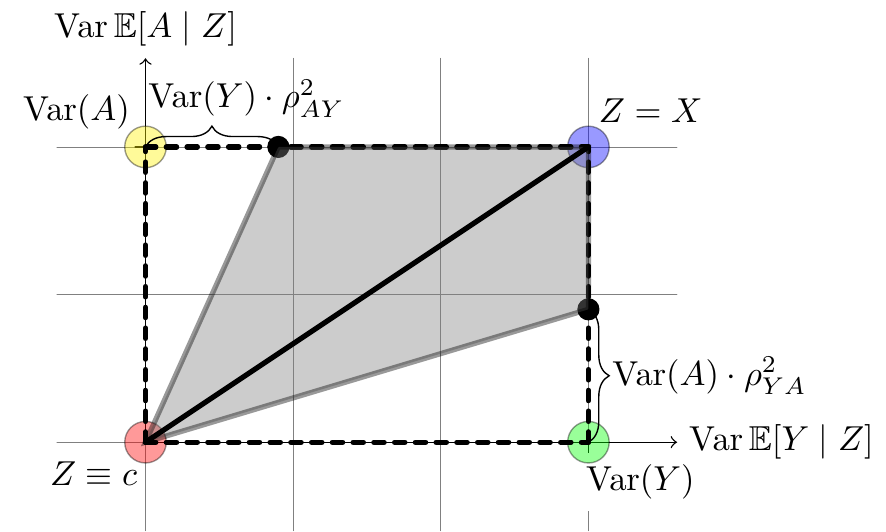}
        \caption{\small Minimum Variance.}
        \label{fig:var2}
    \end{subfigure}
    ~
    \begin{subfigure}[b]{0.48\linewidth}
        \includegraphics[width=\linewidth]{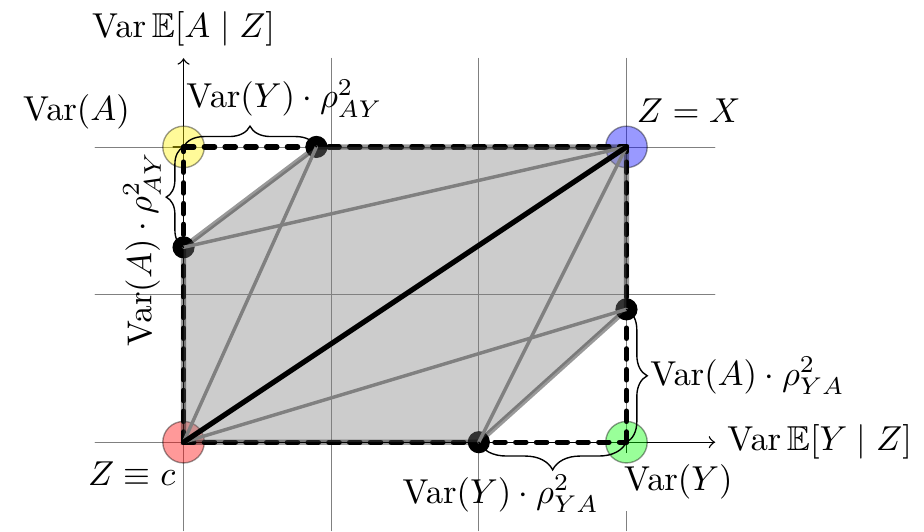}
        \caption{\small The convex polygon characterization of $\cregion$.}
        \label{fig:var3}
    \end{subfigure}
    \caption{Information plane in regression. Shaded area corresponds to the known feasible region.}
\end{figure*}

Similar to our analysis in the classification setting, by the law of total variance, the following inequalities hold:
\begin{align*}
    0 & \leq \Var\Exp[Y\mid Z] \leq \Var\Exp[Y\mid X] = \Var(Y),\\
    0 & \leq \Var\Exp[A\mid Z]\leq \Var\Exp[A\mid X] = \Var(A),
\end{align*}
which means that for any transformation $Z = g(X)$, the point $(\Var\Exp[Y\mid Z], \Var\Exp[A\mid Z])$ must lie within a rectangle shown in Figure~\ref{fig:var0}. To simplify the notation, we define $\Sigma\defeq\Cov(\phi(X), \phi(X))$ to be the covariance operator.

If we consider all the possible feature transformations $Z = g(X)$, then the points $(\Var\Exp[Y\mid Z], \Var\Exp[A\mid Z])$ will form a feasible region $\cregion$. The following lemma shows that the feasible region $\cregion$ is convex:
\begin{restatable}{lemma}{rconvex}
$\cregion$ is convex.
\label{lemma:rconvexity}
\end{restatable}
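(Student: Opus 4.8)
\textbf{Proof proposal for Lemma~\ref{lemma:rconvexity} (convexity of $\cregion$).}

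The plan is to mimic the randomization argument used to establish convexity of $\iregion$ in Proposition~\ref{lemma:convexity}, but now tracking the quantities $\Var\Exp[Y\mid Z]$ and $\Var\Exp[A\mid Z]$ rather than mutual informations. Fix two feature transformations $Z_0 = g_0(X)$ and $Z_1 = g_1(X)$, with corresponding points $p_0 = (\Var\Exp[Y\mid Z_0], \Var\Exp[A\mid Z_0])$ and $p_1 = (\Var\Exp[Y\mid Z_1], \Var\Exp[A\mid Z_1])$ in $\cregion$. For $u \in [0,1]$ introduce $S \sim \text{Bernoulli}(u)$ independent of everything else, and form the mixed representation $Z = (S, Z_S)$, i.e.\ $Z$ records both the coin flip $S$ and the output of $g_S(X)$. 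The goal is to show $(\Var\Exp[Y\mid Z], \Var\Exp[A\mid Z])$ equals $u\,p_0 + (1-u)\,p_1$, which exhibits the segment between $p_0$ and $p_1$ inside $\cregion$; since the two endpoints and $u$ are arbitrary, $\cregion$ is convex. One should note that $Z = (S, Z_S)$ is itself a (randomized) function of $X$, so it is an admissible representation.

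The key computation is the decomposition of $\Var\Exp[Y\mid Z]$. First, since $S$ is a component of $Z$, conditioning on $Z$ refines conditioning on $S$, so by the law of total variance applied to the random variable $\Exp[Y\mid Z]$ with the $\sigma$-algebra generated by $S$:
\begin{equation*}
    \Var\Exp[Y\mid Z] = \Exp\big[\Var(\Exp[Y\mid Z]\mid S)\big] + \Var\big(\Exp[\Exp[Y\mid Z]\mid S]\big).
\end{equation*}
For the first term, conditioned on $\{S = s\}$ the representation $Z$ carries exactly the information in $Z_s = g_s(X)$, so $\Exp[Y\mid Z, S=s] = \Exp[Y\mid Z_s]$ (here I use that $S$ is independent of $X$, hence of $(X,Y)$), and therefore $\Var(\Exp[Y\mid Z]\mid S = s) = \Var\Exp[Y\mid Z_s]$; taking expectation over $S$ gives $u\,\Var\Exp[Y\mid Z_0] + (1-u)\,\Var\Exp[Y\mid Z_1]$. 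For the second term, $\Exp[\Exp[Y\mid Z]\mid S = s] = \Exp[\Exp[Y\mid Z_s]] = \Exp[Y]$ by the tower property, which does not depend on $s$, so this term vanishes. Hence $\Var\Exp[Y\mid Z] = u\,\Var\Exp[Y\mid Z_0] + (1-u)\,\Var\Exp[Y\mid Z_1]$, and the identical argument with $A$ in place of $Y$ gives the second coordinate. This yields $(\Var\Exp[Y\mid Z], \Var\Exp[A\mid Z]) = u\,p_0 + (1-u)\,p_1 \in \cregion$.

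The main obstacle — really the only subtlety — is justifying the conditional identity $\Exp[Y\mid Z, S = s] = \Exp[Y\mid Z_s]$ and the analogous variance statement cleanly, i.e.\ making precise that "$Z$ restricted to $\{S=s\}$ is informationally equivalent to $Z_s$" and that the independence of $S$ from $X$ is what makes the cross term collapse; once this is set up carefully, the rest is a mechanical application of the law of total variance. A minor point worth a sentence is that encoding $Z = (S, Z_S)$ rather than $Z = S Z_0 + (1-S) Z_1$ avoids any accidental collision between the ranges of $g_0$ and $g_1$, so that no information is lost in the mixture; alternatively one can assume disjoint ranges as the classification argument implicitly does.
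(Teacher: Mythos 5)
Your proposal is correct and follows essentially the same route as the paper's proof: the paper also forms the randomized representation $Z=(Z',S)$ with an independent switch $S$, applies the law of total variance to $K=\Exp[Y\mid Z]$ conditioned on $S$, shows the between-groups term $\Var\Exp[K\mid S]$ vanishes via the tower property and $Y\perp S$, and identifies the within-groups term as the desired convex combination. The only differences are cosmetic (you use a Bernoulli switch where the paper thresholds a uniform variable, and you make explicit the point about recording $S$ in the representation, which the paper also does by setting $Z=(Z',S)$).
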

Again, the convexity of $\cregion$ is guaranteed by a construction of randomized feature transformation. One direct implication of the convexity is that we could interpolate between the performance tuples of existing representation learning methods in the regression setting by allowing randomized representations, as we discussed in the classification setting. Also, both the ``informationless'' origin and the ``full information'' diagonal vertex are attainable. Hence, all the points on the diagonal of the bounding rectangle are also attainable. 

\subsection{$\opty$: Maximal Variance under the Invariance Constraint}
\label{sec:maxvar}
In this section we explore the first extreme point $\opty$ (Figure~\ref{fig:feasible_region}) in the regression setting, which corresponds to maximizing the variance of $Z$ w.r.t.\ $Y$ while simultaneously minimizing that of $A$:
\begin{equation}
\opty \defeq \underset{Z \,:\, \Var\Exp[A\mid Z] = 0}{\text{maximize}}~\Var\Exp[Y\mid Z]~.
\label{equ:maxvar}
\end{equation}
It is clear that the optimal solution of~\eqref{equ:maxvar} depends on the coupling between $A$ and $Y$, and the following theorem precisely characterizes this relationship:
\begin{restatable}{theorem}{maxvar}
The optimal solution of optimization problem~\eqref{equ:maxvar} is upper bounded by 
\begin{align}
    & \underset{Z, \Var\Exp[A\mid Z] = 0}{\max} \Var\Exp[Y\mid Z] \leq \Var(Y) - \frac{\Cov(A, Y)^2 }{\Var(A)} = \Var(Y) (1- \rho_{YA}^2),
    \nonumber
\end{align}
where $\rho_{YA}$ is the correlation coefficient of $Y$ and $A$.
\label{thm:maxvar}
\end{restatable}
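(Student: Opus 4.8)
The plan is to reduce the constrained optimization in \eqref{equ:maxvar} to a one-line $L^2$-geometry argument about the three random variables $Y$, $A$, and $W \defeq \Exp[Y\mid Z]$. First I would unpack the invariance constraint: $\Var\Exp[A\mid Z]=0$ is equivalent to $\Exp[A\mid Z]=\Exp[A]$ almost surely. From this I would extract the single orthogonality fact that drives everything, namely $\Cov(\Exp[Y\mid Z],A)=0$. To see it, decompose $A=\Exp[A\mid Z]+(A-\Exp[A\mid Z])$: the first summand is a.s.\ constant, and the second is conditionally mean-zero given $Z$, hence uncorrelated with any $\sigma(Z)$-measurable variable, in particular with $W=\Exp[Y\mid Z]$; the tower property makes both cross terms vanish.

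Next I would record the two routine identities for the regression function $W$: since $Y-W$ is conditionally mean-zero given $Z$, we have $\Cov(W,Y)=\Var(W)=\Var\Exp[Y\mid Z]$, and also $\Exp[W]=\Exp[Y]$. These are the only facts about $W$ I will use, so the argument is oblivious to how $Z=g(X)$ was produced.

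The crux is an orthogonal decomposition of $Y$ along $A$. Writing $\tilde Y=Y-\Exp[Y]$, $\tilde A=A-\Exp[A]$, $\beta=\Cov(A,Y)/\Var(A)$, and $R=\tilde Y-\beta\tilde A$, one has $\Cov(R,A)=0$ and $\Var(\tilde Y)=\beta^2\Var(A)+\Var(R)$, so $\Var(R)=\Var(Y)-\Cov(A,Y)^2/\Var(A)$, which is precisely the claimed bound. Combining the orthogonality fact with this decomposition gives $\Var(W)=\Cov(W,Y)=\Cov(W,\tilde Y)=\Cov(W,R)+\beta\Cov(W,\tilde A)=\Cov(W,R)$, and then Cauchy--Schwarz yields $\Var(W)=\Cov(W,R)\le\sqrt{\Var(W)}\sqrt{\Var(R)}$, hence $\Var(W)\le\Var(R)$. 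Taking the supremum over admissible $Z$ gives the stated inequality, and the final equality $\Var(Y)-\Cov(A,Y)^2/\Var(A)=\Var(Y)(1-\rho_{YA}^2)$ is immediate from the definition of $\rho_{YA}$.

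The only nontrivial point — the ``main obstacle'' — is obtaining $\Cov(\Exp[Y\mid Z],A)=0$ from the weak constraint $\Var\Exp[A\mid Z]=0$ rather than from the (generally false) stronger statement $Z\perp A$; once that is in hand the rest is just $L^2$ projection and Cauchy--Schwarz. Note this argument needs only that $Y$ and $A$ have finite second moments and does not invoke the RKHS structure of Assumption~\ref{assm:continuous:perfect}; that structure is instead what will be needed to establish the matching achievability/exactness direction.
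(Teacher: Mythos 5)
Your proof is correct, and it takes a genuinely different route from the paper's. The paper proves Theorem~\ref{thm:maxvar} (in its generalized noisy form, Theorem~\ref{thm:maxvar_noisy}) by writing $\Var\Exp[Y\mid Z]=\langle y, Vy\rangle$ with $V\defeq\Var\Exp[\phi(X)\mid Z]$, relaxing to the SDP $0\preceq V\preceq\Sigma$ with $\langle a,Va\rangle=0$, and optimizing after an orthogonal decomposition of $\Sigma^{1/2}y$ against $\Sigma^{1/2}a$; it also gives a second, RKHS-free proof via Lemma~\ref{lem:generalized_lower_bound} (a two-sided bound on $\Var\Exp[f(X)\mid Z]-\lambda\Var\Exp[g(X)\mid Z]$ established through a quadratic-form/total-variance argument), followed by the limit $\lambda\to\infty$ in Corollary~\ref{cor:first}. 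You instead exploit the hard constraint directly: $\Var\Exp[A\mid Z]=0$ forces $\Exp[A\mid Z]\equiv\Exp[A]$, hence $\Cov(\Exp[Y\mid Z],A)=0$ by the tower property; combining this with the identity $\Var(W)=\Cov(W,Y)$ for $W=\Exp[Y\mid Z]$, the residual decomposition $\tilde Y=\beta\tilde A+R$, and Cauchy--Schwarz gives $\Var(W)\le\Var(R)=\Var(Y)(1-\rho_{YA}^2)$. All steps check out (the division in the Cauchy--Schwarz step is harmless since the case $\Var(W)=0$ is trivial). Your argument is shorter, avoids both the SDP relaxation and the feature-map structure, and even dispenses with Assumption~\ref{assm:continuous:perfect} --- though in the noisy case it yields a bound in terms of $\Var(Y)$ and $\Corr(Y,A)$, whereas the paper's Theorem~\ref{thm:maxvar_noisy} is phrased via the regression functions $\Exp[Y\mid X]$ and $\Exp[A\mid X]$ (starting from the smaller quantity $\Var\Exp[Y\mid X]$); the two generalizations are incomparable in general and coincide under the noiseless assumption. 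What the paper's heavier machinery buys is reuse: the same SDP relaxation drives Theorems~\ref{thm:minvar} and~\ref{thm:lower_bound} and, crucially, the achievability analysis of Section~\ref{sec:achievability}, which constructs the optimal $V^*$ explicitly --- your argument, as you yourself note, establishes only the upper bound and not its tightness.
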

Let us do a sanity check of this result: If $A$ and $Y$ are uncorrelated, then $\rho_{YA} = 0$ and hence the gap is 0, and the optimal solution becomes $\Var(Y)$. Next, consider the other extreme case where $A$ is perfectly correlated with $A$: In this case $\rho_{YA} = \pm 1$ so the optimal solution reduces to 0. For readers who are familiar with regression analysis, it can be readily observed that the gap, i.e., $\rho_{YA}^2\Var(Y)$, due to the constraint $\Var\Exp[A\mid Z] = 0$ essentially corresponds to the explained variance of $Y$ by performing an optimal linear regression from $A$, since $\rho^2_{YA}$ equals the R-square of linearly regressing from $A$ to $Y$.

With Lemma~\ref{lemma:rconvexity} and Theorem~\ref{thm:maxvar}, we can now characterize the locations of the two extremal points on the bottom and left boundaries of bounding rectangle in Figure~\ref{fig:var1}. 

\subsection{$\opta$: Minimal Variance under the Sufficient Statistics Constraint}
\label{sec:minmi}
Next, we characterize $\opta$ in the right panel of Figure~\ref{fig:feasible_region}. As before, it suffices to solve the following optimization problem, whose optimal solution is $\opta$.
\begin{equation}
\opta \defeq \underset{Z \,:\, \Var\Exp[Y\mid Z] = \Var(Y)}{\text{minimize}}~\Var\Exp[A\mid Z]~.
\label{equ:minvar}
\end{equation}
\begin{restatable}{theorem}{minvar}
The optimal solution of optimization problem~\eqref{equ:minvar} is lower bounded by 
\begin{equation}
    \underset{Z, \Var\Exp[Y\mid Z] = \Var(Y)}{\min}~\Var\Exp[A\mid Z] \geq \frac{ \langle a, \Sigma y\rangle^2}{\langle y, \Sigma y\rangle} = \Var(A) \cdot \rho_{YA}^2,
\end{equation}
\label{thm:minvar}
where $\rho_{YA}$ is the correlation coefficient of $Y$ and $A$.
\end{restatable}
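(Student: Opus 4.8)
The plan is to show that the hard accuracy constraint forces $Z$ to carry at least the information in $Y$, and then to bound the resulting $Z$-free quantity by a linear-regression argument. First, by the law of total variance $\Var(Y) = \Var\Exp[Y\mid Z] + \Exp\Var(Y\mid Z)$, so the constraint $\Var\Exp[Y\mid Z] = \Var(Y)$ is equivalent to $\Exp\Var(Y\mid Z) = 0$, i.e.\ $\Var(Y\mid Z) = 0$ almost surely. Hence $Y = \Exp[Y\mid Z]$ a.s., so $Y$ agrees a.s.\ with a $\sigma(Z)$-measurable random variable and, modulo null sets, $\sigma(Y)\subseteq\sigma(Z)$. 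Consequently, by the tower property, $\Exp[\Exp[A\mid Z]\mid Y] = \Exp[A\mid Y]$.

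Second, apply the law of total variance a second time, now to the random variable $W \defeq \Exp[A\mid Z]$ conditioned on $Y$: we get $\Var(W) = \Var\Exp[W\mid Y] + \Exp\Var(W\mid Y) \ge \Var\Exp[W\mid Y] = \Var\Exp[A\mid Y]$, where the last equality uses the previous step. Thus every feasible $Z$ satisfies $\Var\Exp[A\mid Z] \ge \Var\Exp[A\mid Y]$, and the $Z$-dependence has disappeared. It then remains to lower bound $\Var\Exp[A\mid Y]$. Since $\Exp[A\mid Y]$ is the $L^2$-projection of $A$ onto all measurable functions of $Y$, and the affine functions of $Y$ form a subspace of these, the residual can only increase under the affine restriction, giving the standard ``explained variance'' inequality $\Var\Exp[A\mid Y] \ge \Cov(A,Y)^2/\Var(Y)$. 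Finally, Assumption~\ref{assm:continuous:perfect} gives $A = \langle\phi(X),a\rangle$ and $Y = \langle\phi(X),y\rangle$, whence $\Cov(A,Y) = \langle a,\Sigma y\rangle$ and $\Var(Y) = \langle y,\Sigma y\rangle$; substituting yields $\langle a,\Sigma y\rangle^2/\langle y,\Sigma y\rangle$, and rewriting $\Cov(A,Y)^2/\Var(Y)$ as $\big(\Cov(A,Y)^2/(\Var(A)\Var(Y))\big)\cdot\Var(A) = \rho_{YA}^2\Var(A)$ gives the second form.

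I expect the only genuinely delicate point to be the first step: arguing rigorously that $\Var\Exp[Y\mid Z] = \Var(Y)$ makes $Y$ (a.s.\ equal to) a deterministic function of $Z$, hence $\sigma(Y)\subseteq\sigma(Z)$ up to null sets, in the possibly infinite-dimensional RKHS setting, and then checking that the two invocations of the law of total variance and the tower property are all valid modulo null sets. Everything after that is routine. Note also that the theorem only asserts a lower bound, so no minimizer need be exhibited; when the bound is tight, achievability would follow from the convexity of $\cregion$ (Lemma~\ref{lemma:rconvexity}) together with Theorem~\ref{thm:maxvar}, mirroring the classification case.
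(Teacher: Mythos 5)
Your proof is correct, but it takes a genuinely different route from the paper's. The paper relaxes the problem to a semidefinite program over $V \defeq \Var\Exp[\phi(X)\mid Z]$ subject to $0 \preceq V \preceq \Sigma$, whitens via $V' = \Sigma^{-1/2}V\Sigma^{-1/2}$, observes that the constraint $\langle y',V'y'\rangle = \langle y',y'\rangle$ forces $(I-V')^{1/2}y' = 0$, and then lower bounds $\langle a',V'a'\rangle$ by decomposing $a'$ orthogonally with respect to $y'$; it also exhibits the extremal $V'^{*} = y_0'\otimes y_0'$ attaining equality in the relaxation. Your argument instead stays entirely probabilistic: the constraint $\Var\Exp[Y\mid Z]=\Var(Y)$ forces $\Exp\Var(Y\mid Z)=0$, hence $Y=\Exp[Y\mid Z]$ a.s.\ and $\sigma(Y)\subseteq\sigma(Z)$ modulo null sets; the tower property and a second application of total variance then give the clean intermediate inequality $\Var\Exp[A\mid Z]\ge\Var\Exp[A\mid Y]$, and the final step is the classical fact that the conditional expectation outperforms the best affine predictor, $\Var\Exp[A\mid Y]\ge\Cov(A,Y)^2/\Var(Y)$. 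All of these steps are valid (including for randomized $Z$, since total variance and the tower property hold for arbitrary random variables, and the measurability claim only needs to hold up to null sets), and you are right that achievability is not required for the stated lower bound. What each approach buys: yours is more elementary, bypasses the feature map except for the final translation into $\langle a,\Sigma y\rangle^2/\langle y,\Sigma y\rangle$, and proves the strictly stronger data-processing-style statement $\Var\Exp[A\mid Z]\ge\Var\Exp[A\mid Y]$; the paper's SDP machinery, by contrast, is reused verbatim for Theorems~\ref{thm:maxvar} and~\ref{thm:lower_bound} and identifies the extremal $V^{*}$ needed for the achievability discussion in Section~\ref{sec:achievability}. (The paper's appendix also gives a second, RKHS-free proof via Lemma~\ref{lem:generalized_lower_bound} and Corollary~\ref{cor:second}, which is closer in spirit to yours but proceeds through a quadratic-form inequality rather than through the $\sigma(Y)\subseteq\sigma(Z)$ observation.)
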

Again, if $A$ is uncorrelated with $Y$, then the lower bound becomes $0$, meaning that we can simultaneously preserve all the target variance and filter out all the variance related to $A$. This is the no tradeoff regime. On the other hand, if $A$ is perfectly correlated with $Y$, then $\rho_{YA} = \pm 1$, hence the lower bound becomes exactly $\Var(A)$. This extreme case shows that if we would like to preserve all the variation w.r.t.\ the target $Y$, we also have to retain all the information about $A$ as well. This is expected since, again, the lower bound corresponds to the explained variance of $A$ from $Y$ due to the constraint $\Var\Exp[Y\mid Z] = \Var(Y)$.

By symmetry, the updated plot is shown in Figure~\ref{fig:var2}. We plot the full picture of $\cregion$ in Figure~\ref{fig:var3}. Both the constrained accuracy optimal solution and the constrained invariance optimal solution can be readily read from Figure~\ref{fig:var3} as well. Finally, as in the discrete setting, the characterization of the feasible region and the extremal points $\opty$ and $\opta$ have some important consequences, e.g., certifying the suboptimality of a given representation. Here, we note the analogues to Corollary~\ref{cor:discrete:bounds} for the regression setting:
\begin{restatable}{corollary}{contbounds}
\label{cor:continuous:bounds}
Let $Z=g(X)$ be any representation. If $\Var\Exp[A\mid Z] = 0$ then 
\begin{equation}
    \inf_h\Exp[\ell(Y, h(Z))] \geq \Var(Y) \cdot \rho_{YA}^2.
\label{temp:important3}
\end{equation}
Furthermore, if $\Var\Exp[Y\mid Z] = \Var{Y}$, then 
\begin{equation}
   \inf_h\Exp[\ell(A, h(Z))]  \leq \Var(A) (1 - \rho_{YA}^2).
\label{temp:important4}
\end{equation}
\end{restatable}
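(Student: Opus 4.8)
The plan is to obtain both inequalities directly from Theorems~\ref{thm:maxvar} and~\ref{thm:minvar}, combined with the law of total variance and the identification of the optimal regression risk with a conditional variance. The starting observation is that under the squared loss, for any representation $Z = g(X)$ we have $\inf_h\Exp[\ell(Y,h(Z))] = \risk_Y(g) = \Exp\Var(Y\mid Z)$ as in~\eqref{equ:var}, and by the law of total variance $\Exp\Var(Y\mid Z) = \Var(Y) - \Var\Exp[Y\mid Z]$; the same identities hold verbatim with $Y$ replaced by $A$. So each of the two claims reduces to bounding the ``accuracy coordinate'' $\Var\Exp[Y\mid Z]$ or the ``invariance coordinate'' $\Var\Exp[A\mid Z]$, which is exactly what the two theorems provide.

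For~\eqref{temp:important3}, I would fix a representation $Z=g(X)$ with $\Var\Exp[A\mid Z]=0$. Then $Z$ is feasible for the program~\eqref{equ:maxvar}, so Theorem~\ref{thm:maxvar} applies and gives $\Var\Exp[Y\mid Z] \le \Var(Y)(1-\rho_{YA}^2)$. Substituting into the total-variance identity yields
\[
\inf_h\Exp[\ell(Y,h(Z))] = \Var(Y) - \Var\Exp[Y\mid Z] \ge \Var(Y) - \Var(Y)(1-\rho_{YA}^2) = \Var(Y)\,\rho_{YA}^2,
\]
which is the first claim.

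For~\eqref{temp:important4}, I would instead fix $Z=g(X)$ with $\Var\Exp[Y\mid Z]=\Var(Y)$, so that $Z$ is feasible for~\eqref{equ:minvar}; Theorem~\ref{thm:minvar} then gives $\Var\Exp[A\mid Z] \ge \Var(A)\,\rho_{YA}^2$. Plugging this into the total-variance identity for $A$ gives
\[
\inf_h\Exp[\ell(A,h(Z))] = \Var(A) - \Var\Exp[A\mid Z] \le \Var(A) - \Var(A)\,\rho_{YA}^2 = \Var(A)(1-\rho_{YA}^2),
\]
establishing the second claim.

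I do not expect any real obstacle here: all the work is carried by Theorems~\ref{thm:maxvar} and~\ref{thm:minvar}, and the corollary is essentially a restatement of those bounds in terms of the risk functionals $\risk_Y(g), \risk_A(g)$ rather than the information-plane coordinates $\Var\Exp[Y\mid Z], \Var\Exp[A\mid Z]$. The only points meriting a line of care are (i) invoking Assumption~\ref{assm:continuous:perfect} (or its noisy generalization in Appendix~\ref{app:regression}) to justify the identity $\risk_Y(g)=\Exp\Var(Y\mid Z)$ from~\eqref{equ:var}, and (ii) noting that $\Var(Y)$ and $\Var(A)$ are finite so that the subtractions above are legitimate — both of which are already in force from the preliminaries.
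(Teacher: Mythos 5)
Your proposal is correct and follows exactly the paper's own argument: identify the optimal squared-loss risk with $\Exp\Var(\cdot\mid Z)$, apply the law of total variance, and invoke Theorems~\ref{thm:maxvar} and~\ref{thm:minvar} for the two bounds on the information-plane coordinates. No differences worth noting.
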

A similar analogue to Corollary~\ref{cor:subopt:discrete} can be derived in order to certify suboptimality. 

\subsection{An Exact Characterization of the Frontier}
\begin{figure}[tb]
    \centering
    \includegraphics[width=0.8\linewidth]{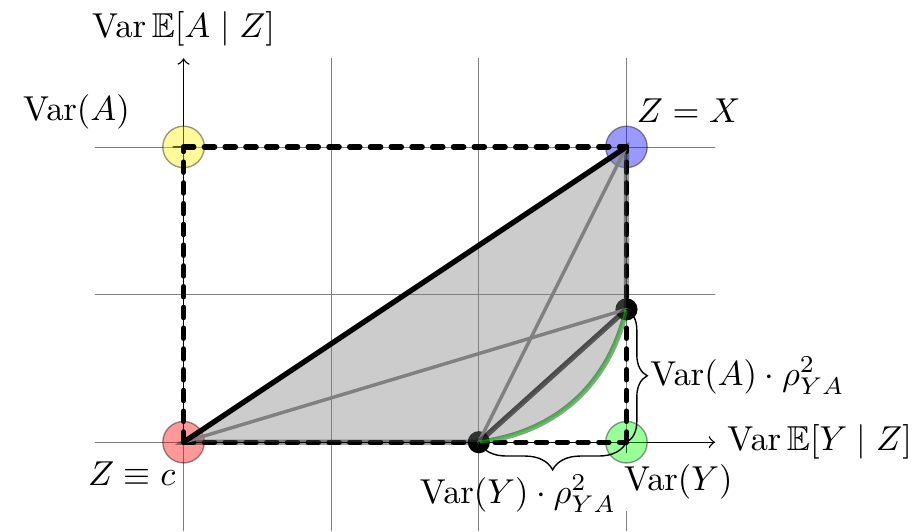}
    \caption{The exact characterization of $\cregion$ in the regression setting. The green convex curve connecting the two black dots is the frontier between $\Var\E[Y|Z]$ and $\Var\E[A|Z]$, given by Theorem~\ref{thm:constrained_upper}.}
    \label{fig:reg_frontier}
\end{figure}
\label{sec:continuous:spectral}
In the regression setting, we can say even more: we can derive a tight upper bound to the constrained problem 
\begin{equation}
\begin{aligned}
& \underset{Z}{\text{maximize}} && \Var\Exp[Y\mid Z] \\
& \text{subject to} && \Var\Exp[A\mid Z] \le c.
\end{aligned}
\label{equ:mmaxvar:relax}
\end{equation}

\begin{restatable}{theorem}{constrained}
	\label{thm:constrained_upper}
	The optimal solution of the constrained problem \eqref{equ:mmaxvar:relax} has the following upper bound: for any $c \in [0, \rho_{YA}^2 \Var(A)]$, we have
	\begin{align}
	\label{equ:constrained_upper}
	\Var\Exp[Y\mid Z] \le \Var(Y)\left(2 \rho_{YA}\sqrt{(1-\rho_{YA}^2)\alpha(1-\alpha)}+1-\alpha - \rho_{YA}^2 + 2\alpha \rho_{YA}^2 \right) 
	\end{align}
	where $\alpha \defeq c/\Var(A)$.
\end{restatable}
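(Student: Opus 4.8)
The plan is to recast everything in the Hilbert space $L^2$ of centered square-integrable random variables, where conditional expectation is orthogonal projection, and then reduce to a three-parameter scalar optimization. First I would center $Y$ and $A$ (harmless) and recall that for any representation $Z=g(X)$ — randomized or not — writing $P_Z$ for the orthogonal projection onto $L^2_0(\sigma(Z))$, we have $\E[Y\mid Z]=P_Z Y$ and $\E[A\mid Z]=P_Z A$, so $\Var\E[Y\mid Z]=\|P_Z Y\|^2$ and $\Var\E[A\mid Z]=\|P_Z A\|^2$. Since $\|P_Z u\|\le\|u\|$, this already recovers the bounding rectangle of Figure~\ref{fig:var0}; the content is in tracking how $P_Z$ acts on the (at most) two-dimensional subspace $\Span\{Y,A\}$.

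Next I would introduce an orthonormal pair $e_1:=A/\sqrt{\Var(A)}$ and $e_2$, a unit vector orthogonal to $e_1$ inside $\Span\{Y,A\}$ (the case $\rho_{YA}^2=1$ is immediate, since then $Y$ is a scalar multiple of $A$ and the bound reduces to $\Var(Y)\,\alpha$). Assuming $\rho_{YA}\ge 0$ without loss of generality — replacing $A$ by $-A$ otherwise changes neither the feasible region nor $\Var(A),\Var\E[A\mid Z]$ — we get $A=\sqrt{\Var(A)}\,e_1$ and $Y=\rho_{YA}\sqrt{\Var(Y)}\,e_1+\sqrt{(1-\rho_{YA}^2)\Var(Y)}\,e_2$. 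Set $x:=\|P_Z e_1\|^2$, $z:=\|P_Z e_2\|^2$, $w:=\langle P_Z e_1,P_Z e_2\rangle$. Then the constraint $\Var\E[A\mid Z]\le c$ is exactly $x\le\alpha:=c/\Var(A)$, while $\Var\E[Y\mid Z]=\Var(Y)\bigl(\rho_{YA}^2 x+2\rho_{YA}\sqrt{1-\rho_{YA}^2}\,w+(1-\rho_{YA}^2) z\bigr)$. Because $P_Z$ is an orthogonal projection, $x,z\in[0,1]$, and Cauchy--Schwarz applied to the pairs $(P_Z e_1,P_Z e_2)$ and $((I-P_Z)e_1,(I-P_Z)e_2)$ — whose Gram entries are $(x,z,w)$ and $(1-x,1-z,-w)$ respectively — gives $w^2\le xz$ and $w^2\le(1-x)(1-z)$. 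Dropping the requirement that $(x,z,w)$ actually come from a conditional-expectation projection and keeping only these inequalities is a relaxation, hence yields an upper bound on $\Var\E[Y\mid Z]$.

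It then remains to maximize $f(x,z,w)=\rho^2 x+2\rho\sqrt{1-\rho^2}\,w+(1-\rho^2) z$ (writing $\rho=\rho_{YA}\in[0,1)$) over $x\in[0,\alpha]$, $z\in[0,1]$, $w^2\le\min\bigl(xz,(1-x)(1-z)\bigr)$. Since $f$ increases in $w$, take $w=\min\bigl(\sqrt{xz},\sqrt{(1-x)(1-z)}\bigr)$; the two branches agree on $\{x+z=1\}$. On $\{x+z\le 1\}$ one gets $f=(\rho\sqrt x+\sqrt{1-\rho^2}\sqrt z)^2$, increasing in both variables, so the optimum on this branch is at $x=\alpha,z=1-\alpha$. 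On $\{x+z\ge 1\}$, a short gradient computation shows the stationary locus of $f$ sits at $f\equiv 1$ but forces $x\ge\rho^2$, which is excluded by the hypothesis $c\le\rho_{YA}^2\Var(A)$, i.e. $\alpha\le\rho^2$; checking the remaining boundary faces ($z=1$, $x+z=1$, $x=\alpha$) and again using $\alpha\le\rho^2$ pins the maximum at $x=\alpha,z=1-\alpha,w=\sqrt{\alpha(1-\alpha)}$. Substituting and using $\rho^2\alpha+(1-\rho^2)(1-\alpha)=1-\alpha-\rho^2+2\alpha\rho^2$ gives the claimed value $\Var(Y)\bigl(2\rho_{YA}\sqrt{(1-\rho_{YA}^2)\alpha(1-\alpha)}+1-\alpha-\rho_{YA}^2+2\alpha\rho_{YA}^2\bigr)$.

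The main obstacle is this last scalar optimization: the objective is only piecewise-smooth because of the $\min$ of the two square-root constraints, and the standing hypothesis $\alpha\le\rho_{YA}^2$ must be invoked at exactly the right point to rule out the interior stationary point (which would otherwise give the trivial bound $\Var(Y)$ — the flat part of the frontier in Figure~\ref{fig:reg_frontier}). By contrast, the Hilbert-space reduction and the two Cauchy--Schwarz inequalities are routine. As consistency checks, $\alpha=0$ recovers Theorem~\ref{thm:maxvar} ($\opty$), $\alpha=\rho_{YA}^2$ gives $\Var(Y)$ (the top of the rectangle), and $\rho_{YA}\in\{0,1\}$ degenerate correctly.
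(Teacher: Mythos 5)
Your argument is correct, but it is a genuinely different route from the paper's. The paper proves Theorem~\ref{thm:constrained_upper} by Lagrangian duality: it first lower-bounds $\opt(\lambda)=\min_Z\{\lambda\Var\E[A\mid Z]-\Var\E[Y\mid Z]\}$ via an SDP relaxation in $V=\Var\E[\phi(X)\mid Z]$, computes the two nonzero eigenvalues of the rank-$2$ matrix $\lambda a'a'^{T}-y'y'^{T}$ explicitly (Lemma~\ref{lem:eigen}), and then evaluates $-\sup_{\lambda\ge0}\{\opt(\lambda)-\lambda c\}$ in closed form through a one-variable calculus lemma (Lemma~\ref{lem:inner_sup}); the case split $\alpha\lessgtr\rho_{YA}^2$ appears there. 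You instead attack the constrained problem directly ("primal" rather than "dual"): you restrict the projection $P_Z=\E[\cdot\mid Z]$ to the two-dimensional subspace $\Span\{Y,A\}$, record its compressed Gram data $(x,z,w)$, and keep only the two Cauchy--Schwarz constraints $w^2\le xz$ and $w^2\le(1-x)(1-z)$ --- which is exactly the $2\times2$ compression of the paper's constraint $0\preceq V'\preceq I$, so the two relaxations are equivalent. What your route buys is that it avoids duality and the eigenvalue computation entirely, makes the geometric origin of the case split transparent (the stationary locus $f\equiv1$ is infeasible precisely when $\alpha<\rho_{YA}^2$), and works without the RKHS/feature-map language; what it costs is a piecewise-smooth two-variable optimization in place of the paper's one-variable $\sup_\lambda$.

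One step is stated too quickly. On the branch $\{x+z\le1\}$ you write that $f=(\rho\sqrt{x}+\sqrt{1-\rho^2}\sqrt{z})^2$ is "increasing in both variables, so the optimum is at $x=\alpha,z=1-\alpha$." Coordinatewise monotonicity alone does not give this, because $(\alpha,1-\alpha)$ does not dominate every feasible point of that branch (e.g.\ $(0,1)$ is feasible with $z=1>1-\alpha$). Monotonicity only reduces the problem to the Pareto face $\{x+z=1,\ 0\le x\le\alpha\}$, and you then still need that $x\mapsto\rho\sqrt{x}+\sqrt{1-\rho^2}\sqrt{1-x}$ is increasing on $[0,\rho^2]$ --- i.e.\ the hypothesis $\alpha\le\rho_{YA}^2$ is needed on this branch too, not only to exclude the interior stationary point on the other branch. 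This is the same one-line derivative computation you already perform implicitly for $\{x+z\ge1\}$, so it is an easy fix, but as written the first branch is incomplete.
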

When this bound is tight (see Section~\ref{sec:continuous:tight}), this fully characterizes the feasible region $\cregion$. Before we move to the discussion of the tightness of these bounds, we pause to make a few comments on the relationship between Theorem \ref{thm:constrained_upper} and previous results.

A notable special case is when $c = 0$, where the upper bound simplifies to:
\begin{align}
\Var\E[Y| Z] &\leq \Var(Y)(1-\rho_{YA}^2),
\end{align}
which reduces exactly to the bound in Theorem \ref{thm:maxvar}.  

We also note that Theorem \ref{thm:minvar} is also closely related: Theorem \ref{thm:minvar} states that to achieve perfect utility (i.e. $\Var\E[Y| Z] = \Var(Y)$), one must have $ \Var\E[A|Z] \ge \rho_{YA}^2 \Var(A)$. Setting $c = \rho_{YA}^2 \Var(A)$ in Theorem \ref{thm:constrained_upper}, the upper bound exactly simplifies to  $\Var(Y)$, the perfect utility. Therefore, Theorem \ref{thm:constrained_upper} interpolates between the two extreme cases (Theorem \ref{thm:maxvar} and \ref{thm:minvar}). Furthermore, one can verify that the upper bound in Theorem~\ref{thm:constrained_upper} is a concave function of $\alpha$, which means that it is a convex curve on the information plane. In fact, it exactly characterizes the frontier between $\Var\E[Y\mid Z]$ and $\Var\E[A\mid Z]$ in the regression setting. We visualize this result in Figure~\ref{fig:reg_frontier}, where the convex curve connecting the two black dots (the two optimal solutions in the extreme cases) corresponds to the bound in Theorem~\ref{thm:constrained_upper}.

The proof is based on a careful characterization of the Lagrangian form of the constrained problem. Define 
 $\opt(\lambda)$ to be the Lagrangian:
\begin{equation}
\opt(\lambda)\defeq \min_{Z}~  \lambda \Var\Exp[A\mid Z] - \Var\Exp[Y\mid Z]
\label{equ:lag}
\end{equation}

By Lagrange duality, $\opt(\lambda)$ provides an explicit upper bound on the optimal solution to the following problem, which is a relaxed version of the problem \eqref{equ:maxvar}. More specifically, for any $c>0$, 
\begin{equation}
\Var\Exp[Y\mid Z] \le -\sup_{\lambda \geq 0}\left\{\opt(\lambda)-\lambda c \right\}.    
\end{equation}

Therefore, the remaining work is to provide a lower bound of $\opt(\lambda)$, for which we establish the following result:
\begin{restatable}{theorem}{lagrangian}
\label{thm:lower_bound}
The optimal solution of the Lagrangian in regression has the following lower bound:
\begin{align}
\small
\label{equ:lag}
    \opt(\lambda) \geq \frac{1}{2}\left\{ \lambda\Var(A) - \Var(Y) - \sqrt{\Var^2(Y) + \lambda^2\Var^2(A) - 2\lambda\Var(A)\Var(Y)(2\rho_{YA}^2 - 1)}\right\}.
\end{align}
\end{restatable}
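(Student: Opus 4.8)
\textbf{Proof proposal for Theorem~\ref{thm:lower_bound}.} The plan is to lift the variational problem over representations $Z=g(X)$ to a single covariance operator on $\HH$, relax it to a semidefinite program, and solve that SDP by a two‑dimensional spectral computation. First I would assume (without loss of generality) that $\phi(X)$ is centered, so $\Exp Y=\Exp A=0$, $\Var(Y)=\langle y,\Sigma y\rangle$, $\Var(A)=\langle a,\Sigma a\rangle$, and $\Cov(A,Y)=\langle a,\Sigma y\rangle$. For an arbitrary $Z=g(X)$, introduce the $\HH$-valued (Bochner) conditional expectation $\psi\defeq\Exp[\phi(X)\mid Z]$. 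Since conditional expectation commutes with the bounded functionals $\langle\cdot,y\rangle$ and $\langle\cdot,a\rangle$, we get $\Exp[Y\mid Z]=\langle\psi,y\rangle$ and $\Exp[A\mid Z]=\langle\psi,a\rangle$; hence, writing $T\defeq\Exp[\psi\otimes\psi]=\Cov(\psi,\psi)$ (the two agree because $\Exp\psi=\Exp\phi(X)=0$), we have $\Var\Exp[Y\mid Z]=\langle y,Ty\rangle$ and $\Var\Exp[A\mid Z]=\langle a,Ta\rangle$. Conditional Jensen gives, for every $w\in\HH$, $\langle w,Tw\rangle=\Exp\langle w,\psi\rangle^2=\Exp\big(\Exp[\langle w,\phi(X)\rangle\mid Z]\big)^2\le\Exp\langle w,\phi(X)\rangle^2=\langle w,\Sigma w\rangle$, i.e.\ $0\preceq T\preceq\Sigma$. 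Therefore
\[
\opt(\lambda)=\min_Z\big(\lambda\langle a,Ta\rangle-\langle y,Ty\rangle\big)\ \ge\ \min_{0\preceq T\preceq\Sigma}\ \lambda\langle a,Ta\rangle-\langle y,Ty\rangle .
\]

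Next I would solve this relaxed problem. Substituting $T=\Sigma^{1/2}S\Sigma^{1/2}$ (restricting to $\overline{\ran\Sigma}$ if $\Sigma$ is singular, which costs nothing since the objective sees $a,y$ only through $\tilde a\defeq\Sigma^{1/2}a$ and $\tilde y\defeq\Sigma^{1/2}y$) turns the constraint into $0\preceq S\preceq I$ and the objective into the linear functional $\tr(SM)$, where $M\defeq\lambda\,\tilde a\otimes\tilde a-\tilde y\otimes\tilde y$ is a finite‑rank self‑adjoint (hence trace‑class) operator. The minimum of $\tr(SM)$ over $\{0\preceq S\preceq I\}$ equals the sum of the negative eigenvalues of $M$, attained at the spectral projector onto its negative eigenspace. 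Since $M$ has rank at most two with range in $\Span\{\tilde a,\tilde y\}$, its two (possibly zero) nonzero eigenvalues $\mu_1,\mu_2$ satisfy $\mu_1+\mu_2=\tr M=\lambda\|\tilde a\|^2-\|\tilde y\|^2=\lambda\Var(A)-\Var(Y)$ and, via the Gram‑determinant identity $\det\!\big([\,\tilde a\ \tilde y\,]\,\diag(\lambda,-1)\,[\,\tilde a\ \tilde y\,]^{\top}\big)=-\lambda\big(\|\tilde a\|^2\|\tilde y\|^2-\langle\tilde a,\tilde y\rangle^2\big)$, $\mu_1\mu_2=-\lambda\big(\Var(A)\Var(Y)-\Cov(A,Y)^2\big)=-\lambda\Var(A)\Var(Y)(1-\rho_{YA}^2)\le 0$. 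So both eigenvalues are real with nonpositive product, exactly one is $\le 0$, and the sum of negative eigenvalues is $\tfrac12\big(\tr M-\sqrt{(\tr M)^2-4\mu_1\mu_2}\big)$. Expanding $(\tr M)^2-4\mu_1\mu_2=\Var^2(Y)+\lambda^2\Var^2(A)-2\lambda\Var(A)\Var(Y)(2\rho_{YA}^2-1)$ yields exactly the claimed right‑hand side; the degenerate cases $\lambda=0$ and $\rho_{YA}^2\in\{0,1\}$ are checked directly against the formula.

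The routine SDP and spectral steps are the easy part; I expect the main obstacle to be making the reduction in the first paragraph fully rigorous in infinite dimensions: that $\phi(X)$ is Bochner integrable so $\psi=\Exp[\phi(X)\mid Z]$ is well defined and commutes with bounded functionals, that $\Exp\psi=\Exp\phi(X)$, and that the operator inequality $T\preceq\Sigma$ and the change of variables $T\mapsto S=\Sigma^{-1/2}T\Sigma^{-1/2}$ behave correctly when $\Sigma$ is non‑invertible (working on $\overline{\ran\Sigma}$, with the pseudo‑inverse of $\Sigma^{1/2}$, and noting $a,y$ matter only modulo $\Null\Sigma$). A secondary point worth a remark is that we only need \emph{every} feasible $T$ to satisfy $0\preceq T\preceq\Sigma$ (not the converse), so the relaxation is legitimate for a lower bound; whether it is tight is exactly the separate question deferred to Section~\ref{sec:continuous:tight}.
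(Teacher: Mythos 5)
Your proposal is correct and follows essentially the same route as the paper's proof: reduce to $V=\Var\Exp[\phi(X)\mid Z]$ with $0\preceq V\preceq\Sigma$, whiten by $\Sigma^{\pm1/2}$ to get a linear SDP $\min\tr(SM)$ over $0\preceq S\preceq I$ with $M$ of rank at most two, and identify the optimum with the negative eigenvalue of $M$ computed from $\tr M$ and the product of eigenvalues (the paper uses $\tr(M^2)$ and Von Neumann's trace inequality where you use a Gram determinant and the spectral-projector characterization, but these are equivalent computations). The infinite-dimensional caveats you flag are real but are exactly the points the paper also treats informally.
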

A few remarks are in order. The key quantity in the lower bound~\eqref{equ:lag} is the correlation coefficient between $A$ and $Y$: $\rho^2_{YA}$, which effectively measures the dependence between the $Y$ and $A$ under the feature covariance $\Sigma$. To see the connection between Theorem~\ref{thm:lower_bound} and Theorem~\ref{thm:minvar}, Theorem~\ref{thm:maxvar}, note that $\lambda = \infty$ corresponds to the hard constraint that $\Var\Exp[A\given Z]=0$, and $\lambda\to0$ corresponds to the hard constraint that $\Var\Exp[Y\given Z]=\Var(Y)$. Of course, $\lambda\in(0,\infty)$ corresponds to the soft constraint that $\Var\Exp[A\mid Z] \le c$. 

\subsection{When are These Bounds Tight?}
\label{sec:continuous:tight}
The proofs of Theorem~\ref{thm:maxvar}, Theorem~\ref{thm:minvar} and Theorem~\ref{thm:lower_bound} all rely on a semidefinite programming (SDP) relaxation, and construct an explicit optimal solution to this relaxation. At a high level, we re-formulate the objective as a linear functional of $V\defeq \Var\Exp[\phi(X)\mid Z]$, which satisfies the semi-definite constraint $0\preceq V\preceq \Sigma = \Var\phi(X)$. Therefore, the optimal value of the SDP is an upper/lower bound of the corresponding objective. In this section we shall discuss when these SDP relaxations are tight. In particular, we show that under the following regularity condition on $(X, \phi)$, the SDP relaxation is \emph{exact}. 
\begin{definition}
\label{defn:regular}
$(X,\phi)$ is called \emph{regular}, if for any positive semidefinite matrix $M$: $0 \preceq M \preceq \Sigma$, there exists (possibly randomized) $Z = g(X)$, such that 
$\Var\E[ \phi(X)\mid Z] = M$.
\end{definition}
One particularly interesting setting where the regularity condition holds is when $\phi(X)$ follows a Gaussian distribution. 
\begin{restatable}{theorem}{gaussian}
\label{thm:regular}
    $(X, \phi)$ is regular if $\phi(X)$ follows a Gaussian distribution.
\end{restatable}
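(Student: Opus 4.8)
The plan is to show that if $\phi(X)$ is Gaussian, then for any $M$ with $0 \preceq M \preceq \Sigma$ we can explicitly construct a (randomized) encoder $Z = g(X)$ realizing $\Var\E[\phi(X)\mid Z] = M$. Write $W \defeq \phi(X)$, a Gaussian vector with covariance $\Sigma$ (we may assume it is centered and that $\Sigma$ is invertible by restricting to its support, i.e.\ working in the range of $\Sigma$, since any $M \preceq \Sigma$ is supported there). The key identity to exploit is the law of total variance: $\Sigma = \Var(W) = \E\Var(W\mid Z) + \Var\E[W\mid Z]$, so realizing $\Var\E[W\mid Z] = M$ is equivalent to realizing $\E\Var(W\mid Z) = \Sigma - M \succeq 0$. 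For jointly Gaussian pairs, conditional variances do not depend on the conditioning value, so we are really looking for a jointly Gaussian $(W, Z)$ with $\Var(W\mid Z) = \Sigma - M$.

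The natural construction: let $Z = W + \eta$ where $\eta \sim \mathcal N(0, N)$ is independent Gaussian noise, with $N$ a noise covariance to be chosen. Then $(W, Z)$ is jointly Gaussian, and the standard Gaussian conditioning formula gives $\Var(W\mid Z) = \Sigma - \Sigma(\Sigma + N)^{-1}\Sigma$. We want this to equal $\Sigma - M$, i.e.\ $\Sigma(\Sigma+N)^{-1}\Sigma = M$. When $M$ is invertible one solves $\Sigma + N = \Sigma M^{-1}\Sigma$, giving $N = \Sigma M^{-1}\Sigma - \Sigma$; this is PSD precisely when $\Sigma M^{-1}\Sigma \succeq \Sigma$, equivalently $M^{-1} \succeq \Sigma^{-1}$ (conjugating by $\Sigma^{-1/2}$), equivalently $M \preceq \Sigma$, which is our hypothesis. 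So in the nondegenerate case an additive-Gaussian-noise channel works, and $g$ is just $x \mapsto \phi(x) + \eta$ with fresh noise.

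For the boundary/degenerate cases where $M$ is singular (or $M = \Sigma$, forcing $N = 0$ or $N$ singular), the additive-noise formula degenerates; here I would instead use a projection-plus-noise construction, or pass to a limit. One clean way: diagonalize simultaneously — by a change of basis write $\Sigma = I$ (after $\Sigma^{-1/2}$ conjugation on the range) and $M = \diag(m_1,\dots,m_d)$ with each $m_i \in [0,1]$. For coordinates with $m_i = 0$, include nothing of $W_i$ in $Z$ (contributes $0$ to $\Var\E[\cdot\mid Z]$); for $m_i = 1$, include $W_i$ exactly (contributes the full unit variance); for $m_i \in (0,1)$, add noise of variance $m_i^{-1} - 1$ as above. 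Assembling these coordinatewise gives $\Var\E[W\mid Z] = M$ exactly, and translating back through $\Sigma^{1/2}$ recovers the claim for general $M \preceq \Sigma$.

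The main obstacle is not the generic case — that is a one-line Schur-complement computation — but handling the degenerate directions carefully and making sure the "randomized $Z = g(X)$" in Definition~\ref{defn:regular} is legitimately a function of $X$ plus exogenous randomness (it is: the noise $\eta$ is independent of $X$, so $Z = \phi(X) + \eta$ qualifies). A secondary subtlety is the infinite-dimensional case, since the excerpt allows infinite-dimensional feature maps: there one should either restrict attention to the (separable Hilbert space) setting where $\Sigma$ is trace-class and Gaussian conditioning still applies, or note that $M \preceq \Sigma$ forces $M$ to act nontrivially only on the Cameron–Martin-type subspace associated with $\Sigma$, reducing to the tractable case. I expect the paper handles this by the diagonalization/coordinatewise argument, which extends to countably many coordinates without difficulty.
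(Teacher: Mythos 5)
Your construction is correct, but it is a genuinely different route from the paper's. The paper first proves a rank-one lemma: for each eigenpair $(\sigma_j,u_j)$ of $\Sigma$, the deterministic linear encoder $Z=L\phi(X)$ with $L=\sigma_ju_ju_j^T\Sigma^{-1/2}$ attains $\Var\E[\phi(X)\mid Z]=\sigma_ju_ju_j^T$ via the Gaussian conditioning formula $\Var\E[\phi(X)\mid Z]=\Sigma L^T(L\Sigma L^T)^{-1}L\Sigma$. It then writes a general $M\preceq\Sigma$ as a convex combination of scaled rank-one pieces and realizes it by a \emph{mixture}: an exogenous selector $S$ picks which linear projection to apply, and the law of total variance over $S$ (the same device used to prove convexity of $\cregion$) yields $\Var\E[\phi(X)\mid Z]=M$. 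You instead use a single additive-Gaussian-noise channel $Z=\phi(X)+\eta$, $\eta\sim\normalN(0,N)$ with $N=\Sigma M^{-1}\Sigma-\Sigma$, which is a one-shot Schur-complement computation; your observation that $N\succeq 0$ exactly when $M\preceq\Sigma$ makes the role of the constraint transparent, and your degenerate-case handling (whiten by $\Sigma^{-1/2}$, orthogonally diagonalize the whitened $M$, treat coordinates with $m_i\in\{0,1\}$ separately) is sound because after whitening the reference operator is the identity, so the diagonalization is legitimate. This is actually a point in your favor: the paper's proof asserts that any $0\preceq M\preceq\Sigma$ can be written as $\sum_i m_iu_iu_i^T$ in the eigenbasis of $\Sigma$, which requires $M$ and $\Sigma$ to commute and is false in general; your whitened version sidesteps this. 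What the paper's mixture approach buys is that it needs only deterministic rank-one encoders plus the already-established randomization trick, whereas your approach buys a jointly Gaussian $(\phi(X),Z)$ and a cleaner identification of where the hypothesis $M\preceq\Sigma$ enters.
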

The proof of Theorem~\ref{thm:regular}, again, depends on an ingenious construction of randomized features. We defer the proof of this theorem to Section~\ref{sec:achievability}. Roughly speaking, we first show that under the Gaussian assumption, all the rank-1 matrices $0 \preceq M\preceq \Sigma$ could be attained via deterministic linear transformations. We then show that the rest of such matrices $M$ could be attained by constructing randomized features, which corresponds to a convex combination of all the rank-1 matrices. 

Under the regularity condition, we can show all the bounds in Theorem~\ref{thm:maxvar}, Theorem~\ref{thm:minvar} and Theorem~\ref{thm:lower_bound} are tight. 
\begin{theorem}
\label{thm:achievable}
When $(X, \phi)$ is regular, the upper bound in Theorem~\ref{thm:maxvar}, the lower bound in Theorem~\ref{thm:minvar} and the spectral lower bound in Theorem~\ref{thm:lower_bound} are all achievable. 
\end{theorem}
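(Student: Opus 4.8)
The plan is to obtain \Cref*{thm:achievable} — I will write \ref{thm:achievable} — by combining the semidefinite relaxations underlying Theorems~\ref{thm:maxvar}, \ref{thm:minvar} and~\ref{thm:lower_bound} with the realizability furnished by the regularity condition. The starting point is the dictionary between representations and matrices that is already implicit in those proofs: every (possibly randomized) $Z = g(X)$ induces
\[
V_Z \defeq \Var\Exp[\phi(X)\mid Z],
\]
and the operator law of total variance $\Sigma = \Exp\Cov(\phi(X)\mid Z) + \Var\Exp[\phi(X)\mid Z]$ shows $0 \preceq V_Z \preceq \Sigma$. Since $Y = \langle\phi(X), y\rangle$ and $A = \langle\phi(X), a\rangle$ are linear in $\phi(X)$, we get $\Exp[Y\mid Z] = \langle\Exp[\phi(X)\mid Z], y\rangle$, hence $\Var\Exp[Y\mid Z] = \langle y, V_Z\, y\rangle$ and likewise $\Var\Exp[A\mid Z] = \langle a, V_Z\, a\rangle$. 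Thus each of the three problems over representations $Z$ becomes, in the $Z \mapsto V$ direction, a relaxation of a semidefinite program over the spectrahedron $\{V : 0 \preceq V \preceq \Sigma\}$: with the extra linear constraint $\langle a, Va\rangle = 0$ for Theorem~\ref{thm:maxvar}, the extra constraint $\langle y, Vy\rangle = \langle y, \Sigma y\rangle = \Var(Y)$ for Theorem~\ref{thm:minvar}, and no extra constraint for the Lagrangian $\opt(\lambda)$ of Theorem~\ref{thm:lower_bound}.

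First I would recall from the proofs of Theorems~\ref{thm:maxvar}, \ref{thm:minvar} and~\ref{thm:lower_bound} that these SDPs are solved \emph{exactly}: in each case an explicit matrix $V^\star$ is constructed which is feasible for the SDP — that is, $0 \preceq V^\star \preceq \Sigma$ together with the relevant linear constraint — and whose objective value equals the bound stated in the theorem. For the present argument this is taken as given, since it is exactly what those proofs establish.

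Next I would invoke the regularity of $(X, \phi)$ from Definition~\ref{defn:regular}. Because $0 \preceq V^\star \preceq \Sigma$, there is a (possibly randomized) map $Z^\star = g^\star(X)$ with $\Var\Exp[\phi(X)\mid Z^\star] = V^\star$. By the identities above, $\Var\Exp[Y\mid Z^\star] = \langle y, V^\star y\rangle$ and $\Var\Exp[A\mid Z^\star] = \langle a, V^\star a\rangle$, so $Z^\star$ satisfies exactly the hard constraint of the corresponding problem (for instance $\Var\Exp[A\mid Z^\star] = \langle a, V^\star a\rangle = 0$ in the setting of Theorem~\ref{thm:maxvar}) and attains the SDP objective value, which is the stated bound. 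Since that SDP value is an upper bound for the maximization of Theorem~\ref{thm:maxvar} and a lower bound for the minimizations of Theorem~\ref{thm:minvar} and of $\opt(\lambda)$ in Theorem~\ref{thm:lower_bound}, exhibiting a representation attaining it shows the optimum over representations equals the stated bound — i.e., all three bounds are achievable, as claimed.

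The main obstacle is not this gluing step, which is essentially bookkeeping, but verifying that the three ingredients interlock cleanly: (a) that the constraint translations between $Z$ and $V$ are genuine equivalences rather than one-sided implications, so that any realizer $Z^\star$ of $V^\star$ truly lies in the feasible set of the original representation problem; and (b) that the explicit optimizers $V^\star$ produced in the earlier proofs are feasible for the full spectrahedron $0 \preceq V^\star \preceq \Sigma$ — including in the infinite-dimensional operator case — and not merely for some looser relaxation used en route. Once these are in place, Theorem~\ref{thm:regular} supplies a concrete and broad class, Gaussian $\phi(X)$, on which regularity, and therefore achievability, holds.
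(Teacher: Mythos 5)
Your proposal is correct and follows essentially the same route as the paper: the paper likewise takes the explicit SDP optimizer $V^*$ constructed in each of the proofs of Theorems~\ref{thm:maxvar}, \ref{thm:minvar} and \ref{thm:lower_bound}, checks $0 \preceq V^* \preceq \Sigma$, and invokes Definition~\ref{defn:regular} to produce a representation $Z$ with $\Var\Exp[\phi(X)\mid Z] = V^*$, which then attains the bound via the identities $\Var\Exp[Y\mid Z] = \langle y, V^* y\rangle$ and $\Var\Exp[A\mid Z] = \langle a, V^* a\rangle$. The two verification points you flag, feasibility of $V^*$ in the untransformed spectrahedron and the exact correspondence of constraints under the $Z \mapsto V$ dictionary, are indeed the only things to check, and both hold as in the paper's argument.
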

Combining Theorem~\ref{thm:regular} and Theorem~\ref{thm:achievable}, we now know that if the distribution over $\phi(X)$ in the RKHS is Gaussian, then all of the bounds in Theorem~\ref{thm:maxvar}, Theorem~\ref{thm:minvar} and Theorem~\ref{thm:lower_bound} become tight. Although here we only show that the regularity condition holds under Gaussian assumption, we conjecture that it could be verified under broader settings. More discussions about the tightness of our bounds and the analysis are presented in Section~\ref{sec:achievability} of the appendix.

\section{Numerical Experiments}
\label{sec:experiments}
In this section, we demonstrate the empirical application of our theoretical results in both classification and regression tasks to \emph{certify the suboptimality} of certain representation learning algorithms. To this end, we conduct experiments on two real-world benchmark datasets, the UCI Adult dataset~\citep{asuncion2007uci} for classification and the Law School dataset~\citep{wightman1998lsac} for regression. In what follows we first provide a brief description about these two datasets and the corresponding tasks. 

\subsection{Datasets}
\paragraph{UCI Adult}
The Adult dataset contains 30,162/15,060 training/test instances for income prediction. Each instance in the dataset describes an adult from the 1994 US Census. Attributes of each individual include gender, education level, age, etc. In this experiment we use gender (binary) as the protected attribute, and we preprocess the dataset to convert all the categorical variables into corresponding one-hot representations. The processed data contains 114 attributes. The target variable (income) is also binary: 1 if $\geq$ 50K/year otherwise 0. For the protected attribute $A$, $A = 0$ means Male otherwise Female. In this dataset, the base rates across groups are different: $\Pr(Y = 1\mid A = 0) = 0.310$ while $\Pr(Y = 1\mid A = 1) = 0.113$. The group ratio is also quite imbalanced, with $\Pr(A = 0) = 0.673$ and $\Pr(A = 1) = 0.327$.

\paragraph{Law School}
The Law School dataset contains 1,823 records for law students who took the bar passage study for Law School Admission.\footnote{We use the edited public version of the dataset which can be downloaded here: \url{https://github.com/algowatchpenn/GerryFair/blob/master/dataset/lawschool.csv}} The features in the dataset include variables such as undergraduate GPA, LSAT score, full-time status, family income, gender, etc. In this experiment, we use gender (treated as a continuous variable that takes value in $[0, 1]$) as the protected attribute and undergraduate GPA (continuous) as the target variable. For both variables, we use the mean-squared error as the loss function. We use 80 percent of the data as our training set and the rest 20 percent as the test set. In the Law School dataset, $\Pr(A = 1) = 0.452$, which is quite balanced. The data distribution for different subgroups in the Law School dataset could be found in Figure~\ref{fig:data-dist-law}. From Figure~\ref{fig:data-dist-law}, we can see that the conditional distributions $\Pr(Y\mid A = a)$ are different across different subgroups $A = a \in\{0, 1\}$.
\begin{figure}[htb]
\centering
    \includegraphics[width=0.6\linewidth]{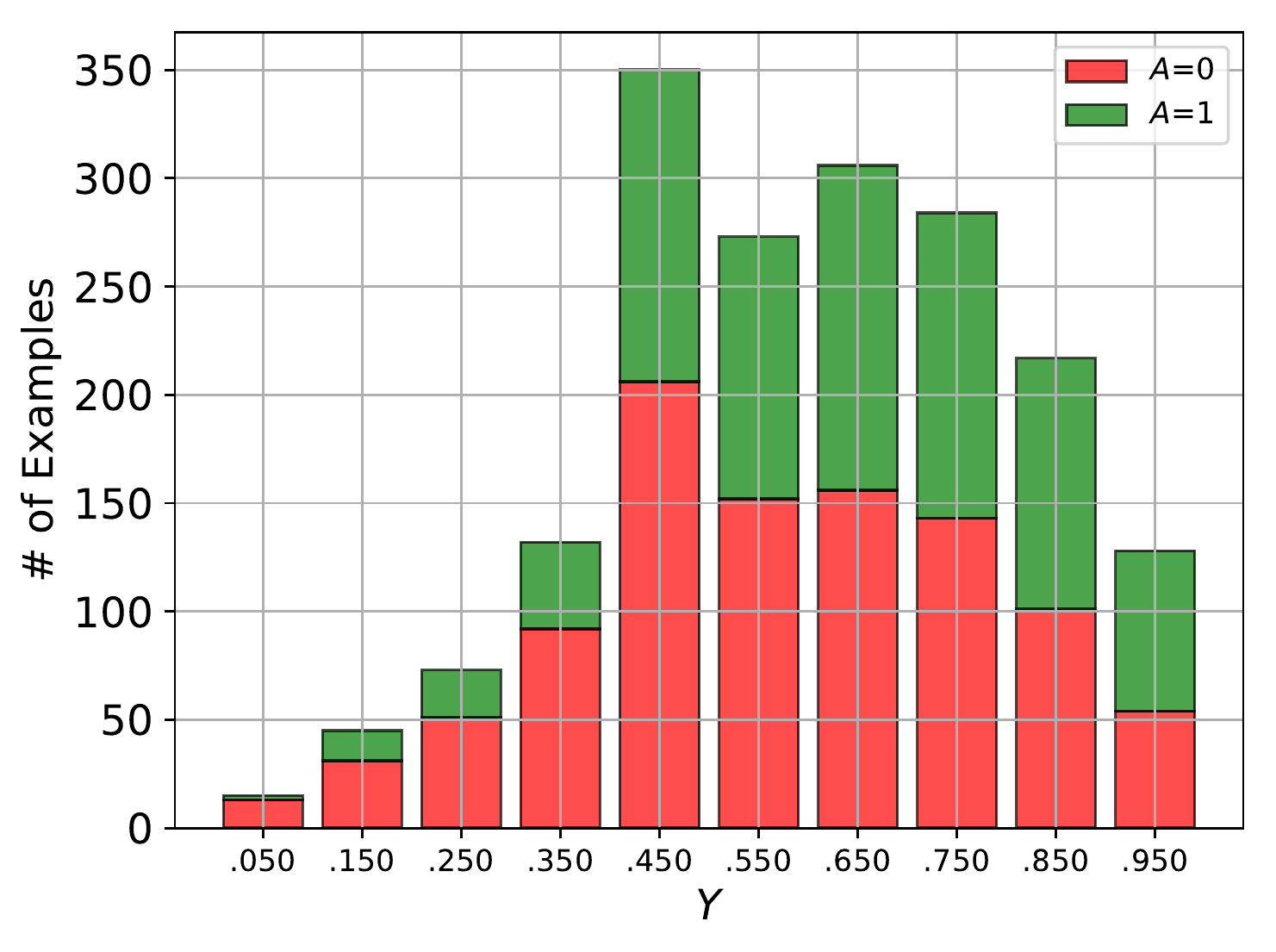}
    \caption{The data distributions of different subgroups in the Law School dataset.}
    \label{fig:data-dist-law}
\end{figure}

\subsection{Representation Learning Algorithms}
To verify that the empirical estimates of the vertices in Figure~\ref{fig:mi3} and Figure~\ref{fig:var3} could be used to certify the suboptimality of certain representation learning algorithms, in this section we apply four different representation learning algorithms to the Adult and Law School datasets and report their corresponding metrics to have a comparison among them. 
\begin{itemize}
    \item   \textbf{Logit Score (Logit)}. The very basic baseline representation learning method we consider is logistic regression. In particular, we first fit a logistic regression model on the Adult dataset and use the logit score of the input data as its representations. Similarly, for the Law School dataset, we consider the \textbf{Ordinary Least Squares (OLS)} for the very basic baseline, where we use the 1-dimensional prediction given by a trained OLS model as the corresponding representations.
    \item   \textbf{Linear Representations (Linear)}. One common representation that has been widely used is the linear representation. In this case, we consider $Z = WX + b$, where $W\in\RR^{d\times p}, b\in\RR^d$ are the model parameters to be learned. 
    \item   \textbf{Multilayer Perceptrons (MLP)}. The representations obtained by MLP are the hidden features of a ReLU network. On the Adult dataset, the network contains one hidden layer. For the Law School dataset, we use a three-hidden-layer network. 
    \item   \textbf{Adversarial Representation Learning (Adv)}. In Adv we use the same network architecture and training hyperparameters as the ones used in MLP, except that now we use an adversarial training method with Wasserstein regularization~\citep{chi2021understanding} to learn invariant representations. 
\end{itemize}
On the Adult dataset, for Logit, LR and MLP, we use the target prediction loss, i.e., cross-entropy in classification and mean-squared error in regression, with stochastic gradient descent to optimize the model parameters. The model parameters (hence the representations) of Adv is obtained via adversarial training, by solving a minimax optimization problem. On the Law School dataset, for OLS, LR and MLP, we use the mean-squared error as the regression loss function and optimize model parameters with stochastic gradient descent. For Adv, again, we obtain the model parameters via adversarial training. 

\subsection{Results and Analysis}
\paragraph{UCI Adult: Classification}
In order to certify the suboptimality and the corresponding distance to the accuracy-invariance frontier of different representation learning algorithms, we need to first estimate the vertices on the information plane as shown in Figure~\ref{fig:mi3}. We do this by using the plug-in estimator. In particular, we compute the conditional probability $\Pr(Y = 1\mid A = a)$ and marginal probabilities $\Pr(Y)$, $\Pr(A)$ from the sample, and then plug them into the formula of $H(Y)$, $H(A)$ and $\deltaya$, respectively. 

We report the results in Table~\ref{tab:adult}. It can be readily verified that the mutual information pair $(I(Y;Z), I(A;Z))$ of all the four methods are consistent with our theoretical upper and lower bounds. Furthermore, none of these algorithms locates on the frontier of the information plane, hence they are all strictly suboptimal. To better understand the strengths of different algorithms, we also visualize the results in Figure~\ref{fig:adult}. From Figure~\ref{fig:adult}, we can see that both MLP and Adv strictly dominate Logit and Linear, which is expected since these two models are richer than the other two baselines. Note that while MLP achieves better $I(Y;Z)$ than Adv, Adv is significantly better in terms of ensuring a small $I(A;Z)$.

\begin{figure}[htb]
    \centering
    \includegraphics[width=0.7\linewidth]{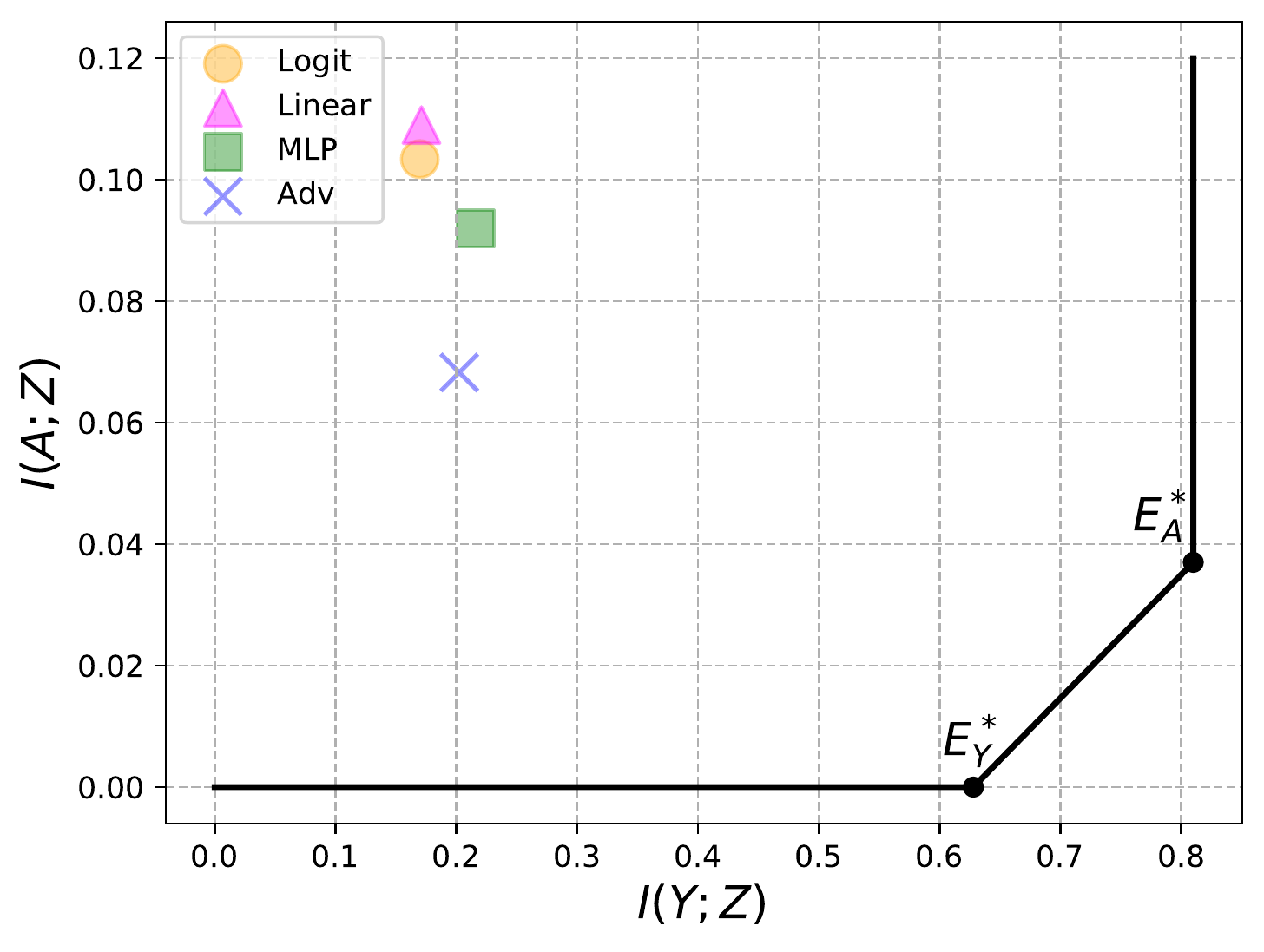}
    \caption{The information plane of Logit, Linear, MLP and Adv on the Adult dataset. On this plane, one method is strictly dominating another if it lies at the bottom-right side of the latter.}
    \label{fig:adult}
\end{figure}
\begin{table}[htb]
\centering
\caption{Numerical results on the \emph{Adult} dataset with Logit, Linear, MLP and Adv. For $I(Y;Z)$, the larger the better. For $I(A;Z)$, the smaller the better.}
\label{tab:adult}
    \begin{tabular}{*7c}\toprule
        & Lower Bound & Logit & Linear & MLP & Adv & Upper Bound \\\midrule
    $I(Y;Z)$ &  0 & 0.170 & 0.171 & 0.216 & 0.202 & $\opty:$ 0.628 \\\midrule 
    $I(A;Z)$ &  $\opta:$ 0.037 & 0.103 & 0.109 & 0.092 & 0.068 & $H(A):$ 0.909\\\bottomrule
    \end{tabular}
\end{table}

\paragraph{Law School: Regression}
Again, similar to the classification task, we first estimate the vertices on the information plane as shown in Figure~\ref{fig:var3}, which includes the empirical variances and the correlation coefficient between $A$ and $Y$. The results for the Law School dataset are shown in Table~\ref{tab:law}. 
As expected, the results are, again, consistent with our theoretical upper and lower bounds on the information plane. But different from the case in the Adult dataset, on the Law School dataset no method is strictly dominating another, and all the four methods are quite far away from the theoretical frontier on the plane. We visualize the results in Figure~\ref{fig:law}. From Figure~\ref{fig:law}, we can conclude that both MLP and Adv are better than OLS and Linear at achieving a large $\Var\E[Y\mid Z]$, but at the cost of increasing $\Var\E[A\mid Z]$ simultaneously. 
\begin{figure}[htb]
    \centering
    \includegraphics[width=0.7\linewidth]{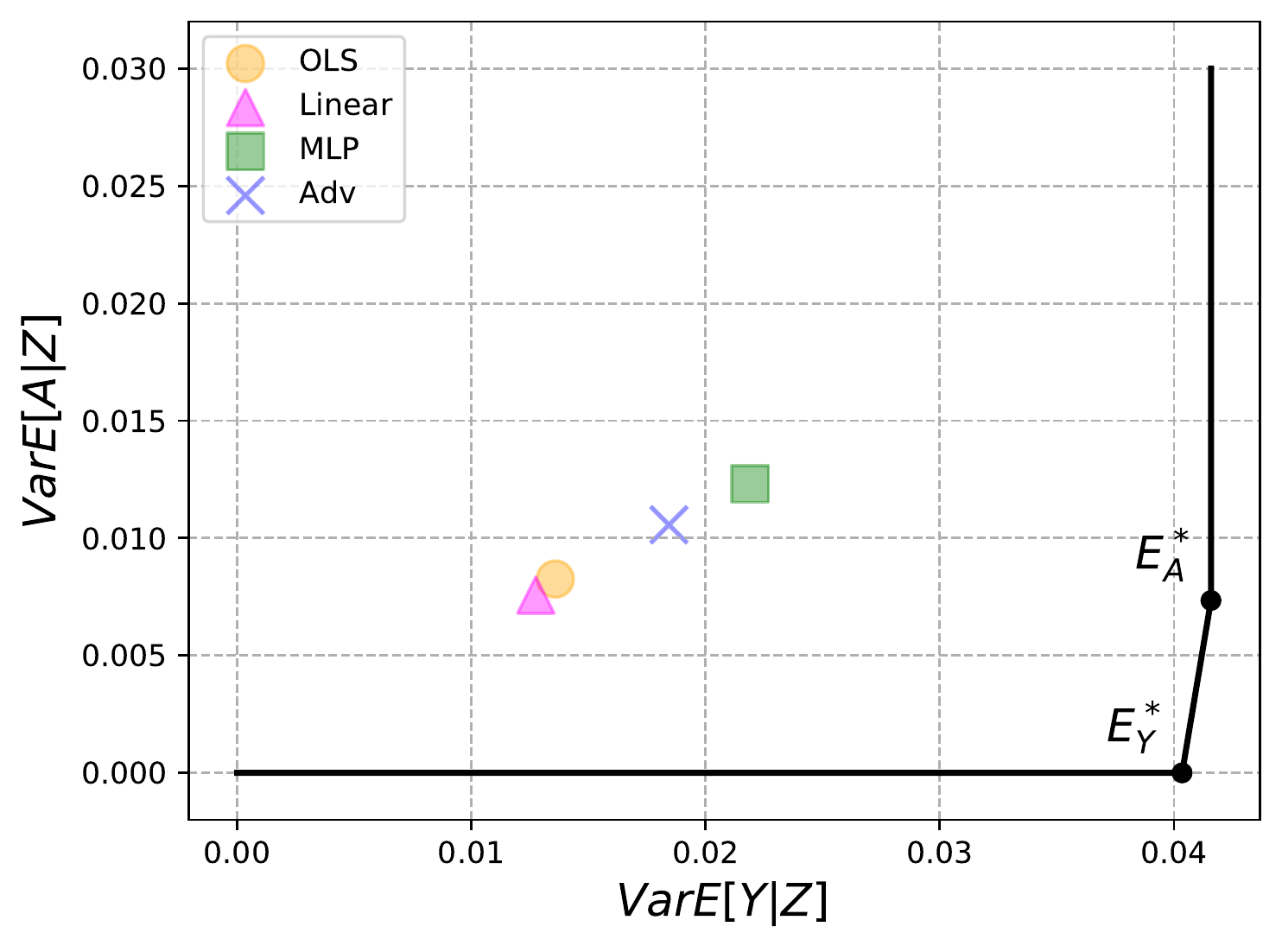}
    \caption{The information plane of OLS, Linear, MLP and Adv on the Law School dataset. On this plane, one method is strictly dominating another if it lies at the bottom-right side of the latter.}
    \label{fig:law}
\end{figure}
\begin{table}[htb]
\centering
\caption{Numerical results on the \emph{Law School} dataset with OLS, Linear, MLP and Adv. For $\Var\E[Y|Z]$, the larger the better. For $\Var\E[A|Z]$, the smaller the better.}
\label{tab:law}
    \begin{tabular}{*7c}\toprule
        & Lower Bound & OLS & Linear & MLP & Adv & Upper Bound \\\midrule
    $\Var\E[Y|Z]$ &  0 & 0.014 & 0.013 & 0.022 & 0.018 & $\opty:$ 0.040 \\\midrule 
    $\Var\E[A|Z]$ &  $\opta:$ 0.007 & 0.008 & 0.008 & 0.012 & 0.010 & $\Var(A):$ 0.248 \\\bottomrule
    \end{tabular}
\end{table}


\section{Conclusion}
In this paper we provide an information plane analysis to study the general and important problem for learning invariant representations in both classification and regression settings. In both cases, we analyze the inherent tradeoffs between accuracy and invariance by providing a geometric characterization of the feasible region on the information plane, in terms of its boundedness, convexity, as well as its its extremal vertices. Furthermore, in the regression setting, we also derive a tight lower bound that for the Lagrangian form of accuracy and invariance, which leads to a complete characterization of the frontier between accuracy and invariance. Given the wide applications of invariant representations in machine learning, we believe our theoretical results could contribute to better understandings of the fundamental tradeoffs between accuracy and invariance under various settings, e.g., domain adaptation, algorithmic fairness, invariant visual representations, and privacy-preservation learning. Furthermore, our analysis on the construction of randomized features also provides a simple way to construct features that can interpolate between existing invariances and accuracies.

So far we mainly focus on the settings where the two attributes of interest are both discrete or continuous. One interesting direction for future work is to consider the mix of these two. For example, could we give similar analytic characterization on the optimal solution of the four constrained problems when one of them is discrete and the other is continuous?

\acks{HZ would like to thank Alexander Kozachinskiy for the discussion in proving Theorem~\ref{thm:maxmi}. HZ and GG would like to acknowledge support from the DARPA XAI project, contract \#FA87501720152 and a Nvidia GPU grant. HZ also thanks the support from a Facebook research award. CD and PR would like to acknowledge the support of NSF via IIS-1955532, IIS-1909816, OAC-1934584, and ONR via N000141812861. CD would like to thank Liu Leqi for many helpful discussions in the early stage of this project.}

\bibliography{reference}

\begin{thebibliography}{46}
\providecommand{\natexlab}[1]{#1}
\providecommand{\url}[1]{\texttt{#1}}
\expandafter\ifx\csname urlstyle\endcsname\relax
  \providecommand{\doi}[1]{doi: #1}\else
  \providecommand{\doi}{doi: \begingroup \urlstyle{rm}\Url}\fi

\bibitem[Aharoni et~al.(2019)Aharoni, Johnson, and Firat]{aharoni2019massively}
Roee Aharoni, Melvin Johnson, and Orhan Firat.
\newblock Massively multilingual neural machine translation.
\newblock \emph{arXiv preprint arXiv:1903.00089}, 2019.

\bibitem[Albuquerque et~al.(2019)Albuquerque, Monteiro, Darvishi, Falk, and
  Mitliagkas]{albuquerque2019generalizing}
Isabela Albuquerque, Jo{\~a}o Monteiro, Mohammad Darvishi, Tiago~H Falk, and
  Ioannis Mitliagkas.
\newblock Generalizing to unseen domains via distribution matching.
\newblock \emph{arXiv preprint arXiv:1911.00804}, 2019.

\bibitem[Anselmi et~al.(2016{\natexlab{a}})Anselmi, Leibo, Rosasco, Mutch,
  Tacchetti, and Poggio]{anselmi2016unsupervised}
Fabio Anselmi, Joel~Z Leibo, Lorenzo Rosasco, Jim Mutch, Andrea Tacchetti, and
  Tomaso Poggio.
\newblock Unsupervised learning of invariant representations.
\newblock \emph{Theoretical Computer Science}, 633:\penalty0 112--121,
  2016{\natexlab{a}}.

\bibitem[Anselmi et~al.(2016{\natexlab{b}})Anselmi, Rosasco, and
  Poggio]{anselmi2016invariance}
Fabio Anselmi, Lorenzo Rosasco, and Tomaso Poggio.
\newblock On invariance and selectivity in representation learning.
\newblock \emph{Information and Inference: A Journal of the IMA}, 5\penalty0
  (2):\penalty0 134--158, 2016{\natexlab{b}}.

\bibitem[Asuncion and Newman(2007)]{asuncion2007uci}
Arthur Asuncion and David Newman.
\newblock Uci machine learning repository, 2007.

\bibitem[Ben-David et~al.(2007)Ben-David, Blitzer, Crammer, and
  Pereira]{ben2007analysis}
Shai Ben-David, John Blitzer, Koby Crammer, and Fernando Pereira.
\newblock Analysis of representations for domain adaptation.
\newblock In \emph{Advances in neural information processing systems}, pages
  137--144, 2007.

\bibitem[Ben-David et~al.(2010)Ben-David, Blitzer, Crammer, Kulesza, Pereira,
  and Vaughan]{ben2010theory}
Shai Ben-David, John Blitzer, Koby Crammer, Alex Kulesza, Fernando Pereira, and
  Jennifer~Wortman Vaughan.
\newblock A theory of learning from different domains.
\newblock \emph{Machine learning}, 79\penalty0 (1-2):\penalty0 151--175, 2010.

\bibitem[Chen et~al.(2020)Chen, Kornblith, Norouzi, and Hinton]{chen2020simple}
Ting Chen, Simon Kornblith, Mohammad Norouzi, and Geoffrey Hinton.
\newblock A simple framework for contrastive learning of visual
  representations.
\newblock In \emph{International conference on machine learning}, pages
  1597--1607. PMLR, 2020.

\bibitem[Chi et~al.(2021)Chi, Tian, Gordon, and Zhao]{chi2021understanding}
Jianfeng Chi, Yuan Tian, Geoffrey~J Gordon, and Han Zhao.
\newblock Understanding and mitigating accuracy disparity in regression.
\newblock \emph{arXiv preprint arXiv:2102.12013}, 2021.

\bibitem[Coavoux et~al.(2018)Coavoux, Narayan, and Cohen]{coavoux2018privacy}
Maximin Coavoux, Shashi Narayan, and Shay~B Cohen.
\newblock Privacy-preserving neural representations of text.
\newblock In \emph{Proceedings of the 2018 Conference on Empirical Methods in
  Natural Language Processing}, pages 1--10, 2018.

\bibitem[Deng et~al.(2020)Deng, Ding, Dwork, Hong, Parmigiani, Patil, and
  Sur]{deng2020representation}
Zhun Deng, Frances Ding, Cynthia Dwork, Rachel Hong, Giovanni Parmigiani,
  Prasad Patil, and Pragya Sur.
\newblock Representation via representations: Domain generalization via
  adversarially learned invariant representations.
\newblock \emph{arXiv preprint arXiv:2006.11478}, 2020.

\bibitem[Dong et~al.(2022)Dong, Zhou, Wang, and Zhao]{dong2022algorithms}
Jing Dong, Shiji Zhou, Baoxiang Wang, and Han Zhao.
\newblock Algorithms and theory for supervised gradual domain adaptation.
\newblock \emph{arXiv preprint arXiv:2204.11644}, 2022.

\bibitem[Dutta et~al.(2019)Dutta, Wei, Yueksel, Chen, Liu, and
  Varshney]{dutta2019information}
Sanghamitra Dutta, Dennis Wei, Hazar Yueksel, Pin-Yu Chen, Sijia Liu, and
  Kush~R Varshney.
\newblock An information-theoretic perspective on the relationship between
  fairness and accuracy.
\newblock \emph{arXiv preprint arXiv:1910.07870}, 2019.

\bibitem[Edwards and Storkey(2015)]{edwards2015censoring}
Harrison Edwards and Amos Storkey.
\newblock Censoring representations with an adversary.
\newblock \emph{arXiv preprint arXiv:1511.05897}, 2015.

\bibitem[Ganin et~al.(2016)Ganin, Ustinova, Ajakan, Germain, Larochelle,
  Laviolette, Marchand, and Lempitsky]{ganin2016domain}
Yaroslav Ganin, Evgeniya Ustinova, Hana Ajakan, Pascal Germain, Hugo
  Larochelle, Fran{\c{c}}ois Laviolette, Mario Marchand, and Victor Lempitsky.
\newblock Domain-adversarial training of neural networks.
\newblock \emph{The Journal of Machine Learning Research}, 17\penalty0
  (1):\penalty0 2096--2030, 2016.

\bibitem[Gao et~al.(2017)Gao, Kannan, Oh, and Viswanath]{gao2017estimating}
Weihao Gao, Sreeram Kannan, Sewoong Oh, and Pramod Viswanath.
\newblock Estimating mutual information for discrete-continuous mixtures.
\newblock In \emph{Proceedings of the 31st Neural Information Processing
  Systems}, pages 5988--5999, 2017.

\bibitem[Hamm(2015)]{hamm2015preserving}
Jihun Hamm.
\newblock Preserving privacy of continuous high-dimensional data with minimax
  filters.
\newblock In \emph{Artificial Intelligence and Statistics}, pages 324--332,
  2015.

\bibitem[Hamm(2017)]{hamm2017minimax}
Jihun Hamm.
\newblock Minimax filter: learning to preserve privacy from inference attacks.
\newblock \emph{The Journal of Machine Learning Research}, 18\penalty0
  (1):\penalty0 4704--4734, 2017.

\bibitem[He et~al.(2020)He, Fan, Wu, Xie, and Girshick]{he2020momentum}
Kaiming He, Haoqi Fan, Yuxin Wu, Saining Xie, and Ross Girshick.
\newblock Momentum contrast for unsupervised visual representation learning.
\newblock In \emph{Proceedings of the IEEE/CVF Conference on Computer Vision
  and Pattern Recognition}, pages 9729--9738, 2020.

\bibitem[Johansson et~al.(2016)Johansson, Shalit, and
  Sontag]{johansson2016learning}
Fredrik Johansson, Uri Shalit, and David Sontag.
\newblock Learning representations for counterfactual inference.
\newblock In \emph{International conference on machine learning}, pages
  3020--3029, 2016.

\bibitem[Johansson et~al.(2020)Johansson, Shalit, Kallus, and
  Sontag]{johansson2020generalization}
Fredrik~D Johansson, Uri Shalit, Nathan Kallus, and David Sontag.
\newblock Generalization bounds and representation learning for estimation of
  potential outcomes and causal effects.
\newblock \emph{arXiv preprint arXiv:2001.07426}, 2020.

\bibitem[Johnson et~al.(2017)Johnson, Schuster, Le, Krikun, Wu, Chen, Thorat,
  Vi{\'e}gas, Wattenberg, Corrado, et~al.]{johnson2017google}
Melvin Johnson, Mike Schuster, Quoc~V Le, Maxim Krikun, Yonghui Wu, Zhifeng
  Chen, Nikhil Thorat, Fernanda Vi{\'e}gas, Martin Wattenberg, Greg Corrado,
  et~al.
\newblock Google’s multilingual neural machine translation system: Enabling
  zero-shot translation.
\newblock \emph{Transactions of the Association for Computational Linguistics},
  5:\penalty0 339--351, 2017.

\bibitem[Khosla et~al.(2020)Khosla, Teterwak, Wang, Sarna, Tian, Isola,
  Maschinot, Liu, and Krishnan]{khosla2020supervised}
Prannay Khosla, Piotr Teterwak, Chen Wang, Aaron Sarna, Yonglong Tian, Phillip
  Isola, Aaron Maschinot, Ce~Liu, and Dilip Krishnan.
\newblock Supervised contrastive learning.
\newblock \emph{Advances in Neural Information Processing Systems}, 33, 2020.

\bibitem[Li et~al.(2018)Li, Tian, Gong, Liu, Liu, Zhang, and Tao]{li2018deep}
Ya~Li, Xinmei Tian, Mingming Gong, Yajing Liu, Tongliang Liu, Kun Zhang, and
  Dacheng Tao.
\newblock Deep domain generalization via conditional invariant adversarial
  networks.
\newblock In \emph{Proceedings of the European Conference on Computer Vision
  (ECCV)}, pages 624--639, 2018.

\bibitem[Madras et~al.(2018)Madras, Creager, Pitassi, and
  Zemel]{madras2018learning}
David Madras, Elliot Creager, Toniann Pitassi, and Richard Zemel.
\newblock Learning adversarially fair and transferable representations.
\newblock \emph{arXiv preprint arXiv:1802.06309}, 2018.

\bibitem[Mallat(2012)]{mallat2012group}
St{\'e}phane Mallat.
\newblock Group invariant scattering.
\newblock \emph{Communications on Pure and Applied Mathematics}, 65\penalty0
  (10):\penalty0 1331--1398, 2012.

\bibitem[McNamara et~al.(2019)McNamara, Ong, and Williamson]{mcnamara2019costs}
Daniel McNamara, Cheng~Soon Ong, and Robert~C Williamson.
\newblock Costs and benefits of fair representation learning.
\newblock In \emph{Proceedings of the 2019 AAAI/ACM Conference on AI, Ethics,
  and Society}, pages 263--270, 2019.

\bibitem[Menon and Williamson(2018)]{menon2018cost}
Aditya~Krishna Menon and Robert~C Williamson.
\newblock The cost of fairness in binary classification.
\newblock In \emph{Conference on Fairness, Accountability and Transparency},
  pages 107--118, 2018.

\bibitem[Muandet et~al.(2013)Muandet, Balduzzi, and
  Sch{\"o}lkopf]{muandet2013domain}
Krikamol Muandet, David Balduzzi, and Bernhard Sch{\"o}lkopf.
\newblock Domain generalization via invariant feature representation.
\newblock In \emph{International Conference on Machine Learning}, pages 10--18,
  2013.

\bibitem[Nguyen et~al.(2021)Nguyen, Tran, Gal, and Baydin]{nguyen2021domain}
A~Tuan Nguyen, Toan Tran, Yarin Gal, and Atilim~Gunes Baydin.
\newblock Domain invariant representation learning with domain density
  transformations.
\newblock \emph{Advances in Neural Information Processing Systems},
  34:\penalty0 5264--5275, 2021.

\bibitem[Quiroga et~al.(2005)Quiroga, Reddy, Kreiman, Koch, and
  Fried]{quiroga2005invariant}
R~Quian Quiroga, Leila Reddy, Gabriel Kreiman, Christof Koch, and Itzhak Fried.
\newblock Invariant visual representation by single neurons in the human brain.
\newblock \emph{Nature}, 435\penalty0 (7045):\penalty0 1102--1107, 2005.

\bibitem[Ross(2014)]{ross2014mutual}
Brian~C Ross.
\newblock Mutual information between discrete and continuous data sets.
\newblock \emph{PloS one}, 9\penalty0 (2):\penalty0 e87357, 2014.

\bibitem[Shalit et~al.(2017)Shalit, Johansson, and
  Sontag]{shalit2017estimating}
Uri Shalit, Fredrik~D Johansson, and David Sontag.
\newblock Estimating individual treatment effect: generalization bounds and
  algorithms.
\newblock In \emph{Proceedings of the 34th International Conference on Machine
  Learning-Volume 70}, pages 3076--3085. JMLR. org, 2017.

\bibitem[Tishby and Zaslavsky(2015)]{tishby2015deep}
Naftali Tishby and Noga Zaslavsky.
\newblock Deep learning and the information bottleneck principle.
\newblock In \emph{2015 IEEE Information Theory Workshop (ITW)}, pages 1--5.
  IEEE, 2015.

\bibitem[Tishby et~al.(2000)Tishby, Pereira, and Bialek]{tishby2000information}
Naftali Tishby, Fernando~C Pereira, and William Bialek.
\newblock The information bottleneck method.
\newblock \emph{arXiv preprint physics/0004057}, 2000.

\bibitem[Valiant(1984)]{valiant1984theory}
Leslie~G Valiant.
\newblock A theory of the learnable.
\newblock \emph{Communications of the ACM}, 27\penalty0 (11):\penalty0
  1134--1142, 1984.

\bibitem[Wightman(1998)]{wightman1998lsac}
Linda~F Wightman.
\newblock Lsac national longitudinal bar passage study. lsac research report
  series.
\newblock 1998.

\bibitem[Wu and Yang(2016)]{wu2016minimax}
Yihong Wu and Pengkun Yang.
\newblock Minimax rates of entropy estimation on large alphabets via best
  polynomial approximation.
\newblock \emph{IEEE Transactions on Information Theory}, 62\penalty0
  (6):\penalty0 3702--3720, 2016.

\bibitem[Xiao et~al.(2019)Xiao, Tsai, Sohn, Chandraker, and
  Yang]{xiao2019adversarial}
Taihong Xiao, Yi-Hsuan Tsai, Kihyuk Sohn, Manmohan Chandraker, and Ming-Hsuan
  Yang.
\newblock Adversarial learning of privacy-preserving and task-oriented
  representations.
\newblock \emph{arXiv preprint arXiv:1911.10143}, 2019.

\bibitem[Zemel et~al.(2013)Zemel, Wu, Swersky, Pitassi, and
  Dwork]{zemel2013learning}
Rich Zemel, Yu~Wu, Kevin Swersky, Toni Pitassi, and Cynthia Dwork.
\newblock Learning fair representations.
\newblock In \emph{International Conference on Machine Learning}, pages
  325--333, 2013.

\bibitem[Zhang et~al.(2018)Zhang, Lemoine, and Mitchell]{zhang2018mitigating}
Brian~Hu Zhang, Blake Lemoine, and Margaret Mitchell.
\newblock Mitigating unwanted biases with adversarial learning.
\newblock In \emph{Proceedings of the 2018 AAAI/ACM Conference on AI, Ethics,
  and Society}, pages 335--340, 2018.

\bibitem[Zhao and Gordon(2019)]{zhao2019inherent}
Han Zhao and Geoff Gordon.
\newblock Inherent tradeoffs in learning fair representations.
\newblock In \emph{Advances in neural information processing systems}, pages
  15675--15685, 2019.

\bibitem[Zhao et~al.(2018)Zhao, Zhang, Wu, Moura, Costeira, and
  Gordon]{zhao2018adversarial}
Han Zhao, Shanghang Zhang, Guanhang Wu, Jos{\'e}~MF Moura, Joao~P Costeira, and
  Geoffrey~J Gordon.
\newblock Adversarial multiple source domain adaptation.
\newblock In \emph{Advances in neural information processing systems}, pages
  8559--8570, 2018.

\bibitem[Zhao et~al.(2019{\natexlab{a}})Zhao, Combes, Zhang, and
  Gordon]{zhao2019learning}
Han Zhao, Remi Tachet~des Combes, Kun Zhang, and Geoffrey~J Gordon.
\newblock On learning invariant representation for domain adaptation.
\newblock \emph{arXiv preprint arXiv:1901.09453}, 2019{\natexlab{a}}.

\bibitem[Zhao et~al.(2019{\natexlab{b}})Zhao, Coston, Adel, and
  Gordon]{zhao2019conditional}
Han Zhao, Amanda Coston, Tameem Adel, and Geoffrey~J Gordon.
\newblock Conditional learning of fair representations.
\newblock \emph{arXiv preprint arXiv:1910.07162}, 2019{\natexlab{b}}.

\bibitem[Zhao et~al.(2020)Zhao, Hu, and Risteski]{zhao2020learning}
Han Zhao, Junjie Hu, and Andrej Risteski.
\newblock On learning language-invariant representations for universal machine
  translation.
\newblock \emph{arXiv preprint arXiv:2008.04510}, 2020.

\end{thebibliography}

\newpage
\newpage 
\appendix


\section{Proofs of Claims in Section~\ref{sec:feasible_region}}
\label{sec:details}
In this section we give detailed arguments to derive the objective functions of Eq.~\eqref{equ:mi} and~\eqref{equ:var} respectively from the original minimax formulation. First, let us consider the classification setting. 

\paragraph{Classification}
Given a fixed feature map $Z = g(X)$, due to the symmetry between $Y$ and $A$, it suffices for us to consider the case of finding $f$ that minimizes $\Exp_{\dist}[\ell(f\circ g (X), Y)]$, and analogous result follows for the case of finding the optimal $f'$ that minimizes $\Exp_{\dist}[\ell(f'\circ g (X), A)]$ similarly. By definition of the cross-entropy loss, we have:
\begin{align*}
    \Exp_{\dist}[\ell(f\circ g (X), Y)] &= -\Exp_\dist \left[\ind(Y = 0)\log(1-f(g(X))) + \ind(Y=1)\log(g(f(X)))\right] \\
    &= -\Exp_{\dist} \left[\ind(Y = 0)\log(1-f(Z)) + \ind(Y=1)\log(f(Z))\right] \\
    &= -\Exp_Z \Exp_{Y} \left[\ind(Y = 0)\log(1-f(Z)) + \ind(Y=1)\log(f(Z))\mid Z\right] \\
    &= -\Exp_Z  \left[\Pr(Y = 0\mid Z)\log(1-f(Z)) + \Pr(Y=1\mid Z)\log(f(Z))\right] \\
    &= \Exp_Z\left[KL(\Pr(Y\mid Z)~ \|~ f(Z))\right] + H(Y\mid Z) \\
    &\geq H(Y\mid Z).
\end{align*}
It is also clear from the above proof that the minimum value of the cross-entropy loss is achieved when $f(Z)$ is a randomized classifier such that $\Exp[f(Z)] = \Pr(Y=1\mid Z)$. This shows that 
\begin{equation*}
    \min_f\Exp_{\dist}[\ell(f\circ g (X), Y)] = H(Y\mid Z),\quad\text{and}\quad \min_{f'}\Exp_{\dist}[\ell(f'\circ g (X), A)] = H(A\mid Z).
\end{equation*}
To see the second part of Eq.~\eqref{equ:mi}, simply use the identity that $H(Y\mid Z) = H(Y) - I(Y; Z)$ and $H(A\mid Z) = H(A) - I(A; Z)$ with the fact that both $H(Y)$ and $H(A)$ are constants that only depend on the joint distribution $\dist$.

\paragraph{Regression}
Again, given a fixed feature map $Z = g(X)$, because of the symmetry between $Y$ and $A$ let us focus on the analysis of finding $f$ that minimizes $\Exp_{\dist}[\ell(f\circ g (X), Y)]$. In this case since $\ell(\cdot, \cdot)$ is the mean squared error, it follows that 
\begin{align*}
    \Exp_{\dist}[\ell(f\circ g (X), Y)] &= \Exp_\dist \left[(f\circ g(X) - Y)^2\right] \\
    &= \Exp_{\dist} \left[(f(Z) - Y)^2\right] \\
    &= \Exp_Z\left[(f(Z) - \Exp[Y\mid Z])^2\right] + \Exp_Z\left[\Exp_Y[(Y - \Exp[Y\mid Z])^2]\right] \\
    &\geq \Exp_Z\left[\Exp_Y[(Y - \Exp[Y\mid Z])^2]\right] \\
    &= \Exp[\Var(Y\mid Z)],
\end{align*}
where the third equality is due to the Pythagorean theorem. Furthermore, it is clear that the optimal mean-squared error is obtained by the conditional mean $f(Z) = \Exp[Y\mid Z]$. This shows that 
\begin{equation*}
    \min_f\Exp_{\dist}[\ell(f\circ g (X), Y)] = \Exp[\Var(Y\mid Z)],\quad\text{and}\quad \min_{f'}\Exp_{\dist}[\ell(f'\circ g (X), A)] = \Exp[\Var(A\mid Z)].
\end{equation*}
For the second part, use the law of total variance $\Var(Y) = \Exp[\Var(Y\mid Z)] + \Var(\Exp[Y\mid Z])$ and $\Var(A) = \Exp[\Var(A\mid Z)] + \Var(\Exp[A\mid Z])$. Realizing that both $\Var(Y)$ and $\Var(A)$ are constants that only depend on the joint distribution $\dist$, we finish the proof. 

\section{Missing Proofs in Classification (Section~\ref{sec:discrete})}
In what follows we first restate the propositions, lemmas and theorems in the main text, and then provide the corresponding proofs. 

\subsection{Convexity of $\iregion$}
\iconvex*
\begin{proof}
Let $Z_i = g_i(X)$ for $i\in\{0, 1\}$ with corresponding points $(I(Y;Z_i), I(A; Z_i))\in\iregion$. Then we only need to prove that for $\forall u\in[0, 1]$, $(uI(Y;Z_0) + (1-u)I(Y; Z_1), uI(A;Z_0) + (1-u)I(A;Z_1))\in\iregion$ as well.
For any $u\in[0, 1]$, let $S\sim U(0,1)$, the uniform distribution over $(0,1)$, such that $S\perp (Y, A)$. Consider the following randomized transformation $Z = (Z', S)$ where
\begin{equation}
    Z' = \begin{cases}
    Z_0 & \text{If } S \leq u, \\
    Z_1 & \text{otherwise}.
    \end{cases}
\end{equation}
To compute $I(Y; Z)$, we have:
\begin{align*}
    I(Y; Z) =&~ I(Y; Z', S) \\
    =&~ I(S; Y) + I(Z'; Y\mid S) \\
    =&~ I(Z';Y\mid S)  \tag{$S$ is independent of $Y$} \\  
    =&~ \Pr(S\leq u)\cdot I(Y; Z_0) + \Pr(S > u)\cdot I(Y; Z_1) \\
    =&~ u I(Y; Z_0) + (1-u) I(Y;Z_1).
\end{align*}
Similar argument could be used to show that $I(A; Z) = u I(A; Z_0) + (1-u) I(A; Z_1)$. So by construction we now find a randomized transformation $Z = g(X)$ such that $(uI(Y;Z_0) + (1-u)I(Y; Z_1), uI(A;Z_0) + (1-u)I(A;Z_1))\in\iregion$.
\end{proof}

\subsection{Proof of Theorem~\ref{thm:maxmi}}
We proceed to provide the proof that the optimal value of~\eqref{equ:maxmi} is the one given by Theorem~\ref{thm:maxmi}.
\maxmi*
\begin{proof}
For a joint distribution $\dist$ over $(X, A, Y)$ and a function $g:\xxspace\to\zzspace$, in what follows we use $g_\sharp\dist$ to denote the induced distribution of $\dist$ under $g$ over $(Z, A, Y)$. We first make the following claim: without loss of generality, for any joint distribution $g_\sharp\dist$ over $(Z, A, Y)$, we could find $(Z_0, A', Y')\sim g_\sharp\dist$ and a deterministic function $f$, such that $Y' = f(A', Z_0, S)$ where $S\sim U(0, 1)$, $S\perp (A', Z_0)$ and $I(Y'; Z')\geq I(Y; Z)$ with $Z' = (Z_0, S)$. To see this, consider the following construction:
\begin{equation*}
    A', Z_0 \sim \dist(A, Z), \quad S\sim U(0, 1). 
\end{equation*}
Let $(a, z, s)$ be the sample of the above sampling process and construct
\begin{equation*}
    Y' = \begin{cases}
    1 & \text{If $s\leq \Exp[Y\mid A = a, Z = z]$}, \\
    0 & \text{Otherwise}.
    \end{cases}
\end{equation*}
Now it is easy to verify that $(Z_0, A', Y')\sim g_\sharp\dist$ and $\Pr(Y' = 1\mid A' = a, Z_0 = z) = \Exp[Y\mid A = a, Z = z]$. To see the last claim, we have the following inequality hold:
\begin{equation*}
    I(Y'; Z') = I(Y'; Z_0, S) \geq I(Y; Z_0) = I(Y; Z).
\end{equation*}
Now to upper bound $I(Y; Z)$, we have
\begin{equation*}
    I(Y; Z) = H(Y) - H(Y\mid Z),
\end{equation*}
hence it suffices to lower bound $H(Y\mid Z)$. To this end, define
\begin{align*}
    D_0 &\defeq \{z, \eps\in(0, 1)\mid f(0, z, \eps) = 1\}, \\
    D_1 &\defeq \{z, \eps\in(0, 1)\mid f(1, z, \eps) = 1\}.
\end{align*}
Then,
\begin{align*}
    \Pr((z, \eps)\in D_0) &= \Pr(f(0, z, \eps) = 1) \\
                          &= \Pr(f(0, z, \eps) = 1\mid A = 0) \\
                          &= \Pr(f(A, z, \eps) = 1\mid A = 0) \\
                          &= \Pr(Y = 1\mid A = 0).
\end{align*}        
Analogously, the following equation also holds:
\begin{equation*}
    \Pr((z, \eps)\in D_1) = \Pr(Y = 1\mid A = 1).
\end{equation*}
Without loss of generality, assume that $\Pr(Y = 1\mid A = 1)\geq\Pr(Y =1 \mid A = 0)$, then 
\begin{align*}
    \Pr((z, \eps)\in D_1\backslash D_0) &\geq \Pr((z, \eps)\in D_1) - \Pr((z, \eps)\in D_0) \\ 
                                        &= |\Pr(Y =1 \mid A = 1) - \Pr(Y = 1\mid A = 0)|.
\end{align*}
But on the other hand, we know that if $(z, \eps)\in D_1\backslash D_0$, then $f(1, z, \eps) = 1$ and $f(0, z, \eps) = 0$, and this implies that $Y = A$, hence:
\begin{align*}
    H(Y\mid Z) &\geq H(Y\mid Z, S) \\
               &= \Pr((z, \eps)\in D_1\backslash D_0)\cdot H(Y\mid (z, \eps)\in D_1\backslash D_0) + \Pr((z, \eps)\not\in D_1\backslash D_0)\cdot H(Y\mid (z, \eps)\not\in D_1\backslash D_0) \\
               &\geq \Pr((z, \eps)\in D_1\backslash D_0)\cdot H(Y\mid (z, \eps)\in D_1\backslash D_0) \\
               &= \Pr((z, \eps)\in D_1\backslash D_0)\cdot H(A) \\
               &\geq |\Pr(Y =1 \mid A = 1) - \Pr(Y = 1\mid A = 0)|\cdot H(A),
\end{align*}    
which implies that 
\begin{equation*}
    I(Y; Z)\leq H(Y) - |\Pr(Y =1 \mid A = 1) - \Pr(Y = 1\mid A = 0)|\cdot H(A) = H(Y)-\deltaya \cdot H(A).
\end{equation*}
To see that the upper bound could be attained, let us consider the following construction. Denote $\alpha\defeq \Pr(Y = 1\mid A = 0)$ and $\beta \defeq \Pr(Y = 1\mid A = 1)$. Construct a uniformly random $Z\sim U(0, 1)$ and then sample $A$ independently from $Z$ according to the corresponding marginal distribution of $A$ in $\dist$. Next, define:
\begin{equation*}
    Y = \begin{cases}
    1 & \text{if } Z\leq \alpha \wedge A = 0 \text{ or } Z\leq\beta\wedge A = 1, \\
    0 & \text{otherwise}.
    \end{cases}
\end{equation*}
It is easy to see that $Z\perp A$ by construction. Furthermore, by the construction of $Y$, we also have $A, Y\sim\dist(A, Y)$ hold. Since $I(Y; Z) = H(Y) - H(Y\mid Z)$, we only need to verify $H(Y\mid Z) = \deltaya\cdot H(A)$ in this case. Assume without loss of generality $\alpha\leq\beta$, there are three different cases depending on the value of $Z$:
\begin{itemize}
    \item   $Z\leq \alpha$: In this case no matter what the value of $A$, we always have $Y = 1$.
    \item   $Z > \beta$: In this case no matter what the value of $A$, we always have $Y = 0$.
    \item   $\alpha < Z\leq\beta$: In this case $Y = A$, hence the conditional distribution of $Y$ given $Z\in (\alpha, \beta]$ is equal to the conditional distribution of $A$ given $Z\in(\alpha, \beta]$. But by our construction, $A$ is independent of $Z$, which means that in this case the conditional distribution of $A$ given $Z\in (\alpha, \beta]$ is just the distribution of $A$. 
\end{itemize}
Combine all the above three cases, we have:
\begin{align*}
    H(Y\mid Z) &= \Pr(Z\leq \alpha)\cdot H(Y\mid Z \leq \alpha) + \Pr(Z > \beta)\cdot H(Y\mid Z > \beta) + \Pr(\alpha < Z \leq \beta)\cdot H(Y\mid \alpha < Z \leq \beta) \\
               &= 0 + 0 + |\beta - \alpha|\cdot H(A\mid \alpha< Z\leq\beta) \\
               &= |\Pr(Y = 1\mid A = 1) - \Pr(Y =1 \mid A = 0)|\cdot H(A) \\
               &= \deltaya\cdot H(A),
\end{align*}
which completes the proof.
\end{proof}

\subsection{Proof of Theorem~\ref{thm:minmi}}
\minmi*
\begin{proof}
First, realize that $H(Z) \geq I(Y;Z) = H(Y)$ by our constraint. Furthermore, we also know that $0 \leq H(Y\mid Z, A) \leq H(Y\mid Z) = H(Y) - I(Y;Z) = 0$, which means $H(Y\mid Z, A) = 0$. With these two observations, we have:
\begin{align*}
    I(A;Z) =&~ H(Z) - H(Z\mid A) \\
        \geq&~ H(Y) - H(Z\mid A) \\
        \geq&~ H(Y) - H(Y, Z\mid A) \\
        =&~ H(Y) - H(Y\mid A) - H(Y\mid Z, A) \\
        =&~ H(Y) - H(Y\mid A) \\
        =&~ I(A;Y).
\end{align*}
To attain the equality, simply set $Z = f^*_Y(X) = Y$. Specifically, this implies that 1-bit is sufficient to encode all the information for the optimal solution, which completes the proof.
\end{proof}

\subsection{Missing Proofs in Section~\ref{sec:discrete:bounds} and Section~\ref{sec:cls:application}}
In what follows we provide proofs for Corollary~\ref{cor:discrete:bounds} and Corollary~\ref{cor:subopt:discrete}.
\distbounds*
\begin{proof}
For the first inequality, let $\ell(\cdot, \cdot)$ be the cross-entropy loss. If $I(A;Z) = 0$, then by Theorem~\ref{thm:maxmi}
\begin{align*}
    \inf_h~\Exp[\ell(Y, h(Z))] 
    &= H(Y\mid Z) = H(Y) - I(Y;Z) \\
    &= H(Y) - (H(Y) - \deltaya\cdot H(A)) \\
    &= \deltaya\cdot H(A).
\end{align*}

Similarly, if the feature $Z$ preserves all the information w.r.t.\ $Y$, e.g., $I(Y;Z) = H(Y)$, Theorem~\ref{thm:minmi} immediately implies the following bound, 
\begin{align*}
    \inf_h~\Exp[\ell(A, h(Z))] &= H(A\mid Z) = H(A) - I(A; Z) \\
    &= H(A) - I(A; Y) = H(A\mid Y).
\end{align*}
\end{proof}

\suboptdist*
\begin{proof}
From Fig.~\ref{fig:mi3}, the line connecting the points $\opty$ and $\opta$ is given by:
\begin{equation}
I(A;Z) = \frac{I(A;Y)}{\deltaya\cdot H(A)}\big(I(Y;Z) - (H(Y) - \deltaya\cdot H(A))\big).    
\end{equation}
Hence if 
\begin{equation*}
    \frac{I(A;Z)}{I(A;Y)} > 1 - \frac{H(Y\given Z)}{\deltaya\cdot H(A)},
\end{equation*}
then the point $(I(Y;Z), I(A;Z))$ lies strictly on the upper-left side of the line connecting $\opty$ and $\opta$, hence it is suboptimal. 
\end{proof}

In what follows we provide a proof sketch for Proposition~\ref{prop:finite_sample}.
\certify*
\begin{proof}
	It suffices to provide an estimator for 
	\begin{equation}
		\frac{I(A;Z)}{I(A;Y)} + \frac{H(Y\given Z)}{\deltaya\cdot H(A)}
	\end{equation}
	 Since $\deltaya = |\Pr_\dist(Y=1|A=0) -\Pr_\dist(Y=1|A=1) | $ is the difference of two conditional probabilities, for which we can simply estimate with a counting estimator that converges at rate $O(1/\sqrt{n})$.
	
	Using the entropy estimators of \citep{wu2016minimax}, we can estimate all the quantities $H(Y), H(A)$, $H(Z), H(A,Z)$, and $H(A,Y), H(Y,Z)$ up to an $O(1/\sqrt{n})$ additive error.
	Notice that:
	\begin{align*}
		I(A;Z) &= H(A) + H(Z) - H(A, Z) \\
		I(A;Y) &= H(A) + H(Y) - H(A, Y) \\
		H(Y\given Z) &=  H(Y,Z) - H(Y)
	\end{align*}
	Therefore, $I(A;Z), I(A;Y), H(Y\given Z)$ can also be estimated with an $O(1/\sqrt{n})$ additive error. By our assumption, the denominators $I(A;Y), H(A) $ are bound away from zero, therefore simply plug-in their estimator for $H(Y), H(A), H(Z), H(A,Z), H(A,Y), H(Y,Z)$ provides the desired estimator.
\end{proof}

\section{Missing Proofs in Regression (Section~\ref{sec:continuous})}
\label{app:regression}
\subsection{Convexity of $\cregion$}
Analogous to the classification setting, here we first show that the feasible region $\cregion$ is convex:
\rconvex*
\begin{proof}
Let $Z_i = g_i(X)$ for $i\in\{0, 1\}$ with corresponding points $(\Var\Exp[Y\mid Z_i], \Var\Exp[A\mid Z_i])\in\cregion$. Then it suffices if we could show for $\forall u\in[0, 1]$, $(u\Var\Exp[Y\mid Z_0] + (1-u)\Var\Exp[Y\mid Z_1], u\Var\Exp[A\mid Z_0]) + (1-u)\Var\Exp[A\mid Z_1])\in\cregion$ as well.

We give a constructive proof. Due to the symmetry between $A$ and $Y$, we will only prove the result for $Y$, and the same analysis could be directly applied to $A$ as well. For any $u\in[0, 1]$, let $S\sim U(0,1)$, the uniform distribution over $(0,1)$, such that $S\perp (Y, A)$. Consider the following randomized transformation $Z= (Z', S)$, where:
\begin{equation}
    Z' = \begin{cases}
    Z_0 & \text{If } S \leq u, \\
    Z_1 & \text{otherwise}.
    \end{cases}
\end{equation}
To compute $\Var\Exp[Y\mid Z]$, define $K\defeq \Exp[Y\mid Z]$, then by the law of total variance, we have:
\begin{equation*}
    \Var\Exp[Y\mid Z] = \Var(K) = \Exp[\Var(K\mid S)] + \Var\Exp[K\mid S].
\end{equation*}
We first compute $\Var\Exp[K\mid S]$:
\begin{align*}
    \Var\Exp[K\mid S] &= \Var\Exp[\Exp[Y\mid Z]\mid S] \\
                      &= \Var\Exp[Y\mid S] \tag{The law of total expectation} \\
                      &= \Var\Exp[Y] && \tag{$Y\perp S$} \\
                      &= 0.
\end{align*}
On the other hand, for $\Exp[\Var(K\mid S)]$, we have:
\begin{align*}
    \Exp[\Var(K\mid S)] &= \Pr(S = 0)\cdot\Var(K\mid S = 0) + \Pr(S = 1)\cdot \Var(K\mid S = 1) \\
                        &= u\cdot \Var(K\mid S = 0) + (1 - u)\cdot \Var(K\mid S = 1) \\
                        &= u\cdot \Var\Exp[Y\mid Z_0] + (1-u)\cdot \Var\Exp[Y\mid Z_1].
\end{align*}        
Combining both equations above yields:
\begin{equation*}
    \Var\Exp[Y\mid Z] = u\cdot \Var\Exp[Y\mid Z_0] + (1-u)\cdot \Var\Exp[Y\mid Z_1].
\end{equation*}
Similar argument could be used to show that $\Var\Exp[A\mid Z] = u\cdot \Var\Exp[A\mid Z_0] + (1-u)\cdot \Var\Exp[A\mid Z_1]$. So by construction we now find a randomized transformation $Z = g(X)$ such that $(u\cdot \Var\Exp[Y\mid Z_0] + (1-u)\cdot \Var\Exp[Y\mid Z_1], u\cdot \Var\Exp[A\mid Z_0] + (1-u)\cdot \Var\Exp[A\mid Z_1])\in\cregion$, which completes the proof.
\end{proof}

\subsection{Proof of Theorem~\ref{thm:maxvar}}
In this section, we will prove Theorem~\ref{thm:maxvar}. We will provide a proof in a generalized noisy setting, i.e., we no longer assume the noiseless condition (Assumption~\ref{assm:continuous:perfect}) so that the corresponding theorems in the noiseless setting follow as a special case. To this end, we first re-define 
\begin{align}
	 f_Y^*(X) &\defeq \E[Y\mid X]\\
	 f_A^*(X) &\defeq \E[A\mid X]
\end{align}
and $f_Y^*, f_A^* \in \mathbb{H}$. We reuse the notations $a, y$ to denote 
\begin{align}
f_Y^*(X) &= \E[Y|X] = \langle y, \phi(X) \rangle\\
f_A^*(X) &= \E[A|X] = \langle a, \phi(X) \rangle.
\end{align}
It is easy to see that the noiseless setting is indeed a special case where $Y = \E[Y|X], A = \E[A|X]$ almost surely.

For readers' convenience, we restate the Theorem \ref{thm:maxvar} below:
\maxvar*
The following theorem is the generalized version of Theorem \ref{thm:maxvar} in noisy setting:
\begin{restatable}{theorem}{maxvarnoisy}
	The optimal solution of optimization problem~\eqref{equ:maxvar} is upper bounded by
	\begin{equation}
	\underset{Z, \Var\Exp[A\mid Z] = 0}{\max} \Var\Exp[Y\mid Z] \le \Var\E[Y|X] - \frac{\Cov^2(\Exp[Y\mid X], \Exp[A\mid X])}{\Var\Exp[A\mid X]}.
	\end{equation}
	\label{thm:maxvar_noisy}
\end{restatable}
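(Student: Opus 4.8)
The plan is to relax the maximization over representations $Z$ to a semidefinite program over positive operators, and then solve that relaxation explicitly. Fix any (possibly randomized) $Z = g(X)$. By the tower property, $\E[Y\mid Z] = \E[\E[Y\mid X]\mid Z] = \langle y, \E[\phi(X)\mid Z]\rangle$ and similarly $\E[A\mid Z] = \langle a, \E[\phi(X)\mid Z]\rangle$. Writing $V \defeq \Var\E[\phi(X)\mid Z]$ for the covariance operator of $\E[\phi(X)\mid Z]$, this gives $\Var\E[Y\mid Z] = \langle y, V y\rangle$ and $\Var\E[A\mid Z] = \langle a, V a\rangle$. The law of total variance, applied to the covariance operator of $\phi(X)$, yields the identity $\Sigma = \E[\Var(\phi(X)\mid Z)] + V$, whence $0 \preceq V \preceq \Sigma$; and the invariance constraint $\Var\E[A\mid Z] = 0$ becomes $\langle a, V a\rangle = \| V^{1/2} a\|^2 = 0$, i.e.\ $V a = 0$. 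Consequently the optimal value of~\eqref{equ:maxvar} is at most the value of the relaxed program $\max_V \{\langle y, V y\rangle : 0 \preceq V \preceq \Sigma,\ V a = 0\}$.

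To bound the relaxed program, decompose $y$ $\Sigma$-orthogonally along $a$: write $y = \gamma a + y'$ with $\gamma \defeq \langle y, \Sigma a\rangle / \langle a, \Sigma a\rangle$, so that $\langle y', \Sigma a\rangle = 0$; here we may assume $\langle a, \Sigma a\rangle = \Var\E[A\mid X] > 0$, as otherwise the claimed bound is vacuous. Since $V a = 0$, the cross terms vanish and $\langle y, V y\rangle = \langle y', V y'\rangle \le \langle y', \Sigma y'\rangle$ by $V \preceq \Sigma$. Expanding $\langle y', \Sigma y'\rangle = \langle y - \gamma a, \Sigma(y - \gamma a)\rangle$ and substituting the value of $\gamma$ collapses this to $\langle y, \Sigma y\rangle - \langle y, \Sigma a\rangle^2 / \langle a, \Sigma a\rangle$. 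Finally, translating back via $\langle y, \Sigma y\rangle = \Var\E[Y\mid X]$, $\langle a, \Sigma a\rangle = \Var\E[A\mid X]$, and $\langle y, \Sigma a\rangle = \Cov(\E[Y\mid X], \E[A\mid X])$ yields exactly the stated upper bound.

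I expect the only real obstacle to be technical bookkeeping rather than conceptual: justifying the operator inequality $0 \preceq V \preceq \Sigma$ and the implication $\langle a, V a\rangle = 0 \Rightarrow V a = 0$ rigorously when $\phi$ is infinite-dimensional (so $V$ and $\Sigma$ are bounded positive operators, not matrices), and handling the degenerate case $\Var\E[A\mid X] = 0$; the decomposition step itself is elementary. Note that only the upper-bound direction is needed here --- attainability and tightness are treated separately under the regularity condition in Theorem~\ref{thm:achievable}. Specializing to the noiseless setting of Assumption~\ref{assm:continuous:perfect}, where $Y = \E[Y\mid X]$ and $A = \E[A\mid X]$ almost surely, the bound reduces to $\Var(Y) - \Cov(Y,A)^2/\Var(A) = \Var(Y)(1 - \rho_{YA}^2)$, recovering Theorem~\ref{thm:maxvar}.
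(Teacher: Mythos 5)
Your proposal is correct and follows essentially the same route as the paper: reduce to the SDP relaxation over $V=\Var\E[\phi(X)\mid Z]$ with $0\preceq V\preceq\Sigma$, kill the component of $y$ along $a$ using the invariance constraint, and bound the remainder by $V\preceq\Sigma$. The only cosmetic difference is that you perform the orthogonal decomposition directly in the $\Sigma$-weighted inner product, whereas the paper first whitens via $V'=\Sigma^{-1/2}V\Sigma^{-1/2}$ and decomposes in the standard inner product --- these are the same computation, though your version has the minor advantage of not presupposing that $\Sigma^{-1/2}$ exists.
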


It is easy to see Theorem~\ref{thm:maxvar} is an immediate corollary of this result: under the noiseless assumption, we have $\Var\E[Y|X] = \Var(Y)$ and $\Var\E[A|X] = \Var(A)$.
\begin{proof}
	
 	Using the law of total expectation, 
	\begin{align*}
	\E[Y\mid Z] = \Exp\left[\Exp[Y\mid X]\mid Z\right] = \int_\xxspace \E[Y\mid X,Z]\cdot p(X\mid Z)~dX.
	\end{align*}
	Since $Z=g(X)$ is a function of $X$, we have $Z \perp Y \mid X$, so $ \E[Y\mid X,Z] = \E[Y\mid X] = f_Y^*(X)$. Therefore,
	\begin{align*}
	\E[Y|Z]& = \int_\xxspace \E[Y\mid X,Z]\cdot p(X\mid Z)~dX \\
	&= \int_\xxspace f_Y^*(X)\cdot p(X\mid Z)~dX \\
	&= \E[f_Y^*(X) \mid Z].
	\end{align*}
	Hence,
	\begin{align}\label{eq:simplify_z}
	\Var\E[Y\mid Z] = \Var\E[f_Y^*(X) \mid Z].
	\end{align}
	Therefore, 
	\begin{align*}
	\Var\Exp[Y\mid Z] &= \Var\E[f_Y^*(X) \mid Z] \\
	&= \Var\Exp[\langle y, \phi(X)\rangle\mid Z] \\
	&= \Var\langle y, \Exp[\phi(X)\mid Z]\rangle && \text{(Linearity of Expectation)}\\
	&= \langle y, \Var\Exp[\phi(X)\mid Z]y\rangle.
	\end{align*}
	Similarly, for $A = \langle a, \phi(X)\rangle$, we have:
	\begin{equation*}
	\Var\Exp[A\mid Z] = \langle a, \Var\Exp[\phi(X)\mid Z] a\rangle.
	\end{equation*}
	To simplify the notation, define $V\defeq \Var\Exp[\phi(X)\mid Z]$. Then again, by the law of total variance, it is easy to verify that $0\preceq V\preceq \Sigma = \Var\phi(X)$. Hence the original maximization problem could be 
	relaxed as follows:
	\begin{equation*}
	\underset{V}{\text{maximize}}\quad \langle y, Vy\rangle,\qquad
	\text{subject to}\quad 0 \preceq V\preceq \Sigma, \quad\langle a, Va\rangle = 0.
	\end{equation*}
	We now apply the transformation $V'\defeq \Sigma^{-1/2}V\Sigma^{-1/2}$, $y'\defeq \Sigma^{1/2}y$ and $a'\defeq \Sigma^{1/2}a$ to further simplify the above optimization formulation:
	\begin{equation*}
	\underset{V'}{\text{maximize}}\quad \langle y', V'y'\rangle,\qquad
	\text{subject to}\quad 0 \preceq V'\preceq I, \quad\langle a', V'a'\rangle = 0.
	\end{equation*}
	To proceed, we first decompose $y'$ orthogonally w.r.t.\ $a'$:
	\begin{equation*}
	y' = y'^{\perp a'} + y'^{\parallel a'},
	\end{equation*}
	where $y'^{\perp a'}$ is the component of $y'$ that is perpendicular to $a'$ and $y'^{\parallel a'}$ is the parallel component of $y'$ to $a'$. Using this orthogonal decomposition, we have $\forall V'\preceq I$:
	\begin{align}
	\langle y', V'y'\rangle &= \langle (y'^{\perp a'} + y'^{\parallel a'}), V'(y'^{\perp a'} + y'^{\parallel a'})\rangle\label{equ:holds} \\
		&= \langle y'^{\perp a'} , V'y^{\perp a'} \rangle + \langle y'^{\parallel a'} , V'y^{\parallel a'}\rangle + 2\langle y'^{\perp a'} , V'y^{\parallel a'} \rangle\nonumber  \\
    	&= \langle y'^{\perp a'} , V'y^{\perp a'} \rangle \tag{$V'^{1/2}y'^{\parallel a'} = 0$ since $V'^{1/2}a' = 0$}\nonumber\\
    	&\leq \langle y'^{\perp a'}, y'^{\perp a'} \rangle, \tag{$V'\preceq I$} 
	\end{align}
    Next, realize that the vector $y'^{\perp a'}$ could be constructed as follows:
	\begin{equation*}
	y'^{\perp a'} = y' - y'^{\parallel a'} = y' - \frac{\langle y', a'\rangle}{\langle a', a'\rangle}a', 
	\end{equation*}
	We then plug the above expression of $y'^{\perp a'}$ to~\eqref{equ:holds}:
	\begin{align*}
	\langle y'^{\perp a'}, y'^{\perp a'} \rangle &= \bigg\langle y' - \frac{\langle y', a'\rangle}{\langle a', a'\rangle}a', y' - \frac{\langle y', a'\rangle}{\langle a', a'\rangle}a'\bigg\rangle \\
	&= \langle y', y'\rangle - 2\frac{\langle y', a'\rangle}{\langle a', a'\rangle}\langle y', a'\rangle + \frac{\langle y', a'\rangle^2}{\langle a', a'\rangle^2}\langle a', a'\rangle \\
	&= \langle y', y'\rangle - \frac{\langle y', a'\rangle^2}{\langle a', a'\rangle} \\
	&= \langle y, \Sigma y\rangle - \frac{\langle y, \Sigma a\rangle^2}{\langle a, \Sigma a\rangle} \\
	&= \Var\Exp[Y\mid X] - \frac{\Cov^2(\Exp[Y\mid X], \Exp[A\mid X])}{\Var\Exp[A\mid X]} \\
	&= \Var\Exp[Y\mid X]\bigg(1 - \frac{\Cov^2(\Exp[Y\mid X], \Exp[A\mid X])}{\Var\Exp[Y\mid X]\cdot \Var\Exp[A\mid X]}\bigg) \\
	&= \Var\Exp[Y\mid X]\cdot(1 - \rho^2_{YA}),
	\end{align*}
	by using the fact that $ \Var\E[Y\mid X]= \langle y, \Sigma y\rangle$, $ \Var\E[A\mid X] = \langle a, \Sigma a\rangle$ and $\Cov(\E[Y\mid X], \E[A\mid X]) = \langle y, \Sigma a\rangle$.
    
	To complete the proof, we only need to verify that the inequality in~\eqref{equ:holds} can indeed be attained. We do so by construction, let 
	\begin{equation*}
	    V'^{*} \defeq I - a'_0\otimes a'_0,
	\end{equation*}
	where $a'_0 \defeq a' / \|a'\|$ is the unit vector of $a'$. Clearly, $0 \preceq V'^{*}\preceq I$. Furthermore, 
	\begin{align*}
	    \langle y'^{\perp a'} , V'^{*}y^{\perp a'} \rangle - \langle y'^{\perp a'}, y'^{\perp a'} \rangle &= \langle y'^{\perp a'} , (a'_0\otimes a'_0) y^{\perp a'} \rangle \\
	    &= \langle y'^{\perp a'} , a'_0\rangle^2 \\
	    &= \langle y'^{\perp a'} , a'\rangle^2 / \|a'\|^2 \\
	    &= 0,
	\end{align*}
	completing the proof.
\end{proof}

\subsection{Proof of Theorem~\ref{thm:minvar}}
Again, we will provide a proof of Theorem~\ref{thm:minvar} in a generalized noisy setting, i.e., we no longer assume the noiseless condition (Assumption~\ref{assm:continuous:perfect}) so that the corresponding theorems in the noiseless setting follow as a special case. We first restate Theorem \ref{thm:minvar} below:
\minvar*
The following theorem is the generalized version of Theorem \ref{thm:minvar} in noisy setting:
\begin{restatable}{theorem}{minvarnoisy}
	The optimal solution of optimization problem~\eqref{equ:minvar} is lower bounded by
	\begin{equation}
	\underset{Z, \Var\Exp[Y\mid Z] = \Var\E[Y|X]}{\min}~\Var\Exp[A\mid Z] \ge \frac{\Var\E[Y|X]\cdot \langle a, y\rangle^2}{\langle y, y\rangle^2}.
	\end{equation}
	\label{thm:minvar_noisy}
\end{restatable}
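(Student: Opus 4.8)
The plan is to mirror the proof of Theorem~\ref{thm:maxvar_noisy} almost verbatim, with the roles of $Y$ and $A$ and the direction of the constraint interchanged: I would reduce the problem to a semidefinite relaxation over the operator $V \defeq \Var\Exp[\phi(X)\mid Z]$, whiten by $\Sigma$, and then observe that the ``no loss of information about $Y$'' constraint forces $V$ to act as the identity along the direction $y$.

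\textbf{Reduction to an SDP.} As in the derivation of \eqref{eq:simplify_z}, since $Z=g(X)$ we have $Z\perp Y\mid X$ and $Z\perp A\mid X$, so $\Exp[Y\mid Z]=\Exp[f_Y^*(X)\mid Z]$ and similarly for $A$; writing $f_Y^*(X)=\langle y,\phi(X)\rangle$, $f_A^*(X)=\langle a,\phi(X)\rangle$ and pulling the inner product through the (linear) conditional expectation yields $\Var\Exp[Y\mid Z]=\langle y,Vy\rangle$ and $\Var\Exp[A\mid Z]=\langle a,Va\rangle$, where $0\preceq V\preceq\Sigma=\Var\phi(X)$ by the law of total variance. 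Relaxing the feasible set to all $V$ with $0\preceq V\preceq\Sigma$ lower-bounds the minimum in \eqref{equ:minvar} by the value of
\[
\min\bigl\{\,\langle a,Va\rangle \;:\; 0\preceq V\preceq\Sigma,\ \langle y,Vy\rangle=\langle y,\Sigma y\rangle\,\bigr\}.
\]

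\textbf{Whitening and the key observation.} Next I would substitute $V'\defeq\Sigma^{-1/2}V\Sigma^{-1/2}$, $y'\defeq\Sigma^{1/2}y$, $a'\defeq\Sigma^{1/2}a$ (working inside $\ran\Sigma$ with the pseudoinverse if $\Sigma$ is singular), which turns the relaxed problem into: minimize $\langle a',V'a'\rangle$ over $0\preceq V'\preceq I$ subject to $\langle y',V'y'\rangle=\langle y',y'\rangle$. The constraint rewrites as $\langle y',(I-V')y'\rangle=0$ with $I-V'\succeq 0$, hence $(I-V')^{1/2}y'=0$, i.e. $V'y'=y'$. This is the one genuinely new ingredient compared with the maximization argument of Theorem~\ref{thm:maxvar_noisy}.

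\textbf{Orthogonal decomposition and conclusion.} Finally I would write $a'=c\,y'+w$ with $c\defeq\langle a',y'\rangle/\langle y',y'\rangle$ and $\langle w,y'\rangle=0$. Since $V'y'=y'$, the cross term $\langle y',V'w\rangle=\langle V'y',w\rangle=\langle y',w\rangle=0$ vanishes, and $\langle w,V'w\rangle\ge 0$ because $V'\succeq 0$, so
\[
\langle a',V'a'\rangle = c^2\langle y',y'\rangle + \langle w,V'w\rangle \;\ge\; \frac{\langle a',y'\rangle^2}{\langle y',y'\rangle}.
\]
Translating back via $\langle a',y'\rangle=\langle a,\Sigma y\rangle=\Cov(\Exp[Y\mid X],\Exp[A\mid X])$ and $\langle y',y'\rangle=\langle y,\Sigma y\rangle=\Var\Exp[Y\mid X]$ gives the lower bound $\Cov^2(\Exp[Y\mid X],\Exp[A\mid X])/\Var\Exp[Y\mid X]$, equivalently $\rho_{YA}^2\,\Var\Exp[A\mid X]$, which is the claimed bound and reduces to $\rho_{YA}^2\,\Var(A)$ under Assumption~\ref{assm:continuous:perfect}.

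\textbf{Main obstacle.} The argument is short and the algebra routine; the only real subtlety is that it produces merely a \emph{lower} bound, because Step~1 enlarges the attainable operators to the full spectrahedron $\{0\preceq V\preceq\Sigma\}$. Showing that the bound is attained --- i.e.\ that some encoder $Z$ realizes the rank-one optimum $V'^{*}=y'_0\otimes y'_0$ with $y'_0\defeq y'/\|y'\|$ --- is exactly the content of the achievability/regularity results (Theorems~\ref{thm:regular} and~\ref{thm:achievable}), so at this point I would only claim the inequality. A minor technical caveat is the singular-$\Sigma$ case, which should be dispatched by carrying out all of the above on $\ran\Sigma$.
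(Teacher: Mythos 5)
Your proposal is correct and follows essentially the same route as the paper's own proof of Theorem~\ref{thm:minvar_noisy}: the same SDP relaxation over $V=\Var\Exp[\phi(X)\mid Z]$ with $0\preceq V\preceq\Sigma$, the same whitening by $\Sigma^{1/2}$, the same observation that the sufficiency constraint forces $V'$ to act as the identity on $y'$, and the same orthogonal decomposition of $a'$ along and against $y'$. The only cosmetic difference is that you state the key step as $V'y'=y'$ where the paper writes $(V'-I)^{1/2}y'=0$; these are equivalent, and, like you, the paper establishes only the inequality here and defers exact attainability to the regularity results.
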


It is easy to see Theorem~\ref{thm:minvar} is an immediate corollary of this result: under the noiseless assumption, we have $\Var\E[Y|X] = \Var(Y)$ and $\Var\E[A|X] = \Var(A)$.
\begin{proof}
As in the proof of Theorem~\ref{thm:maxvar}, we have the following identities hold:
	\begin{align*}
	\Var\Exp[A\mid Z] &= \langle a, \Var\Exp[\phi(X)\mid Z] a\rangle,\\
	\Var\Exp[Y\mid Z] &= \langle y, \Var\Exp[\phi(X)\mid Z] y\rangle.
	\end{align*}
	Again, let $V\defeq \Var\Exp[\phi(X)\mid Z]$ so that we can 
	relax the optimization problem as follows:
	\begin{equation*}
	\begin{aligned}
		\underset{V}{\text{minimize}}\quad & \langle a, Va\rangle,\\
    	\text{subject to}\quad & 0  \preceq V\preceq \Sigma, \\
    	                         & \langle y, Vy\rangle = \Var\E[Y|X] = \langle y, \Sigma y\rangle.
	\end{aligned}
	\end{equation*}
	Again, we apply the transformation $V'\defeq \Sigma^{-1/2}V\Sigma^{-1/2}$, $y'\defeq \Sigma^{1/2}y$ and $a'\defeq \Sigma^{1/2}a$ to further simplify the above optimization formulation:
	\begin{equation*}
	\begin{aligned}
		\underset{V'}{\text{minimize}}\quad & \langle a', V'a'\rangle,\\
    	\text{subject to}\quad & 0  \preceq V'\preceq I, \\
    	                         & \langle y', V'y'\rangle = \langle y', y'\rangle.
    \end{aligned}
	\end{equation*}
	Note that since by the constraint $\langle y', V'y'\rangle = \langle y', y'\rangle$, which means $\langle y', (V' - I)y'\rangle = 0$, we also have $(V' - I)^{1/2} y' = 0$. To proceed, we first decompose $a'$ orthogonally w.r.t.\ $y'$:
	\begin{equation*}
	a' = a'^{\perp y'} + a'^{\parallel y'},
	\end{equation*}
	where $a'^{\perp y'}$ is the component of $a'$ that is perpendicular to $y'$ and $a'^{\parallel y'}$ is the parallel component of $a'$ to $y'$. Using this orthogonal decomposition, we have $\forall V'\succeq 0$:
    \begin{align}
        \langle a', V'a'\rangle &= \langle a'^{\perp y'} + a'^{\parallel y'}, V'(a'^{\perp y'} + a'^{\parallel y'})\rangle \label{equ:holds2}\\
        &= \langle a'^{\perp y'}, V' a'^{\perp y'}\rangle + \langle a'^{\parallel y'}, V' a'^{\parallel y'}\rangle + 2 \langle a'^{\perp y'}, V' a'^{\parallel y'}\rangle \nonumber\\
        &= \langle a'^{\perp y'}, V' a'^{\perp y'}\rangle + \langle a'^{\parallel y'}, V' a'^{\parallel y'}\rangle + 2 \langle a'^{\perp y'}, V' a'^{\parallel y'}\rangle - 2 \langle a'^{\perp y'}, a'^{\parallel y'}\rangle \nonumber\\
        &= \langle a'^{\perp y'}, V' a'^{\perp y'}\rangle + \langle a'^{\parallel y'}, V' a'^{\parallel y'}\rangle + 2 \langle a'^{\perp y'}, (V' - I) a'^{\parallel y'}\rangle \nonumber\\
        &= \langle a'^{\perp y'}, V' a'^{\perp y'}\rangle + \langle a'^{\parallel y'}, V' a'^{\parallel y'}\rangle \tag{$(V' - I)^{1/2} a'^{\parallel y'} = 0$ since $(V' - I)^{1/2} y' = 0$}\nonumber\\
        &\geq \langle a'^{\parallel y'}, V' a'^{\parallel y'}\rangle \tag{$V'\succeq 0$}\nonumber
    \end{align}	
	On the other hand, from linear algebra, it is clear that
	\begin{equation*}
	a'^{\parallel y'} = \frac{\langle a', y'\rangle}{\langle y', y' \rangle}y'.
	\end{equation*}
	Plug the above formula to~\eqref{equ:holds2}, yielding: 
	\begin{align*}
        \langle a'^{\parallel y'}, V' a'^{\parallel y'}\rangle  &= \bigg\langle \frac{\langle a', y'\rangle}{\langle y', y' \rangle}y', V' \frac{\langle a', y'\rangle}{\langle y', y' \rangle}y'\bigg\rangle \\
        &= \frac{\langle a', y'\rangle^2}{\langle y', y' \rangle^2}\langle y', V'y'\rangle \\
        &= \frac{\langle a', y'\rangle^2}{\langle y', y' \rangle^2}\langle y', y'\rangle \tag{$\langle y', V'y'\rangle = \langle y', y'\rangle$} \\
        &= \frac{\langle a, \Sigma y\rangle^2}{\langle y, Vy \rangle} \\
        &= \frac{\Cov^2(\Exp[Y\mid X], \Exp[A\mid X])}{\Var\Exp[Y\mid X]} \\
        &= \Var\Exp[A\mid X]\cdot \rho_{YA}^2,
    \end{align*}
	where we use the fact that $ \Var\E[Y\mid X]= \langle y, \Sigma y\rangle$, $ \Var\E[A\mid X] = \langle a, \Sigma a\rangle$ and $\Cov(\E[Y\mid X], \E[A\mid X]) = \langle y, \Sigma a\rangle$.
	
	To complete the proof, we only need to verify that the inequality in~\eqref{equ:holds2} can indeed be attained. We do so by construction, let 
	\begin{equation*}
	    V'^{*} \defeq y'_0\otimes y'_0,
	\end{equation*}
	where $y'_0 \defeq y' / \|y'\|$ is the unit vector of $y'$. Clearly, $0 \preceq V'^{*}\preceq I$ and $\langle y', y'\rangle = \langle y', V'^{*}y'\rangle$. Furthermore, 
	\begin{align*}
	   \langle a'^{\perp y'}, V'^{*} a'^{\perp y'}\rangle &= \langle a'^{\perp y'}, (y'_0\otimes y'_0) a'^{\perp y'}\rangle \\
	   &= \langle a'^{\perp y'}, y'_0 \rangle^2 \\
	   &= \langle a'^{\perp y'}, y' \rangle^2 / \|y'\|^2 \\
	   &= 0,
	\end{align*}
	completing the proof.
\end{proof}
Next, we provide the proof for Corollary~\ref{cor:continuous:bounds}:
\contbounds*
\begin{proof}
For the first inequality, let $\ell(\cdot, \cdot)$ be the squared loss. If $\Var\Exp[A\mid Z] = 0$, then by Theorem~\ref{thm:maxvar}
\begin{align*}
    \inf_h~\Exp[\ell(Y, h(Z))] 
    &= \Exp\Var(Y\mid Z) \\
    &= \Var(Y) - \Var\Exp[Y\mid Z] \\
    &\geq \Var(Y) - \left\{\Var(Y)\cdot (1-\rho_{YA}^2)\right\} \\
    &= \Var(Y)\cdot \rho_{YA}^2.
\end{align*}
Similarly, if the feature $Z$ preserves all the information w.r.t.\ $Y$, e.g., $\Var\Exp[Y\mid Z] = \Var(Y)$, then Theorem~\ref{thm:minvar} immediately implies the following bound, 
\begin{align*}
    \inf_h~\Exp[\ell(A, h(Z))] &= \Exp\Var(A\mid Z) \\
    &= \Var(A) - \Var\Exp[A\mid Z] \\
    &\leq \Var(A) - \Var(A)\cdot \rho_{YA}^2 \\
    &= \Var(A)\cdot (1 - \rho_{YA}^2).
\end{align*}
\end{proof}

\subsection{Proof of Theorem~\ref{thm:lower_bound}}

\noindent

We will prove a generalized version of Theorem \ref{thm:lower_bound} without noiseless assumption, stated below:

\begin{restatable}{theorem}{lagrangian}
The optimal solution of the Lagrangian has the following lower bound:
\begin{equation*}
\opt(\lambda) \geq \frac12\Big\{\lambda \Var\E[A| X] - \Var\E[Y| X] - \sqrt{(\lambda \Var\E[A| X] + \Var\E[Y| X])^2 - 4\lambda\Cov^2(\E[Y| X], \E[A| X])}\Big\} .
\label{equ:lag_noisy}
\end{equation*}
\label{thm:lower_bound_noisy}
\end{restatable}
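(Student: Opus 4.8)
The plan is to reduce $\opt(\lambda)$ to a semidefinite program over the conditional covariance operator $V \defeq \Var\Exp[\phi(X)\mid Z]$, and then to solve that program in closed form by exploiting the low rank of its cost operator. Exactly as in the proofs of Theorems~\ref{thm:maxvar_noisy} and~\ref{thm:minvar_noisy}, I would first use $\Exp[Y\mid Z] = \Exp[f_Y^*(X)\mid Z]$ and linearity of the inner product to write $\Var\Exp[Y\mid Z] = \langle y, Vy\rangle$ and $\Var\Exp[A\mid Z] = \langle a, Va\rangle$, where $0 \preceq V \preceq \Sigma$ by the law of total variance. Since every realizable $V$ lies in $\{V : 0 \preceq V \preceq \Sigma\}$ but we only minimize, enlarging the feasible set to all such $V$ yields a valid lower bound: $\opt(\lambda) \ge \min_{0 \preceq V \preceq \Sigma}\big(\lambda\langle a, Va\rangle - \langle y, Vy\rangle\big)$.

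Next I would whiten the problem: working inside $\ran(\Sigma)$ (harmless, since $V \preceq \Sigma$ forces $\ran(V) \subseteq \ran(\Sigma)$, so $\Sigma$ may be treated as invertible there), put $V' \defeq \Sigma^{-1/2} V \Sigma^{-1/2}$, $y' \defeq \Sigma^{1/2} y$, $a' \defeq \Sigma^{1/2} a$. The constraint becomes $0 \preceq V' \preceq I$ and the objective is the Frobenius pairing $\langle V', M \rangle$ with the self-adjoint operator $M \defeq \lambda\, a' \otimes a' - y' \otimes y'$. The elementary fact I would invoke is that $\min_{0 \preceq V' \preceq I}\langle V', M\rangle$ equals the sum of the negative eigenvalues of $M$, attained by choosing $V'$ to be the spectral projector onto the negative part of $M$.

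It then remains to compute this quantity for $M$. Being a difference of two rank-one operators, $M$ is supported on the at-most-two-dimensional subspace $\Span\{a', y'\}$; expressing $M$ as a $2 \times 2$ symmetric matrix in an orthonormal basis of this subspace gives $\tr M = \lambda\|a'\|^2 - \|y'\|^2$ and $\det M = -\lambda\big(\|a'\|^2\|y'\|^2 - \langle a', y'\rangle^2\big) \le 0$ by Cauchy--Schwarz, so $M$ has at most one negative eigenvalue, equal to $\tfrac12\big(\tr M - \sqrt{(\tr M)^2 - 4\det M}\,\big)$; a short algebraic simplification rewrites $(\tr M)^2 - 4\det M$ as $(\lambda\|a'\|^2 + \|y'\|^2)^2 - 4\lambda\langle a', y'\rangle^2$. (The degenerate cases — $\lambda = 0$, or $a'$ and $y'$ colinear — are checked directly and remain consistent with this formula.) Substituting back $\|a'\|^2 = \langle a, \Sigma a\rangle = \Var\Exp[A\mid X]$, $\|y'\|^2 = \langle y, \Sigma y\rangle = \Var\Exp[Y\mid X]$, and $\langle a', y'\rangle = \langle a, \Sigma y\rangle = \Cov(\Exp[Y\mid X], \Exp[A\mid X])$ then yields precisely the asserted lower bound on $\opt(\lambda)$; specializing to $Y = \Exp[Y\mid X]$ and $A = \Exp[A\mid X]$ almost surely recovers Theorem~\ref{thm:lower_bound}.

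I expect the main obstacle to be the bookkeeping around the spectrum of $M$: verifying that minimizing $\langle V', M\rangle$ over the operator interval $[0, I]$ is genuinely achieved (not merely approached) and equals the sum of negative eigenvalues even in the infinite-dimensional RKHS, handling the borderline sign patterns, and carrying out the clerical but error-prone simplification of the discriminant. By contrast, the reduction to the SDP and the whitening step are entirely routine given the proofs of Theorems~\ref{thm:maxvar_noisy} and~\ref{thm:minvar_noisy}.
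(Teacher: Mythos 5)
Your proposal is correct and follows essentially the same route as the paper's proof of Theorem~\ref{thm:lower_bound}: relax to the SDP over $V=\Var\E[\phi(X)\mid Z]$ with $0\preceq V\preceq\Sigma$, whiten, and evaluate the minimum of $\langle V',M\rangle$ over $0\preceq V'\preceq I$ via the spectrum of the rank-$\le 2$ operator $M=\lambda\,a'\otimes a'-y'\otimes y'$. The only cosmetic differences are that the paper extracts the spectrum from $\tr(R)$ and $\tr(R^2)$ and lower-bounds $\tr(V'R)$ by Von Neumann's trace inequality, whereas you read the eigenvalues off the $2\times2$ restriction and invoke the exact SDP optimum (sum of negative eigenvalues); and for the stated lower bound only the inequality direction is needed, so your worry about exact attainment in infinite dimensions is immaterial here.
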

To have a sanity check, note that when the noiseless assumption holds, we have $\Var\E[A|X] = \Var(A)$ and $\Var\E[Y|X] = \Var(Y)$, hence the bound above simplifies to:
	\begin{equation*}
	    \frac12\Big\{\lambda \Var(A) - \Var(Y) - \sqrt{(\lambda \Var(A) +\Var(Y))^2 - 4\lambda\cdot \rho_{YA}^2 \Var(A)\Var(Y)}\Big\}.
	\end{equation*}
	which reduces to the lower bound in Theorem \ref{thm:lower_bound}.

\begin{proof}[Proof of Theorem \ref{thm:lower_bound}]

Recall that our objective is:
\begin{equation}\label{eq:opt_l2_general_simplified}
\underset{Z=g(X)}{\text{minimize}}  \quad \lambda\Var\E[A\mid Z] - \Var\E[Y\mid Z]
\end{equation}
As in the proof of Theorem~\ref{thm:maxvar}, we have the following identities hold:
\begin{align*}
\Var\Exp[A\mid Z] &= \langle a, \Var\Exp[\phi(X)\mid Z] a\rangle,\\
\Var\Exp[Y\mid Z] &= \langle y, \Var\Exp[\phi(X)\mid Z] y\rangle.
\end{align*}
Again, let $V\defeq \Var\Exp[\phi(X)\mid Z]$ so that we can relax the optimization problem as follows:
\begin{equation}
\begin{aligned}
	\underset{V}{\text{minimize}}\quad & \lambda\langle a, Va\rangle - \langle y, Vy\rangle = \tr(V(\lambda aa^T - yy^T)),\\
	\text{subject to}\quad & 0  \preceq V\preceq \Sigma.
\end{aligned}
\label{equ:relaxed_overall}
\end{equation}
It is worth pointing out that the constraint in the optimization problem~\eqref{equ:relaxed_overall} is a superset (hence a relaxation) of the one in~\eqref{eq:opt_l2_general_simplified}, since there may not exist representation $Z = g(X)$ that attain a given $V = \Var\Exp[\phi(X)\mid Z]$. As a result, the optimal solution of~\eqref{equ:relaxed_overall} is a lower bound of that of~\eqref{eq:opt_l2_general_simplified}.

Similar to the proofs of Theorem~\ref{thm:maxvar} and Theorem~\ref{thm:minvar}, we apply the transformation $V'\defeq \Sigma^{-1/2}V\Sigma^{-1/2}$, $y'\defeq \Sigma^{1/2}y$ and $a'\defeq \Sigma^{1/2}a$ to further simplify the above optimization formulation:
\begin{equation}
\label{equ:relaxed_simplified_overall}
\begin{aligned}
	\underset{V'}{\text{minimize}}\quad & \lambda\langle a', V'a'\rangle - \langle y', V'y'\rangle = \tr(V'(\lambda a'a'^T - y'y'^T)),\\
	\text{subject to}\quad & 0  \preceq V'\preceq I.
\end{aligned}
\end{equation}
One key observation to the optimization problem~\eqref{equ:relaxed_simplified_overall} is that the matrix $R\defeq \lambda a'a'^T - y'y'^T$ has rank at most 2. Furthermore, thanks to Lemma \ref{lem:eigen}, we can fully characterize the spectrum of this matrix: Let $\sigma_i(R)$ be the $i$-th largest eigenvalue of $R$, then by Lemma~\ref{lem:eigen}, the spectrum of $R$ admits the following form:
\begin{align*}
\sigma_1(R) &= \frac12\Big\{\lambda \ip{a, \Sigma a} - \ip{y, \Sigma y} + \sqrt{(\lambda \ip{a, \Sigma a} + \ip{y, \Sigma y})^2 - 4\lambda \ip{a, \Sigma y}^2}\Big\}, \\
\sigma_d(R) &= \frac12\Big\{\lambda \ip{a, \Sigma a} - \ip{y, \Sigma y} - \sqrt{(\lambda \ip{a, \Sigma a} + \ip{y, \Sigma y})^2 - 4\lambda \ip{a, \Sigma y}^2}\Big\}, \\
\sigma_2(R) &= \cdots = \sigma_{d-1}(R) =0.
\end{align*}
Now by the spectral theorem, $R$ could be decomposed as follows:
\begin{equation*}
    R = \sigma_1(R)u_1u_1^T + \sigma_d(R) u_du_d^T,
\end{equation*}
where $u_i$ is the corresponding orthogonal eigenvectors. Hence, due to the Von-Neumann's trace inequality, we have:
\begin{align*}
    \tr(V'(\lambda a'a'^T - y'y'^T)) &= \tr(V'R) \\
    &  \geq \sum_{i=1}^d \sigma_i(R)\sigma_{d-i+1}(V') \tag{Von-Neumann's trace inequality} \\
    &= \sigma_1(R)\sigma_d(V') + \sigma_d(R)\sigma_1(V') \tag{$\sigma_2(R) = ... =\sigma_{d-1}(R)=0$} \\
    & \geq \sigma_d(R)\sigma_1(V') \tag{$\sigma_1(R) > 0$ and $\sigma_d(V') \geq 0$} \\
    & \geq \sigma_d(R) \tag{$\sigma_d(R) < 0$ and $\sigma_1(V') \leq 1$}.
\end{align*}
Now it suffices to verify that the lower bound above is attainable. To see this, let $V'^* = u_du_d^T$, i.e., the projection matrix of the eigenvector corresponding to $\sigma_d(R)$. The last step is to plug $\sigma_d(R)$ into the original optimization objective~\eqref{eq:opt_l2_general_simplified}, yielding:
\begin{align*}
    \opt(\lambda) &= \min_{Z}~ \lambda \Var\Exp[A\mid Z] - \Var\Exp[Y\mid Z]  \\
                  &\geq \ \sigma_d(R) \\
                  &=  \frac12\Big\{\lambda \ip{a, \Sigma a} - \ip{y, \Sigma y} - \sqrt{(\lambda \ip{a, \Sigma a} + \ip{y, \Sigma y})^2 - 4\lambda \ip{a, \Sigma y}^2}\Big\} \\
                  &= \frac12\Big\{\lambda \Var\E[A| X] - \Var\E[Y| X] - \sqrt{(\lambda \Var\E[A| X] + \Var\E[Y| X])^2 - 4\lambda\Cov^2(\E[Y| X], \E[A| X])}\Big\} 
\end{align*}        
Hence we have completed the proof.
\end{proof}

\subsection{Explicit formula for eigenvalues}\label{sec:eigenvalues}

The following lemma is used the in the proof of Theorem~\ref{thm:lower_bound} to characterize the spectrum of the matrix $R = \lambda a'a'^T - y'y'^T$.

\begin{lemma}\label{lem:eigen}
Let $R \defeq \lambda a'a'^T - y'y'^T$, where $y'\defeq \Sigma^{1/2}y$ and $a'\defeq \Sigma^{1/2}a$. Then the eigenvalues of $R$ are 
\begin{align*}
\sigma_1(R) &= \frac12\Big\{\lambda \ip{a, \Sigma a} - \ip{y, \Sigma y} + \sqrt{(\lambda \ip{a, \Sigma a} + \ip{y, \Sigma y})^2 - 4\lambda \ip{a, \Sigma y}^2}\Big\}, \\
\sigma_d(R) &= \frac12\Big\{\lambda \ip{a, \Sigma a} - \ip{y, \Sigma y} - \sqrt{(\lambda \ip{a, \Sigma a} + \ip{y, \Sigma y})^2 - 4\lambda \ip{a, \Sigma y}^2}\Big\}\\
\sigma_2(R) &= \cdots = \sigma_{d-1}(R) =0
\end{align*}
\end{lemma}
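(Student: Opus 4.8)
The plan is to reduce the spectral problem for the (possibly large) operator $R = \lambda a'a'^\top - y'y'^\top$ to an explicit $2\times 2$ eigenvalue computation. First I would observe that $R$ is a difference of two rank-one self-adjoint operators, hence $\rank(R)\le 2$; this immediately forces at least $d-2$ of its eigenvalues to vanish, which gives $\sigma_2(R)=\cdots=\sigma_{d-1}(R)=0$. (Even when $\phi$ is infinite-dimensional, $R$ is a finite-rank self-adjoint operator and so still has at most two nonzero eigenvalues; the argument below is unaffected, with $\sigma_d(R)$ read as the smaller of the two.) All the content is therefore in pinning down the two possibly-nonzero eigenvalues.

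Next I would write $R = U D U^\top$, where $U \defeq [\,a' \;\; y'\,]$ is the $d\times 2$ matrix whose columns are $a'$ and $y'$ and $D \defeq \diag(\lambda,-1)$. Since $D$ is indefinite one cannot symmetrize it via $D^{1/2}$; instead I would use the elementary fact that $MN$ and $NM$ share the same nonzero eigenvalues (equivalently, Sylvester's determinant identity $\det(\mu I_d - U D U^\top) = \mu^{d-2}\det(\mu I_2 - U^\top U D)$), applied with $M = UD$ and $N = U^\top$. This identifies the two nonzero eigenvalues of $R$ with the eigenvalues of the $2\times 2$ matrix $U^\top U D$.

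The rest is a short computation. Using $a' = \Sigma^{1/2}a$ and $y' = \Sigma^{1/2}y$, the Gram matrix is $U^\top U = \begin{pmatrix}\ip{a,\Sigma a} & \ip{a,\Sigma y}\\ \ip{a,\Sigma y} & \ip{y,\Sigma y}\end{pmatrix}$, so $U^\top U D$ has trace $T = \lambda\ip{a,\Sigma a} - \ip{y,\Sigma y}$ and determinant $\Delta = \lambda\bigl(\ip{a,\Sigma y}^2 - \ip{a,\Sigma a}\ip{y,\Sigma y}\bigr)$; its eigenvalues are $\tfrac12\bigl(T \pm \sqrt{T^2 - 4\Delta}\,\bigr)$. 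Expanding gives $T^2 - 4\Delta = \bigl(\lambda\ip{a,\Sigma a} + \ip{y,\Sigma y}\bigr)^2 - 4\lambda\ip{a,\Sigma y}^2$, which is exactly the radicand in the claimed formulas for $\sigma_1(R)$ and $\sigma_d(R)$. I would finish by noting this radicand is nonnegative: by Cauchy--Schwarz for the $\Sigma$-inner product, $\ip{a,\Sigma y}^2 \le \ip{a,\Sigma a}\ip{y,\Sigma y}$, and then, since $\lambda\ge 0$, AM--GM yields $4\lambda\ip{a,\Sigma y}^2 \le 4\lambda\ip{a,\Sigma a}\ip{y,\Sigma y} \le \bigl(\lambda\ip{a,\Sigma a} + \ip{y,\Sigma y}\bigr)^2$, so the two roots are real and can be labeled $\sigma_1(R)\ge\sigma_d(R)$.

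There is no real obstacle here; the only points requiring care are (i) not attempting a congruence with $D^{1/2}$ (as $D$ is indefinite), so that the $MN$/$NM$ reduction is the right tool, and (ii) the degenerate cases in which $a'$ and $y'$ are parallel or one of them vanishes, where $\rank(R)\le 1$ — but then the $2\times 2$ formula simply returns one zero eigenvalue, as one checks directly, so no separate case analysis is needed. As a bonus, the sign information used later in the proof of Theorem~\ref{thm:lower_bound} (namely $\sigma_d(R)\le 0\le\sigma_1(R)$) falls out immediately, since $\sigma_1(R)\sigma_d(R)=\Delta\le 0$ by the same Cauchy--Schwarz bound.
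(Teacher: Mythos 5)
Your proof is correct, but it reaches the quadratic by a different route than the paper. The paper's own argument is a trace computation: after noting $\rank(R)\le 2$, it evaluates $\tr(R)=\sigma_1(R)+\sigma_d(R)$ and $\tr(R^2)=\sigma_1^2(R)+\sigma_d^2(R)$ explicitly, converts these into the sum and product of the two nonzero eigenvalues, and solves the resulting quadratic. You instead factor $R=UDU^\top$ with $U=[\,a'\;\;y'\,]$ and $D=\diag(\lambda,-1)$ and invoke the fact that $UDU^\top$ and $U^\top U D$ share their nonzero spectrum, reducing everything to the characteristic polynomial of an explicit $2\times 2$ matrix. The two methods produce the identical quadratic (your $T$ and $\Delta$ coincide with the paper's $\sigma_1(R)+\sigma_d(R)$ and $\sigma_1(R)\sigma_d(R)$), so both are equally valid; the paper's version is self-contained and avoids appealing to the $MN$/$NM$ identity, while yours is arguably cleaner for infinite-dimensional feature maps and, more substantively, delivers for free two facts the paper uses later in the proof of Theorem~\ref{thm:lower_bound} but does not justify inside the lemma: the radicand is nonnegative (so the roots are real) and $\sigma_1(R)\sigma_d(R)=\Delta\le 0$ by Cauchy--Schwarz in the $\Sigma$-inner product, so that $\sigma_d(R)\le 0\le\sigma_1(R)$ and the labeling of the larger root as $\sigma_1$ and the smaller as $\sigma_d$ is consistent with the zero eigenvalues sitting in between. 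That sign observation is a genuine (if small) improvement in rigor over the paper's presentation.
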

\begin{proof}
Since $\rank(R) \leq \rank(\lambda a'a'^T - y'y'^T) \leq 2$, $R$ has at most two non-zero eigenvalues $\sigma_1(R)$ and $\sigma_{d}(R)$. Notice that 
\begin{align*}
\tr(R) &= \sum_{i=1}^d \sigma_i(R) = \sigma_1(R) + \sigma_{d}(R), \\
\tr(R^2) &= \sum_{i=1}^d \sigma^2_i(R) = \sigma_1^2(R) + \sigma_{d}^2(R)
\end{align*}
We can write $\tr(R)$ and $\tr(R^2)$ explicitly:
\begin{equation*}
\tr(R) = \lambda \tr (a'a'^T) - \tr (y'y'^T) = \lambda \ip{a, \Sigma a} - \ip{y, \Sigma y}
\end{equation*}
\begin{align*}
\tr(R^2) &= \tr((\lambda a'a'^T - y'y'^T) (\lambda a'a'^T - y'y'^T)) \\
         &= \lambda^2 (a'^Ta')\tr(a'a'^T) - \lambda (a'^Ty')\tr(a'y'^T) - \lambda (a'^Ty')\tr(y'a'^T) + (y'^Ty')\tr(y'y'^T)\\
         &= \lambda^2 (a'^Ta')^2 - 2\lambda (a'^Ty')^2 + (y'^Ty')^2 \\
         &= \lambda^2 (a^T\Sigma a)^2 - 2\lambda (a^T\Sigma y)^2 + (y^T\Sigma y)^2.
\end{align*}
Therefore, 
\begin{align*}
\sigma_1(R) + \sigma_{d}(R) &=  \lambda \ip{a, \Sigma a} - \ip{y, \Sigma y} \\
\sigma_1(R) \sigma_{d}(R) &= \frac12 \left(
(\sigma_1(R) + \sigma_{d}(R))^2 -
(\sigma_1^2(R) + \sigma_{d}^2(R)) 
\right)\\
&= \lambda \ip{a, \Sigma y}^2 - \lambda \ip{a, \Sigma a} \ip{y, \Sigma y}
\end{align*}	
Thus $\sigma_1(R)$ and $\sigma_d(R)$ are the roots of the quadratic equation in terms of $x$:
\begin{equation*}
x^2 - ( \lambda \ip{a, \Sigma a} - \ip{y, \Sigma y}) x + \lambda \ip{a, \Sigma y}^2 - \lambda \ip{a, \Sigma a} \ip{y, \Sigma y} = 0
\end{equation*}
We complete the proof by solving this quadratic equation.
\end{proof}

\subsection{Proof of Theorem \ref{thm:constrained_upper}}
We consider the constrained problem:
\begin{equation}
\begin{aligned}
& \underset{Z}{\text{maximize}} && \Var\Exp[Y\mid Z] \\
& \text{subject to} && \Var\Exp[A\mid Z] \le c.
\end{aligned}
\label{equ:maxvar:relax}
\end{equation}
Again, in what follows we provide proof for a generalized theorem without the noiseless assumption. One can readily verify that $\Var\E[Y|X] = \Var(Y)$ and $\Var\E[A|X] = \Var(A)$ under the noiseless setting. Hence the theorem below reduces to Theorem~\ref{thm:constrained_upper} when the noiseless assumption holds.
\begin{theorem}
	The optimal solution of the constrained problem \eqref{equ:mmaxvar:relax} has the following upper bound: for any $c \in [0, \rho_{YA}^2 \Var\Exp[A\mid X]]$, we have
	\begin{align}
	\Var\Exp[Y\mid Z] \le \Var\E[Y| X]\left(  2 \rho_{YA}\sqrt{(1-\rho_{YA}^2)\alpha(1-\alpha)}+1-\alpha - \rho_{YA}^2 + 2\alpha \rho_{YA}^2 \right) 
	\end{align}
	where $\alpha \defeq c/\Var\Exp[A\mid X]$.
\end{theorem}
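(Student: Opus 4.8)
The plan is to pass to the Lagrangian $\opt(\lambda)$, use weak duality to upper-bound the value of~\eqref{equ:mmaxvar:relax} by $\inf_{\lambda\ge0}\{\lambda c-\opt(\lambda)\}$, substitute the lower bound on $\opt(\lambda)$ supplied by Theorem~\ref{thm:lower_bound_noisy} (the noisy generalization of Theorem~\ref{thm:lower_bound}), and then carry out an explicit scalar minimization over $\lambda$. For the weak-duality step: for any $\lambda\ge0$ and any $Z=g(X)$ feasible for~\eqref{equ:mmaxvar:relax} (so $\Var\Exp[A\mid Z]\le c$), the definition of $\opt(\lambda)$ gives $\opt(\lambda)\le\lambda\Var\Exp[A\mid Z]-\Var\Exp[Y\mid Z]\le\lambda c-\Var\Exp[Y\mid Z]$, hence $\Var\Exp[Y\mid Z]\le\lambda c-\opt(\lambda)$. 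Writing $v_Y\defeq\Var\E[Y\mid X]$ and $v_A\defeq\Var\E[A\mid X]$, so that $\Cov^2(\E[Y\mid X],\E[A\mid X])=\rho_{YA}^2v_Av_Y$, Theorem~\ref{thm:lower_bound_noisy} gives $\opt(\lambda)\ge L(\lambda)$ with
\begin{equation*}
L(\lambda)\defeq\tfrac12\Big\{\lambda v_A-v_Y-\sqrt{\lambda^2v_A^2+2\lambda v_Av_Y(1-2\rho_{YA}^2)+v_Y^2}\Big\},
\end{equation*}
so the theorem reduces to minimizing $h(\lambda)\defeq\lambda c-L(\lambda)$ over $\lambda\ge0$.

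For the scalar minimization, I would write $q(\lambda)\defeq\lambda^2v_A^2+2\lambda v_Av_Y(1-2\rho_{YA}^2)+v_Y^2$ for the radicand and $\alpha\defeq c/v_A$, so $h(\lambda)=\lambda(c-\tfrac12v_A)+\tfrac12v_Y+\tfrac12\sqrt{q(\lambda)}$. Since $q$ has nonpositive discriminant, $\sqrt q$ (hence $h$) is convex on $[0,\infty)$, and $h'(0)=v_A(\alpha-\rho_{YA}^2)\le0$ for $\alpha\le\rho_{YA}^2$, so the minimizer is the unique nonnegative stationary point. The equation $h'(\lambda)=0$ simplifies to $\lambda v_A+v_Y(1-2\rho_{YA}^2)=(1-2\alpha)\sqrt{q(\lambda)}$; since $\sqrt{q(\lambda)}>0$ the relevant root satisfies $\sgn\big(\lambda v_A+v_Y(1-2\rho_{YA}^2)\big)=\sgn(1-2\alpha)$, and squaring (using $1-(1-2\alpha)^2=4\alpha(1-\alpha)$) turns it into a quadratic in $t\defeq\lambda v_A$ whose discriminant simplifies to $4v_Y^2\rho_{YA}^2(1-\rho_{YA}^2)\alpha(1-\alpha)(1-2\alpha)^2$ — a quantity with a clean square root. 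Solving and keeping the root consistent with the sign condition gives
\begin{equation*}
\lambda^*v_A=v_Y\Big(2\rho_{YA}^2-1+\frac{(1-2\alpha)\,\rho_{YA}\sqrt{1-\rho_{YA}^2}}{\sqrt{\alpha(1-\alpha)}}\Big),
\end{equation*}
where I take $\rho_{YA}\ge0$ without loss of generality (replacing $A$ by $-A$ changes neither $v_A$ nor $\Var\Exp[A\mid Z]$ for any $Z$). I would then check that $\lambda^*\ge0$ for all $c\in[0,\rho_{YA}^2v_A]$, with $\lambda^*\to\infty$ as $\alpha\to0$ and $\lambda^*\to0$ as $\alpha\to\rho_{YA}^2$, matching the hard-constraint limits of Theorems~\ref{thm:maxvar} and~\ref{thm:minvar}.

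The last step is back-substitution: the stationarity identity gives $\sqrt{q(\lambda^*)}=\big(\lambda^*v_A+v_Y(1-2\rho_{YA}^2)\big)/(1-2\alpha)$, so $h(\lambda^*)=\big(2\alpha(1-\alpha)\,\lambda^*v_A+v_Y(1-\alpha-\rho_{YA}^2)\big)/(1-2\alpha)$; inserting $\lambda^*v_A$ and using $2\alpha(1-\alpha)/\sqrt{\alpha(1-\alpha)}=2\sqrt{\alpha(1-\alpha)}$, the factor $1-2\alpha$ cancels throughout, leaving
\begin{equation*}
h(\lambda^*)=\Var\E[Y\mid X]\Big(2\rho_{YA}\sqrt{(1-\rho_{YA}^2)\alpha(1-\alpha)}+1-\alpha-\rho_{YA}^2+2\alpha\rho_{YA}^2\Big),
\end{equation*}
which is the claimed bound (and collapses to Theorem~\ref{thm:constrained_upper} since there $\Var\E[Y\mid X]=\Var(Y)$, $\Var\E[A\mid X]=\Var(A)$). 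The degenerate cases $\alpha=\tfrac12$ (possible only when $\rho_{YA}^2\ge\tfrac12$), $\rho_{YA}=0$, and $\alpha\in\{0,\rho_{YA}^2\}$ I would treat by a direct limit or substitution, recovering respectively a removable singularity, the trivial bound $\Var\E[Y\mid X]$, and Theorems~\ref{thm:maxvar}/\ref{thm:minvar}.

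The hard part is the middle: getting $h'(\lambda)=0$ into a quadratic, spotting that its discriminant has a clean closed-form square root (this is exactly what makes the frontier explicit), picking the root with $\lambda^*\ge0$ uniformly over $c\in[0,\rho_{YA}^2\Var\E[A\mid X]]$, and carrying the cancellations through to the final expression. The weak-duality bound, the reduction to the noiseless setting, and the endpoint checks are routine.
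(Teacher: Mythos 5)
Your proposal is correct and follows essentially the same route as the paper: weak Lagrangian duality combined with the lower bound on $\opt(\lambda)$ from Theorem~\ref{thm:lower_bound_noisy}, followed by an explicit closed-form minimization of $\lambda c - L(\lambda)$ over $\lambda \ge 0$, with the case split at $\alpha = \rho_{YA}^2$. The paper merely packages the scalar optimization as a standalone lemma about $\psi(\lambda;a,b,c,t)=\lambda c-\sqrt{a^2+\lambda^2b^2+2\lambda tab}$ under the substitution $c'=\Var\E[A\mid X]-2c$, $t'=1-2\rho_{YA}^2$; the stationary-point computation, root selection, and back-substitution are the same as yours.
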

\begin{proof}
	Recall that 
	\begin{equation}
	\Var \E[Y|Z] \leq - \sup_{\lambda \ge 0} \left\{ \opt(\lambda) - \lambda c \right\}
	\end{equation}
	Therefore, it is crucial to find the superimum on the RHS, for which we provide the following lemma:
	%
	
	\begin{lemma}\label{lem:inner_sup}
		Define
		\begin{equation}
		\psi(\lambda;a,b,c, t) :=  \lambda c - \sqrt{a^2+\lambda^2 b^2 + 2\lambda t ab}
		\end{equation}
		where $a,b\ge 0, |c| \leq b, |t| \leq 1$
		Then, 
		\begin{equation}\label{eqn:sup_expression}
		\sup_{\lambda \ge 0} \psi(\lambda;a,b,c, t) = \begin{cases}
		-\frac{a}{b}\left(\sqrt{(1-t^2)(b^2-c^2)} +ct\right) & \text{ if }  c \ge t \cdot b, \\
		-a & \text{ else }.
		\end{cases}
		\end{equation}
	\end{lemma}	
	\begin{proof} To avoid notational clutter, we use $\psi(\lambda)$ as a shorthand for $\psi(\lambda;a,b,c, t) $ when the meaning is clear from context. Direct calculation gives the derivative of $\psi$:
		\begin{equation}
		\frac{\partial \psi}{\partial \lambda} = c - \frac{ b (\lambda b +at)}{\sqrt{a^2+\lambda^2 b^2 + 2\lambda t ab}}
		\end{equation}
		When $c = b$, the derivative $\frac{\partial \psi}{\partial \lambda}$ is always positive. Therefore, the superimum is achieved at $ \psi(+\infty) = -at$, which coincides with the first case in equation \eqref{eqn:sup_expression}.
		
		Solving the equation $\frac{\partial \psi}{\partial \lambda} = 0$ gives two real roots:
		\begin{equation}
		\lambda^* = -\frac{at}{b} \pm \frac{ac}{b} \sqrt{\frac{1-t^2}{b^2-c^2}}
		\end{equation}
		When $b > c \ge t \cdot b$, one of the roots $\lambda^* = -\frac{at}{b} +  \frac{ac}{b} \sqrt{\frac{1-t^2}{b^2-c^2}} $ is positive.  Hence, the superimum is achieved at $\psi(0), \psi(\lambda^*)$ or $\psi(+\infty)$. We can verify that $\psi(\lambda^*) = -\frac{a}{b}\left(\sqrt{(1-t^2)(b^2-c^2)} +ct\right) $ is indeed the maximum among them.
		
		When $c \le t \cdot b$, there is no stationary point in $[0, +\infty)$, therefore the superimum is achieved at $\psi(0)$ or $\psi(+\infty)$. We can verify that $\psi(0) = -a$ is the larger one. 
		
		Combining the three cases, we have completed the proof.
	\end{proof}
	
	Using the definition of $\psi(\lambda; a,b,c,t)$, it is now clear that
	\begin{align*}
	-\sup_{\lambda \ge 0} \Big\{\opt(\lambda) - \lambda c \Big\} &= -\frac12 \sup_{\lambda \ge 0}  \psi(\lambda; a', b',c', t') + \frac12  \Var\E[Y| X]
	\end{align*}
	where 
	\begin{equation}
	a' = \Var\E[Y| X],  b' = \Var\E[A| X], c' =\Var\E[A| X] - 2 c, t' = 1-2 \rho_{YA}^2
	\end{equation}
	To simplify the equations, we introduce the notation 
	\begin{equation}
	\alpha = \frac{c}{\Var\E[A| X]}
	\end{equation}
	Notice that $c' - t'b' = 2(\rho_{YA}^2 \Var\E[A| X] - c )= 2\Var\E[A| X](\rho_{YA}^2  - \alpha ) $, by Lemma \ref{lem:inner_sup}, we have
	
	\textbf{(1) When $\alpha > \rho_{YA}^2 $: } we have $c' - t'b' <0$, therefore,
	\begin{align*}
	-\sup_{\lambda \ge 0}  \psi(\lambda; a', b',c', t') = a' =  \Var\E[Y| X]
	\end{align*}
	hence, in this case,  we have 
	\begin{equation}
	\Var\E[Y| Z] \le- \sup_{\lambda \geq 0}\Big\{\opt(\lambda)-\lambda c\Big\} = \frac12 \Var\E[Y| X] + \frac12 \Var\E[Y| X] = \Var\E[Y| X]
	\end{equation}
	which is a trivial upper bound. Indeed, in this regime, $ \Var\E[A| Z]$ is allowed to be larger than the lower bound in Theorem \ref{thm:minvar_noisy}, where one achieves no trade-off on the utility $\Var\E[Y| Z]$.
	
	\textbf{(2) When $\alpha \le \rho_{YA}^2 $: } we have $c' - t'b' \ge0$, therefore,
	\begin{align*}
	-\sup_{\lambda \ge 0}  \psi(\lambda; a', b',c', t') &=-\frac{a'}{b'}\left(\sqrt{(1-t'^2)(b'^2-c'^2)} +c't'\right) \\
	&= -\Var\E[Y| X]\left( 4 \rho_{YA}\sqrt{(1-\rho_{YA}^2)\alpha(1-\alpha)} + (1-2\alpha) (1-2\rho_{YA}^2)\right)
	\end{align*}
	hence, in this case,  we have 
	\begin{align*}
	\Var\E[Y| Z] &\le- \sup_{\lambda \geq 0}\Big\{\opt(\lambda)-\lambda c\Big\} \\
	&= -\frac12 \sup_{\lambda \ge 0}  \psi(\lambda; a', b',c', t') + \frac12 \Var\E[Y| X] \\
	&= \Var\E[Y| X]\left(  2 \rho_{YA}\sqrt{(1-\rho_{YA}^2)\alpha(1-\alpha)}+1-\alpha - \rho_{YA}^2 + 2\alpha \rho_{YA}^2 \right) 
	\end{align*}
	Specifically, when $c = 0$, i.e. $\alpha = 0$,  this upper bound simplifies to:
	\begin{align}
	\Var\E[Y| Z] &\leq \Var\E[Y| X] (1-\rho_{YA}^2),
	\end{align}
	which is exactly Theorem \ref{thm:maxvar_noisy}.
\end{proof}

To finish this section, it is also worth pointing out that the lower bound in~\eqref{equ:lag} admits a geometric interpretation, as shown in Figure~\ref{fig:triangle}. Essentially, the lower bound of $\opt(\lambda)$ in Theorem~\ref{thm:lower_bound} corresponds to the half of $|OA| - (|OY| + |AY|)$ in the triangle of Figure~\ref{fig:triangle}. Hence due to the triangle inequality, this term is always nonpositive. In particular, if $c = 0$, then $\sup_{\lambda\geq 0}\opt(\lambda)$ is attained at $\lambda = 0$.
\begin{figure}[tb]
    \centering
    \includegraphics{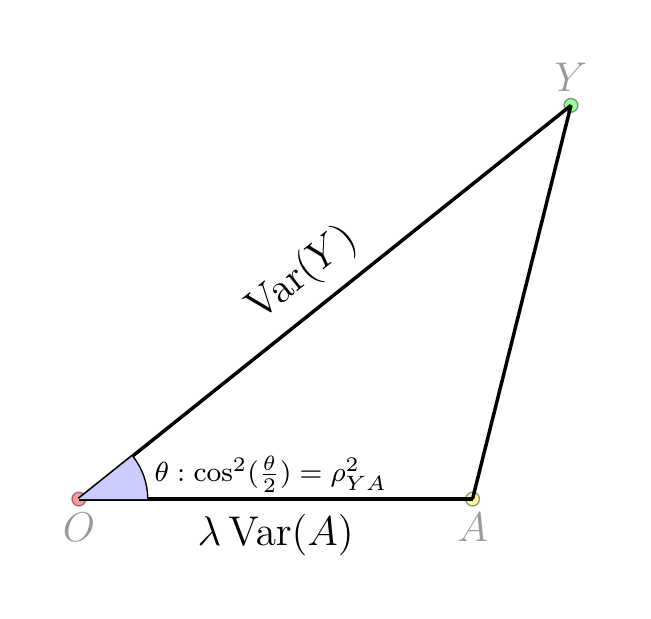}
    \caption{A geometric interpretation of $\opt(\lambda)$. By the cosine theorem, $|AY|^2 = \Var^2(Y) + \lambda^2\Var^2(A) - 2\lambda\Var(A)\Var(Y)(2\rho_{YA}^2 - 1)$. Hence the lower bound of $\opt(\lambda)$ in Theorem~\ref{thm:lower_bound} corresponds to the half of $|OA| - (|OY| + |AY|)$ in the above triangle. Hence due to the triangle inequality, this term is always nonpositive.}
    \label{fig:triangle}
\end{figure}

\subsection{Tightness of the Bounds} 
\label{sec:achievability}
In the main text we mention that a sufficient condition on the regularity of $(X, \phi)$ is that $\phi(X)$ follows a Gaussian distribution. To prove this theorem, we first prove the following lemma, which shows that for rank-1 matrix $M$ with a special structure, it suffices for us to choose $g(X)$ as a deterministic linear transform.
\begin{lemma}
Let $\Sigma = \sum_{i=1}^d \sigma_i u_iu_i^T$ be the spectral decomposition of $\Sigma$. If $\phi(X)$ is Gaussian and $M = \sigma_j u_ju_j^T$ for some $j\in[d]$, then there exists $L = M\Sigma^{-1/2}$ such that for $Z = L\phi(X)$, we have 
\begin{equation*}
    \Var\E[ \phi(X)\mid Z] = M.
\end{equation*}
\label{lemma:rank1}
\end{lemma}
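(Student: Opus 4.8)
The plan is to exploit the single structural fact that for a Gaussian random element, the conditional expectation given any (scalar) linear functional is affine with a closed form; the whole lemma then reduces to one linear-regression computation plus some bookkeeping about $\sigma$-algebras.

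First I would unpack $L = M\Sigma^{-1/2}$. Writing the (pseudo-inverse) square root as $\Sigma^{-1/2} = \sum_{i}\sigma_i^{-1/2}u_i u_i^T$ with the convention $\sigma_i^{-1/2}=0$ when $\sigma_i=0$, orthonormality of the $u_i$ collapses the product to $L = M\Sigma^{-1/2} = \sigma_j u_j(u_j^T\Sigma^{-1/2}) = \sigma_j^{1/2}u_j u_j^T$, a bona fide bounded (rank-one) operator even when $\Sigma$ is singular or infinite-dimensional. Consequently $Z = L\phi(X) = \sigma_j^{1/2}\,\xi\,u_j$, where $\xi \defeq u_j^T\phi(X)$ is a one-dimensional Gaussian. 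When $\sigma_j>0$ the map $\xi\mapsto\sigma_j^{1/2}\xi u_j$ is a bijection, so $Z$ and $\xi$ generate the same $\sigma$-algebra and $\E[\phi(X)\mid Z] = \E[\phi(X)\mid\xi]$; when $\sigma_j=0$ we have $M=0$, $Z\equiv 0$, and $\Var\E[\phi(X)\mid Z]=0=M$ trivially, so I would dispatch that case immediately and assume $\sigma_j>0$ thereafter.

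Next I would compute $\E[\phi(X)\mid\xi]$. Since $\phi(X)$ is Gaussian and $\xi$ is a linear functional of it, $(\phi(X),\xi)$ is jointly Gaussian, so $\E[\phi(X)\mid\xi]$ equals the affine least-squares predictor $\E[\phi(X)] + \Var(\xi)^{-1}\Cov(\phi(X),\xi)\,(\xi-\E[\xi])$. A one-line calculation gives $\Cov(\phi(X),\xi) = \Sigma u_j = \sigma_j u_j$ and $\Var(\xi) = u_j^T\Sigma u_j = \sigma_j$, so the predictor simplifies to $\E[\phi(X)] + u_j u_j^T\big(\phi(X)-\E[\phi(X)]\big)$. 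Taking the covariance operator of this random element (the mean term is deterministic) and using $u_j^T\Sigma u_j=\sigma_j$ once more,
\begin{equation*}
\Var\E[\phi(X)\mid Z] = u_j u_j^T\,\Sigma\,u_j u_j^T = \sigma_j\, u_j u_j^T = M,
\end{equation*}
which is exactly the claim; in particular $Z=g(X)$ with $g(\cdot)=L\,\phi(\cdot)$ is a deterministic linear transform, so no randomization is needed at this stage.

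The computations are routine; the only place that genuinely needs care is the infinite-dimensional RKHS setting, where $\phi(X)$ is a Gaussian element of $\mathbb{H}$ rather than a finite Gaussian vector. There I would either invoke the Gaussian conditioning formula for a Gaussian Hilbert-space element conditioned on one scalar linear functional, or — more elementarily — restrict attention to the (at most two-dimensional) subspace spanned by $u_j$ and the mean of $\phi(X)$, projecting onto which reduces everything to the finite-dimensional Gaussian case. I expect this reduction, rather than any algebra, to be the only real obstacle.
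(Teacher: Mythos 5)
Your proof is correct and rests on the same key fact as the paper's --- joint Gaussianity of $(\phi(X), L\phi(X))$ and the affine form of the Gaussian conditional expectation --- but the execution is genuinely different and, in places, cleaner. The paper first derives the general identity $\Var\E[\phi(X)\mid L\phi(X)] = \Sigma L^T (L\Sigma L^T)^{\dagger} L\Sigma$ for an arbitrary linear map $L$, and then solves the operator equation $M = \Sigma L^T(L\Sigma L^T)^{\dagger}L\Sigma$ to discover that $L = M\Sigma^{-1/2}$ works; this requires juggling the pseudo-inverse $(L\Sigma L^T)^\dagger$ (the paper initially writes an inverse that does not literally exist for a rank-one $L$ in dimension $d>1$). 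You instead take the prescribed $L$, collapse it via the spectral decomposition to $\sigma_j^{1/2}u_ju_j^T$, observe that conditioning on $Z$ is the same as conditioning on the scalar Gaussian $\xi = u_j^T\phi(X)$, and verify the claim by a one-dimensional regression computation $\Cov(\phi(X),\xi)\Var(\xi)^{-1} = u_ju_j^T$. What your route buys: no pseudo-inverses, an explicit treatment of the degenerate case $\sigma_j=0$ (which the paper silently ignores), and a reduction that makes the infinite-dimensional RKHS case essentially painless since one only ever conditions on a single scalar linear functional. What the paper's route buys: the intermediate formula $\Sigma L^T(L\Sigma L^T)^\dagger L\Sigma$ is reusable for arbitrary $L$, which is mildly informative beyond the rank-one case, though it is not needed elsewhere. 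Both proofs are valid; yours is the tighter argument for the lemma as stated.
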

\begin{proof}
	Note that when $\phi(X)$ is Gaussian, $(\phi(X), L \phi(X))$ is jointly Gaussian for any $L \in \RR^{d \times d}$, since $(\phi(X), L\phi(X))$ is a linear transformation of $\phi(X)$. Let $Z = L \phi(X)$. It can be readily verified that the covariance $\Cov(\phi(X), Z)$ is given by
	\begin{align*}
	    \Cov(\phi(X), Z) &= \E[\langle \phi(X) - \E[\phi(X)], Z - \E[Z]\rangle] \\
	    &= \E[\langle \phi(X) - \E[\phi(X)], L \phi(X) - L \E[\phi(X)]\rangle] \\
	    &= \Sigma L^T.
	\end{align*}
	Furthermore, we know that the conditional distribution $\phi(X)\mid Z$ is Gaussian as well, with mean and covariance given by:
	\begin{align*}
	\E[\phi(X)\mid Z] &= \E[\phi(X)] + \Cov(\phi(X), Z) \Var(Z)^{-1}(Z - \E[Z]) \\
	&= \E[\phi(X)] + \Sigma L^T(L\Sigma L^T)^{-1} (Z - L\E[\phi(X)]), \\
	\Var(\phi(X)\mid Z) &= \Var(\phi(X)) - \Cov(\phi(X), Z)\Var(Z)^{-1}\Cov(Z, \phi(X)) \\
	&= \Sigma - \Sigma L^T (L\Sigma L^T)^{-1} L \Sigma.
	\end{align*}
	Hence, 
	\begin{align*}
	\E[\Var(\phi(X)\mid Z)] = \Sigma - \Sigma L^T (L\Sigma L^T)^{-1} L \Sigma,
	\end{align*}
	and by the law of total variance, we have
	\begin{align*}
	    \Var\E[\phi(X)\mid Z] &= \Var(\phi(X)) - \E[\Var(\phi(X)\mid Z)] \\
	    &= \Sigma - \left(\Sigma - \Sigma L^T (L\Sigma L^T)^{-1} L \Sigma\right) \\
	    &= \Sigma L^T (L\Sigma L^T)^{-1} L \Sigma.
	\end{align*}
    Now it suffices that for $M = \sigma_j u_ju_j^T$, we could find the corresponding $L\in\RR^{d\times d}$ such that for $Z = L\phi(X)$, 
	\begin{equation*}
	    M = \Var\E[\phi(X)\mid Z] = \Sigma L^T (L\Sigma L^T)^{-1} L \Sigma.
	\end{equation*}
	To proceed, define $P\defeq L\Sigma^{1/2}$. In order to find the linear transform $L$, we need to solve the following equation in terms of $L$:
    \begin{align*}
        M = \Sigma L^T(L\Sigma L^T)^{\dagger}L\Sigma &\iff \Sigma^{-1/2}M\Sigma^{-1/2} = \Sigma^{1/2}L^T(L\Sigma L^T)^\dagger L\Sigma^{1/2} \\
                                                     &\iff \Sigma^{-1/2}M\Sigma^{-1/2} = P^T(PP^T)^\dagger P \\
                                                     &\iff u_ju_j^T = P^T(PP^T)^\dagger P \\
                                                    &\iff M(M^TM)^\dagger M^T = P^T(PP^T)^\dagger P. 
    \end{align*}	
	It is then straightforward to read the solution from the above equation, giving us $M^T = P$, which implies
	\begin{equation*}
        M = M^T = L\Sigma^{1/2} \implies L = M\Sigma^{-1/2},	    
	\end{equation*}
	completing the proof.
\end{proof}
With the help of Lemma~\ref{lemma:rank1}, we can now prove Theorem~\ref{thm:regular}. The high-level idea of the proof, again, is by constructing randomized feature transformations. 
\gaussian*
\begin{proof}
Recall that by definition of regularity, it suffices for us to prove the following: for any positive semidefinite matrix $M$: $\Sigma \succeq M \succeq 0$, there exists (randomized) $Z = g(X)$, such that 
\begin{equation*}
\Var\E[ \phi(X) \mid Z] = M.
\end{equation*}
Let $\Sigma = \sum_{i=1}^d \sigma_i u_iu_i^T$ be the spectral decomposition of $\Sigma$. Since $0\preceq M\preceq \Sigma$, there exists $0 \leq m_i \leq \sigma_i, \forall i\in[d]$, such that 
\begin{equation*}
    M = \sum_{i=1}^d m_i u_iu_i^T.
\end{equation*}
Define $s = \sum_{i=1}^d m_i / \sigma_i$, then $M$ could be equivalently expressed as
\begin{equation*}
    M = \sum_{i=1}^d m_i u_iu_i^T = \sum_{i=1}^d \frac{m_i}{s\sigma_i}\cdot s\sigma_i u_i u_i^T\eqdef \sum_{i=1}^d \frac{m_i}{s\sigma_i}\cdot sM_i.
\end{equation*}
In other words, $M$ is a convex combination of $\{sM_i\}_{i\in[d]}$. Following Lemma~\ref{lemma:rank1}, for each $M_i = \sigma_i u_iu_i^T$, there exists $L_i \defeq M_i\Sigma^{-1/2}$ and we can construct $Z_i \defeq sL_i\phi(X)$, such that
\begin{equation*}
    \Var\E[ \phi(X)\mid Z_i] = sM_i.
\end{equation*}
Now consider the following randomized feature $Z$. First, let $S \sim U(0, 1)$ be a uniformly random variable over $(0, 1)$. Construct
\begin{equation}
    Z = \begin{cases}
    Z_1 & \text{If } 0 \leq S < \frac{m_1}{s\sigma_1}, \\
    Z_2 & \text{If } \frac{m_1}{s\sigma_1} \leq S < \frac{m_1}{s\sigma_1} + \frac{m_2}{s\sigma_2}, \\
    \vdots \\
    Z_d & \text{If } \sum_{i=1}^{d-1}\frac{m_i}{s\sigma_i} \leq S < 1. 
    \end{cases}
\end{equation}
To compute $\Var\Exp[\phi(X)\mid Z]$, define $K\defeq \Exp[\phi(X)\mid Z]$, then by the law of total variance, we have:
\begin{equation*}
    \Var\Exp[\phi(X)\mid Z] = \Var(K) = \Exp[\Var(K\mid S)] + \Var\Exp[K\mid S].
\end{equation*}
We first compute $\Var\Exp[K\mid S]$:
\begin{align*}
    \Var\Exp[K\mid S] &= \Var\Exp[\Exp[\phi(X)\mid Z]\mid S] \\
                      &= \Var\Exp[\phi(X)\mid S] && \text{(The law of total expectation)} \\
                      &= \Var\Exp[\phi(X)] && \text{($\phi(X)\perp S$)} \\
                      &= 0.
\end{align*}
On the other hand, for $\Exp[\Var(K\mid S)]$, we have:
\begin{align*}
    \Exp[\Var(K\mid S)] &= \sum_{i = 1}^d \Pr(S = i)\cdot\Var(K\mid S = i) \\
                        &= \sum_{i = 1}^d \frac{m_i}{s\sigma_i}\cdot\Var(K\mid S = i) \\
                        &= \sum_{i = 1}^d \frac{m_i}{s\sigma_i}\cdot\Var\Exp[\phi(X)\mid Z, S=i] \\
                        &= \sum_{i = 1}^d \frac{m_i}{s\sigma_i}\cdot\Var\Exp[\phi(X)\mid Z_i] \\
                        &= \sum_{i = 1}^d \frac{m_i}{s\sigma_i}\cdot sM_i \\
                        &= M,
\end{align*}        
completing the proof.
\end{proof}
To prove Theorem~\ref{thm:achievable}, in what follows we prove that under the regularity condition, all the bounds in Theorem~\ref{thm:maxvar}, Theorem~\ref{thm:minvar} and Theorem~\ref{thm:lower_bound} are tight. 

\subsubsection{Tightness of the Bound in Theorem~\ref{thm:maxvar}}
In the proof of Theorem \ref{thm:maxvar}, we showed an upper bound on the optimal solution via an SDP relaxation. Therefore, the upper bound is achievable whenever the SDP relaxation is tight. 
\begin{corollary}
When $(X, \phi)$ is regular, the upper bound in Theorem \ref{thm:maxvar} is achievable. 
\end{corollary}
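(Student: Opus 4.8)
The plan is to invoke the regularity hypothesis directly on the explicit optimizer of the semidefinite relaxation built inside the proof of Theorem~\ref{thm:maxvar}. Recall that there the constrained problem $\max_{Z:\Var\Exp[A\mid Z]=0}\Var\Exp[Y\mid Z]$ was relaxed to
\[
\underset{V}{\text{maximize}}\ \langle y, Vy\rangle\quad\text{subject to}\quad 0\preceq V\preceq\Sigma,\ \langle a, Va\rangle = 0,
\]
and, after the change of variables $V' = \Sigma^{-1/2}V\Sigma^{-1/2}$, $a' = \Sigma^{1/2}a$, $y' = \Sigma^{1/2}y$, the maximizer was exhibited as $V'^{*} = I - a'_0\otimes a'_0$ with $a'_0 = a'/\|a'\|$, whose objective value equals $\Var(Y)(1-\rho_{YA}^2)$. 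Translating back to the original coordinates, I would set $V^{*}\defeq \Sigma^{1/2}(I - a'_0\otimes a'_0)\Sigma^{1/2}$, the candidate value of $\Var\Exp[\phi(X)\mid Z]$ that the desired representation must realize.

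First I would check that $V^{*}$ is admissible in the sense of Definition~\ref{defn:regular}, i.e.\ $0\preceq V^{*}\preceq\Sigma$. This is immediate: $a'_0\otimes a'_0$ is an orthogonal projection, so $0\preceq I - a'_0\otimes a'_0\preceq I$, and conjugation by the positive semidefinite operator $\Sigma^{1/2}$ preserves the Loewner order, giving $0\preceq V^{*}\preceq \Sigma^{1/2}I\Sigma^{1/2}=\Sigma$. Since $(X,\phi)$ is regular, Definition~\ref{defn:regular} then supplies a (possibly randomized) $Z=g(X)$ with $\Var\Exp[\phi(X)\mid Z]=V^{*}$.

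Next I would verify that this $Z$ witnesses the bound. Using the identities $\Var\Exp[A\mid Z]=\langle a,\Var\Exp[\phi(X)\mid Z]a\rangle$ and $\Var\Exp[Y\mid Z]=\langle y,\Var\Exp[\phi(X)\mid Z]y\rangle$ from the proof of Theorem~\ref{thm:maxvar}, we get $\Var\Exp[A\mid Z]=\langle a,V^{*}a\rangle=\langle a',V'^{*}a'\rangle=\|a'\|^{2}-\langle a',a'_0\rangle^{2}=0$, so $Z$ is feasible for the hard invariance constraint; and $\Var\Exp[Y\mid Z]=\langle y,V^{*}y\rangle=\langle y',V'^{*}y'\rangle$, which the proof of Theorem~\ref{thm:maxvar} already evaluated to $\Var(Y)(1-\rho_{YA}^{2})$ (using $V'^{*}a'_0=0$, so the component of $y'$ parallel to $a'$ drops out and one is left with $\langle y'^{\perp a'},y'^{\perp a'}\rangle$). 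Hence $Z$ attains the upper bound of Theorem~\ref{thm:maxvar}, which proves achievability.

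There is essentially no new analytic content here, so the only real obstacle is bookkeeping: ensuring that the admissibility check and the evaluations of the constraint and the objective on $V^{*}$ line up verbatim with what was done inside the proof of Theorem~\ref{thm:maxvar}, since regularity is precisely the hypothesis needed to turn that proof's explicit SDP optimizer into an actual representation $Z$ with $\Var\Exp[\phi(X)\mid Z]=V^{*}$. In the noisy generalization (Theorem~\ref{thm:maxvar_noisy}) one replaces $\Var(Y)$ by $\Var\Exp[Y\mid X]$ and $\Var(A)$ by $\Var\Exp[A\mid X]$ throughout, and the argument is unchanged.
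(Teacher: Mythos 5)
Your proposal is correct and follows essentially the same route as the paper: the paper's own proof simply cites regularity to realize the explicit SDP optimizer $V^{*}$ from the proof of Theorem~\ref{thm:maxvar} as $\Var\Exp[\phi(X)\mid Z]$ for some $Z=g(X)$, exactly as you do. Your version just spells out the admissibility check $0\preceq V^{*}\preceq\Sigma$ and the evaluation of the constraint and objective, all of which are accurate.
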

\begin{proof}
This immediately follows from the proof of Theorem~\ref{thm:maxvar} and Theorem~\ref{thm:regular}: if $(X, \phi)$ is regular, then there exists $Z = g(X)$ such that $\Var\E[\phi(X)\mid Z] = V^*$, where $V^*$ is the optimal solution of the SDP relaxation in the proof of Theorem~\ref{thm:maxvar}.
\end{proof}

\subsubsection{Tightness of the Bound in Theorem~\ref{thm:minvar}}
In the proof of Theorem \ref{thm:minvar}, we showed a lower bound on the optimal solution via an SDP relaxation. Therefore, the lower bound is achievable whenever the SDP relaxation is tight. 
\begin{corollary}
When $(X, \phi)$ is regular, the lower bound in Theorem \ref{thm:minvar} is achievable. 
\end{corollary}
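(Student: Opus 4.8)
The plan is to mirror the tightness argument already used for Theorem~\ref{thm:maxvar}, now reading off the explicit minimizer of the SDP relaxation that appears in the proof of Theorem~\ref{thm:minvar} and invoking regularity to realize it by an actual representation. Recall that in that proof one passes to the variables $V'\defeq\Sigma^{-1/2}V\Sigma^{-1/2}$, $y'\defeq\Sigma^{1/2}y$, $a'\defeq\Sigma^{1/2}a$, and the relaxed problem $\min\{\langle a',V'a'\rangle : 0\preceq V'\preceq I,\ \langle y',V'y'\rangle=\langle y',y'\rangle\}$ is minimized by the rank-one matrix $V'^{*}\defeq y'_0\otimes y'_0$ with $y'_0\defeq y'/\|y'\|$. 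Undoing the substitution, I would set $V^{*}\defeq\Sigma^{1/2}V'^{*}\Sigma^{1/2}$.

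First I would check that $V^{*}$ satisfies the semidefinite constraint of the original (un-relaxed) problem: since $0\preceq V'^{*}\preceq I$, conjugating by $\Sigma^{1/2}$ gives $0\preceq V^{*}\preceq\Sigma$. Then Definition~\ref{defn:regular} applies directly: regularity of $(X,\phi)$ furnishes a (possibly randomized) $Z=g(X)$ with $\Var\E[\phi(X)\mid Z]=V^{*}$.

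Next I would verify that this $Z$ is admissible for program~\eqref{equ:minvar} and attains the bound. Using the identities $\Var\E[Y\mid Z]=\langle y,V^{*}y\rangle=\langle y',V'^{*}y'\rangle$ and $\Var\E[A\mid Z]=\langle a,V^{*}a\rangle=\langle a',V'^{*}a'\rangle$, derived exactly as in the proof of Theorem~\ref{thm:maxvar}, the defining property $\langle y',V'^{*}y'\rangle=\langle y',y'\rangle=\langle y,\Sigma y\rangle=\Var(Y)$ shows that the hard constraint $\Var\E[Y\mid Z]=\Var(Y)$ holds, and $\langle a',V'^{*}a'\rangle$ is precisely the quantity that the proof of Theorem~\ref{thm:minvar} computed to equal the stated lower bound $\Var(A)\,\rho_{YA}^{2}$ (and its noisy analogue $\langle a,\Sigma y\rangle^2/\langle y,\Sigma y\rangle$). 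Hence the infimum in Theorem~\ref{thm:minvar} is attained by $Z$, so the bound is tight.

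The only point that needs care — and the reason this is an honest achievability statement rather than a restatement of the bound — is confirming that the SDP minimizer $V'^{*}$ lands \emph{exactly} on the equality-constraint surface $\langle y',V'y'\rangle=\langle y',y'\rangle$, not merely inside $0\preceq V'\preceq I$; this is immediate from $V'^{*}=y'_0\otimes y'_0$, but it is what guarantees that the representation handed back by regularity preserves all target variance while simultaneously minimizing that of $A$. Everything else is a direct transcription of the corresponding corollary for Theorem~\ref{thm:maxvar}.
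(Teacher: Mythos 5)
Your proposal is correct and follows exactly the paper's argument: the paper likewise takes the explicit SDP minimizer $V'^{*}=y'_0\otimes y'_0$ from the proof of Theorem~\ref{thm:minvar}, notes $0\preceq V^{*}\preceq\Sigma$, and invokes regularity (Theorem~\ref{thm:regular}) to produce a $Z$ realizing it. You simply spell out the verification of the equality constraint and the objective value, which the paper leaves implicit.
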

\begin{proof}
This immediately follows from the proof of Theorem~\ref{thm:minvar} and Theorem~\ref{thm:regular}: if $(X, \phi)$ is regular, then there exists $Z = g(X)$ such that $\Var\E[\phi(X)\mid Z] = V^*$, where $V^*$ is the optimal solution of the SDP relaxation in the proof of Theorem~\ref{thm:minvar}.
\end{proof}

\subsubsection{Tightness of the Bound in Theorem~\ref{thm:lower_bound}}
In the proof of Theorem \ref{thm:lower_bound}, we showed a lower bound on the tradeoff via an SDP relaxation. Therefore, the lower bound is achievable whenever the SDP relaxation is tight. 

\begin{proof}
From the proof of Theorem \ref{thm:lower_bound}, we can see that if there exists $Z = g(X)$, such that
\begin{align*}
    \Var\E[ \phi(X)\mid Z] =  \Sigma^{1/2}u_d u_d^T\Sigma^{1/2},
\end{align*}
where $u_d$ is the unit eigenvector of $R$ with eigenvalue $\sigma_d(R)$, then the equality is achievable. It is easy to see that 
\begin{align*}
    \Sigma \succeq \Sigma^{1/2}u_d u_d^T\Sigma^{1/2} \succeq 0.
\end{align*}
Therefore, choosing $M =\Sigma^{1/2}u_d u_d^T\Sigma^{1/2}$ in the definition of regularity guarantees the existence of $Z$. Hence we have completed the proof.
\end{proof}

\subsection{Proof without RKHS assumption}
The following lemma is crucial.
\begin{lemma}
\label{lem:generalized_lower_bound} 
For any $f, g $,
\begin{align} \label{eqn:first_ineq}
\begin{aligned}
 &\Var\E  [f(X)| Z]  - \Var\E  [ g(X) |Z ]  \\
\ge& \frac12 \Big\{
\Var[f(X)] - \Var[g(X)]
- \sqrt{
	(\Var[f(X)]+ \Var[g(X)]
)^2
	- 4\Cov(f(X), g(X))^2
}
\Big\}
\end{aligned}
\end{align}
and
\begin{align} \label{eqn:second_ineq}
\begin{aligned}
&\Var\E  [f(X)| Z]  - \Var\E  [ g(X) |Z ]  \\
\le& \frac12 \Big\{
\Var[f(X)] - \Var[g(X)]
+ \sqrt{
	(\Var[f(X)]+ \Var[g(X)]
	)^2
	- 4\Cov(f(X), g(X))^2
}
\Big\}
\end{aligned}
\end{align}
\end{lemma}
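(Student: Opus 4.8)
The plan is to reduce both inequalities in Lemma~\ref{lem:generalized_lower_bound} to a single two-dimensional semidefinite program and then read off the answer from the eigenvalue computation already established in Lemma~\ref{lem:eigen}, together with von Neumann's trace inequality, mirroring the proof of Theorem~\ref{thm:lower_bound}. The first step is to collect the two functions into the $\mathbb{R}^2$-valued random vector $W \defeq (f(X), g(X))^{\top}$ and set $\Sigma \defeq \Cov(W)$, so that $\Sigma_{11} = \Var[f(X)]$, $\Sigma_{22} = \Var[g(X)]$ and $\Sigma_{12} = \Cov(f(X), g(X))$. Applying the (vector) law of total variance, $\Cov(W) = \E[\Cov(W \mid Z)] + \Cov(\E[W \mid Z])$, and positivity of the conditional covariance shows the matrix $V \defeq \Cov(\E[W \mid Z])$ obeys $0 \preceq V \preceq \Sigma$. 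Since $V_{11} = \Var\E[f(X)\mid Z]$ and $V_{22} = \Var\E[g(X)\mid Z]$, the quantity to be bounded is exactly $V_{11} - V_{22} = \tr(VR)$ with $R \defeq \diag(1,-1) = e_1 e_1^{\top} - e_2 e_2^{\top}$; it therefore suffices to control $\tr(VR)$ from below and above over $\{V : 0 \preceq V \preceq \Sigma\}$, for arbitrary $Z$ (and in particular for $Z = g(X)$).

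Next I would perform the same change of variables as in the proof of Theorem~\ref{thm:lower_bound}: assuming first $\Sigma \succ 0$, put $V' \defeq \Sigma^{-1/2} V \Sigma^{-1/2}$, so $0 \preceq V' \preceq I$ and $\tr(VR) = \tr(V'S)$ with $S \defeq \Sigma^{1/2} R \Sigma^{1/2} = u u^{\top} - v v^{\top}$, where $u \defeq \Sigma^{1/2} e_1$ and $v \defeq \Sigma^{1/2} e_2$ satisfy $u^{\top}u = \Var[f(X)]$, $v^{\top}v = \Var[g(X)]$, $u^{\top}v = \Cov(f(X),g(X))$. This $S$ is exactly of the rank-at-most-$2$ form treated by Lemma~\ref{lem:eigen} (with $\lambda = 1$), and the same computation gives its two eigenvalues as $\sigma_1(S),\sigma_2(S) = \tfrac12\bigl\{\Var[f(X)] - \Var[g(X)] \pm \sqrt{(\Var[f(X)] + \Var[g(X)])^2 - 4\Cov(f(X),g(X))^2}\bigr\}$, which are precisely the right-hand sides of \eqref{eqn:second_ineq} and \eqref{eqn:first_ineq}. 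A key observation is that $\sigma_1(S)\sigma_2(S) = \Cov(f(X),g(X))^2 - \Var[f(X)]\Var[g(X)] \le 0$ by Cauchy--Schwarz, hence $\sigma_1(S) \ge 0 \ge \sigma_2(S)$. Then von Neumann's trace inequality, exactly as used for Theorem~\ref{thm:lower_bound}, gives $\sigma_1(S)\sigma_2(V') + \sigma_2(S)\sigma_1(V') \le \tr(V'S) \le \sigma_1(S)\sigma_1(V') + \sigma_2(S)\sigma_2(V')$; plugging in $0 \le \sigma_2(V') \le \sigma_1(V') \le 1$ together with the sign information collapses this to $\sigma_2(S) \le \tr(VR) \le \sigma_1(S)$, which is \eqref{eqn:first_ineq} and \eqref{eqn:second_ineq}.

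The remaining loose end is the degenerate case where $\Sigma$ is singular, which I would handle by a routine limiting argument: enlarging the feasible region to $\{0 \preceq V \preceq \Sigma + \eps I\}$ only loosens the extremal values, so the bounds above, computed with $\Sigma + \eps I$ in place of $\Sigma$, still apply, and letting $\eps \downarrow 0$ and using continuity of eigenvalues recovers the claim; alternatively one restricts everything to the range of $\Sigma$ and uses pseudoinverses, as in the proofs of Theorems~\ref{thm:maxvar}--\ref{thm:lower_bound}. I do not anticipate a genuine obstacle here, since the whole argument is a two-dimensional instance of machinery already developed for Theorem~\ref{thm:lower_bound}; the steps requiring the most care are the sign analysis $\sigma_1(S) \ge 0 \ge \sigma_2(S)$ (it is exactly what makes the von Neumann estimate collapse to a single eigenvalue rather than a mixture) and the degeneracy bookkeeping just described. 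As a sanity check, note that \eqref{eqn:second_ineq} also follows from \eqref{eqn:first_ineq} by swapping $f$ and $g$, since the square-root term is symmetric in the two.
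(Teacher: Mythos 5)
Your argument is correct, but it takes a genuinely different route from the paper's own proof. The paper proves this lemma by purely scalar manipulations: it rewrites both inequalities via the law of total variance, substitutes $p = f+g$ and $q = f-g$ so that the claim becomes the Cauchy--Schwarz-type bound $\sqrt{\Var[p(X)]\Var[q(X)]} \ge \bigl|\Cov(p(X),q(X)) - 2\,\E\Cov(p(X),q(X)\mid Z)\bigr|$, and then establishes this by the discriminant trick ($B \le \sqrt{AC} \iff \inf_t At^2 + 2Bt + C \ge 0$), with the nonnegativity of the resulting quadratic following from one more application of the law of total variance. You instead package $(f(X),g(X))$ into a $2\times 2$ covariance problem, pass to $0 \preceq V \preceq \Sigma$ via the vector law of total variance, and read off both inequalities simultaneously as the two eigenvalues of $\Sigma^{1/2}\diag(1,-1)\Sigma^{1/2}$ via von Neumann's trace inequality --- essentially a two-dimensional instance of the machinery the paper uses for Theorem~\ref{thm:lower_bound}, but applied here without any RKHS assumption (only joint second moments of $f(X)$ and $g(X)$ are needed, so no generality is lost). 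Your version makes the spectral origin of the two bounds and the sign structure $\sigma_1 \ge 0 \ge \sigma_2$ (via $\det \le 0$, i.e.\ Cauchy--Schwarz) transparent, and unifies this lemma with the SDP arguments elsewhere in the paper; the paper's version is more elementary, avoids von Neumann's inequality entirely, and has no degenerate case to patch. Your handling of singular $\Sigma$ by enlarging the feasible set to $\{0 \preceq V \preceq \Sigma + \eps I\}$ and letting $\eps \downarrow 0$ is sound, since the eigenvalue bounds depend continuously on $\eps$ and the determinant of $\Sigma + \eps I$ remains nonnegative, preserving the sign analysis.
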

\begin{proof}
After re-organizing the terms and multiplying a factor of $2$ on both sides, the inequality \eqref{eqn:first_ineq} is equivalent to the following:
\begin{align*}
 &\sqrt{
	(\Var[f(X)]+ \Var[g(X)]
)^2
	- 4(\Cov(f(X), g(X))^2
}\\
\ge & \Var[f(X)]- \Var[g(X)] - 2 \Var\E  [f(X)| Z] + 2 \Var\E  [g(X)| Z] \\
= & 
-\Var[f(X)]+ \Var[g(X)]
+2\E \Var \left(f(X)| Z\right)  -2 \E \Var \left( g(X) |Z \right)
\end{align*}
In the last step, we used the law of total variance. Similarly, the inequality \eqref{eqn:second_ineq} is equivalent to:
\begin{align*}
&\sqrt{
	(\Var[f(X)]+ \Var[g(X)]
	)^2
	- 4(\Cov(f(X), g(X))^2
}\\
\ge  & 
\Var[f(X)]- \Var[g(X)]
-2\E \Var \left(f(X)| Z\right)  +2 \E \Var \left( g(X) |Z \right)
\end{align*}
Therefore, it suffices to prove that 
\begin{align}
&\sqrt{
	(\Var[f(X)]+ \Var[g(X)]
	)^2
	- 4(\Cov(f(X), g(X))^2
} \nonumber\\
\ge &
\Big|
\Var[f(X)]- \Var[g(X)]
- 2\E \Var \left(f(X)| Z\right)  +2 \E \Var \left( g(X) |Z \right)
\Big| \label{eqn:crucial}
\end{align}
Let $p(X) = f(X) +g(X)$ and $q(X) = f(X) -g(X)$. Notice that
\begin{align*}
\Var[f(X)]- \Var[g(X)] &= \Cov(p(X) , q(X))\\
\Var(f(X)|Z) - \Var(g(X)|Z)  &= \Cov(p(X) , q(X) |Z)
\end{align*}
and
\begin{align*}
(\Var[f(X)] + \Var[g(X)] 
)^2 - 4\Cov(f(X), g(X))^2
 = \Var[p(X)] \Var[q(X)].
\end{align*}
The inequality \eqref{eqn:crucial} can be further simplified to 
\begin{align*}
\sqrt{
	\Var[p(X)] \Var[q(X)]
}
\ge 
\Big| \Cov(p(X) , q(X))
- 	2 \E \Cov(p(X) , q(X)|Z) \Big|.
\end{align*}
When $p(X)=c$ or $q(X)=c$ for some constant $c$ a.s., both sides equal to zero, hence the inequality obviously holds. 

For the other cases, we use the following simple observation: $B \leq \sqrt{AC}, A > 0, C > 0 \Leftrightarrow \inf_{t \in \R} At^2+2Bt+C \geq 0 $.  Therefore, we only need to prove that for any $t \in \R$, 
\begin{align*}
& t^2 \Var[p(X)]
+ 2t\left(\Cov(p(X) , q(X))
- 	2 \E \Cov(p(X) , q(X)|Z) \right) 
+  \Var[q(X)] \geq 0
\end{align*}
In fact, the inequality we wanted to prove can be re-written as:
\begin{equation}
\Var[t\cdot p(X) + q(X)] - 4 \E \Cov(t\cdot p(X), q(X)|Z) \geq 0.
\end{equation}

Notice that 
\begin{equation}
    4 \Cov(t\cdot p(X), q(X)|Z) = \Var[t\cdot p(X)+ q(X) |Z] - \Var[t\cdot p(X)- q(X) |Z]
\end{equation}
We have
\begin{align*}
& \Var[t\cdot p(X) + q(X)] - 4 \E \Cov(t\cdot p(X), q(X)|Z) \\
=& \left(\Var[t\cdot p(X) + q(X)]  -\E \Var[t\cdot p(X) + q(X)| Z] \right) + \E \Var[t\cdot p(X)- q(X) |Z]\\
=& \Var(\E[t\cdot p(X) + q(X)|Z]) +\E \Var[t\cdot p(X)- q(X) |Z] \\
\ge& 0,
\end{align*}
where in the second to last step we used the law of total variance. Therefore we have completed the proof.
\end{proof}
Here are two corollaries of the lemma that will be useful for proving Theorem \ref{thm:maxvar_noisy} and \ref{thm:minvar_noisy}.
\begin{corollary}\label{cor:first}
    Suppose the random variable $Z$ satisfies $\Var(\E[g(x) |Z]) = 0$, then we have the following upper bound:
    \begin{equation}
    	 \Var \E [f(X)| Z] \le \Var[f(X)]- \frac{\Cov(f(X), g(X))^2}{\Var[g(X)]}.
    \end{equation}
\end{corollary}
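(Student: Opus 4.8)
The plan is to reduce the statement to Lemma~\ref{lem:generalized_lower_bound} after first ``projecting out'' the $g$-component of $f$. I will assume $\Var[g(X)]>0$; if $\Var[g(X)]=0$ then $\Cov(f(X),g(X))=0$ as well, the right-hand side is simply $\Var[f(X)]$, and the bound is the trivial consequence of the law of total variance. The one structural fact to notice at the outset is that $\Var(\E[g(X)\mid Z])=0$ forces $\E[g(X)\mid Z]$ to equal the constant $\E[g(X)]$ almost surely.

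With this in hand, set $\beta\defeq\Cov(f(X),g(X))/\Var[g(X)]$ and $h(X)\defeq f(X)-\beta g(X)$, so that $\Cov(h(X),g(X))=0$ by construction. Since $\E[g(X)\mid Z]$ is a.s.\ constant, $\E[h(X)\mid Z]=\E[f(X)\mid Z]-\beta\,\E[g(X)\mid Z]$ differs from $\E[f(X)\mid Z]$ only by a constant, hence $\Var\E[h(X)\mid Z]=\Var\E[f(X)\mid Z]$. Applying inequality~\eqref{eqn:second_ineq} of Lemma~\ref{lem:generalized_lower_bound} with $h$ in place of $f$, the square-root term collapses to $\Var[h(X)]+\Var[g(X)]$ because $\Cov(h(X),g(X))=0$, and together with $\Var\E[g(X)\mid Z]=0$ this yields $\Var\E[h(X)\mid Z]\le\Var[h(X)]$. (One could equally invoke the law of total variance for $h(X)$ directly and skip the lemma here.)

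It then remains to compute $\Var[h(X)]=\Var[f(X)]-2\beta\Cov(f(X),g(X))+\beta^2\Var[g(X)]=\Var[f(X)]-\Cov(f(X),g(X))^2/\Var[g(X)]$, and to chain the identities: $\Var\E[f(X)\mid Z]=\Var\E[h(X)\mid Z]\le\Var[h(X)]=\Var[f(X)]-\Cov(f(X),g(X))^2/\Var[g(X)]$. There is no real obstacle; the only step needing care is the first one --- recognizing that a conditional expectation of zero variance is almost surely constant, which is precisely what allows the correction $\beta g(X)$ to be subtracted off ``for free'' inside $\Var\E[\,\cdot\mid Z]$.
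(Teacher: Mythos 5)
Your proof is correct, but it takes a genuinely different route from the paper's. The paper derives Corollary~\ref{cor:first} by applying inequality~\eqref{eqn:second_ineq} of Lemma~\ref{lem:generalized_lower_bound} to the pair $(f,\sqrt{\lambda}\,g)$, using $\Var\E[g(X)\mid Z]=0$ to drop the $\lambda\Var\E[g(X)\mid Z]$ term, rationalizing the resulting expression $\tfrac12\{\Var[f(X)]-\lambda\Var[g(X)]+\sqrt{(\Var[f(X)]+\lambda\Var[g(X)])^2-4\lambda\Cov(f(X),g(X))^2}\}$, and letting $\lambda\to\infty$; this keeps the corollary formally parallel to Corollary~\ref{cor:second}, which is obtained by the same limiting device from the other inequality in the lemma. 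You instead exploit the structural consequence of the hypothesis --- that $\E[g(X)\mid Z]$ is a.s.\ the constant $\E[g(X)]$ --- to subtract the regression component $\beta g(X)$ with $\beta\defeq\Cov(f(X),g(X))/\Var[g(X)]$ without changing $\Var\E[\,\cdot\mid Z]$, and then only need the law of total variance applied to the residual $h=f-\beta g$, whose variance is exactly the claimed right-hand side. Your argument is more elementary (Lemma~\ref{lem:generalized_lower_bound} and the limit computation are not needed at all, as you note) and more transparent: it identifies the bound as the variance of $f(X)$ left over after linearly regressing out $g(X)$, which matches the interpretation the authors give in the main text after Theorem~\ref{thm:maxvar}. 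What it does not buy is the intermediate finite-$\lambda$ family of bounds, which the paper reuses in the proof of Theorem~\ref{thm:constrained_upper} to trace out the whole frontier rather than just the endpoint $c=0$. Your handling of the degenerate case $\Var[g(X)]=0$ is a sensible convention, though strictly the corollary's statement already presupposes $\Var[g(X)]>0$ since it divides by it.
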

\begin{proof}
	By applying the second inequality in Lemma \ref{lem:generalized_lower_bound} to functions $f$ and $\sqrt{\lambda} \cdot g$, for any $\lambda \ge 0$,
	\begin{align}\label{eqn:first_step_proof_cor2}
	\begin{aligned}
	&\Var \E [f(X)| Z] - \lambda\Var \E [g(X)| Z]\\
	\le& \frac12 \Big\{
	\Var[f(X)] - \lambda \Var[g(X)]
	+ \sqrt{
		(\Var[f(X)]+ \lambda \Var[g(X)]
		)^2
		- 4\lambda \Cov(f(X), g(X))^2
	}
	\Big\}
	\end{aligned}
	\end{align}
	Since $\Var \E [g(X)| Z] = 0$,  Equation \eqref{eqn:first_step_proof_cor2} is equivalent to:
	\begin{align}
	\begin{aligned}
	&\Var \E [f(X)| Z]   \\
	\le& \frac12 \Big\{
	\Var[f(X)] - \lambda \Var[g(X)]
	+ \sqrt{
		(\Var[f(X)]+ \lambda \Var[g(X)]
		)^2
		- 4\lambda \Cov(f(X), g(X))^2
	}
	\Big\}\\
	&= \frac{2\lambda \Var[f(X)]\Var[g(X)]-2\lambda \Cov(f(X), g(X))^2}{-\Var[f(X)] + \lambda \Var[g(X)] +\sqrt{
			(\Var[f(X)]+ \lambda \Var[g(X)]
			)^2
			- 4\lambda \Cov(f(X), g(X))^2
	} },
	\end{aligned}
	\end{align}
	Taking the limit $\lambda \rightarrow \infty$ completes the proof.
\end{proof}

\begin{corollary}\label{cor:second}
	Suppose the random variable $Z$ satisfies $\Var(\E[g(x) |Z]) =\Var(g(x))$, then we have the following lower bound:
	\begin{equation}
	\Var \E [f(X)| Z] \ge \frac{\Cov(f(X), g(X))^2}{\Var[g(X)]}	
	\end{equation}
\end{corollary}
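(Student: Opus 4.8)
The plan is to mirror the proof of Corollary~\ref{cor:first}, but now invoking the \emph{lower-bound} half~\eqref{eqn:first_ineq} of Lemma~\ref{lem:generalized_lower_bound} in place of the upper-bound half, and again sending the auxiliary scalar $\lambda$ to $+\infty$. First I would apply~\eqref{eqn:first_ineq} to the pair of functions $f$ and $\sqrt{\lambda}\,g$ for an arbitrary $\lambda>0$; using $\Var[\sqrt{\lambda}\,g(X)]=\lambda\Var[g(X)]$, $\Var\E[\sqrt{\lambda}\,g(X)\mid Z]=\lambda\Var\E[g(X)\mid Z]$ and $\Cov(f(X),\sqrt{\lambda}\,g(X))^2=\lambda\Cov(f(X),g(X))^2$, this yields
\begin{align*}
\small
\Var\E[f(X)\mid Z]-\lambda\Var\E[g(X)\mid Z]\ \geq\ \tfrac12\Big\{\Var[f(X)]-\lambda\Var[g(X)]-\sqrt{\bigl(\Var[f(X)]+\lambda\Var[g(X)]\bigr)^2-4\lambda\Cov(f(X),g(X))^2}\Big\}.
\end{align*}

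Next I would invoke the hypothesis $\Var\E[g(X)\mid Z]=\Var[g(X)]$: transposing the term $\lambda\Var\E[g(X)\mid Z]=\lambda\Var[g(X)]$ to the right-hand side cancels the $-\lambda\Var[g(X)]$ there, so that with the shorthand $A\defeq\Var[f(X)]+\lambda\Var[g(X)]$ and $B\defeq 4\lambda\Cov(f(X),g(X))^2$ we obtain
\begin{align*}
\Var\E[f(X)\mid Z]\ \geq\ \tfrac12\bigl(A-\sqrt{A^2-B}\,\bigr)\ =\ \frac{B}{2\bigl(A+\sqrt{A^2-B}\,\bigr)},
\end{align*}
the last step being rationalization by the conjugate, valid because $A^2-B=\bigl(\Var[f(X)]-\lambda\Var[g(X)]\bigr)^2+4\lambda\bigl(\Var[f(X)]\Var[g(X)]-\Cov(f(X),g(X))^2\bigr)\geq 0$ by Cauchy--Schwarz. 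Dividing numerator and denominator on the right by $\lambda$ and letting $\lambda\to+\infty$, one has $B/\lambda\to 4\Cov(f(X),g(X))^2$, $A/\lambda\to\Var[g(X)]$, and $\sqrt{A^2-B}/\lambda\to\Var[g(X)]$, so the bound converges to $\Cov(f(X),g(X))^2/\Var[g(X)]$, which is the desired inequality.

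I do not anticipate a real obstacle; the two points that need care are (i) the direction of the cancellation --- it is essential that we use the \emph{lower} bound~\eqref{eqn:first_ineq} together with the constraint $\Var\E[g(X)\mid Z]=\Var[g(X)]$, so that after transposing, the right-hand side is $\tfrac12(A-\sqrt{A^2-B})\geq 0$ rather than a quantity that degenerates --- and (ii) the boundary case $\Var[g(X)]=0$, in which $\Cov(f(X),g(X))=0$ and the asserted inequality reduces to the trivial $\Var\E[f(X)\mid Z]\geq 0$; hence we may assume $\Var[g(X)]>0$, where the limit is an elementary continuity statement in $1/\lambda$ at $0$. For context, this is precisely the RKHS-free derivation of Theorem~\ref{thm:minvar_noisy}: taking $f=\E[A\mid\cdot]$ and $g=\E[Y\mid\cdot]$, the hypothesis reads $\Var\E[Y\mid Z]=\Var\E[Y\mid X]$ and the conclusion reads $\Var\E[A\mid Z]\geq\Cov(\E[Y\mid X],\E[A\mid X])^2/\Var\E[Y\mid X]=\rho_{YA}^2\,\Var\E[A\mid X]$. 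A fully symmetric alternative applies~\eqref{eqn:second_ineq} to $g$ and $\sqrt{\lambda}\,f$ and sends $\lambda\to 0^+$, arriving at the same bound.
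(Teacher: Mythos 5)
Your proof is correct and follows essentially the same route as the paper's: apply the lower-bound half of Lemma~\ref{lem:generalized_lower_bound} to $f$ and $\sqrt{\lambda}\,g$, cancel using the hypothesis $\Var\E[g(X)\mid Z]=\Var[g(X)]$, rationalize by the conjugate, and let $\lambda\to\infty$. Your explicit handling of the degenerate case $\Var[g(X)]=0$ and the check that the discriminant is nonnegative are minor additions the paper omits, but the argument is the same.
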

\begin{proof}
	By applying the first inequality in Lemma \ref{lem:generalized_lower_bound} to functions $f$ and $\sqrt{\lambda} \cdot g$, for any $\lambda \ge 0$,
	\begin{align}\label{eqn:first_step_proof_cor1}
	\begin{aligned}
	&\Var \E [f(X)| Z] - \lambda\Var \E [g(X)| Z]  \\
	\ge& \frac12 \Big\{
	\Var[f(X)] - \lambda \Var[g(X)]
	- \sqrt{
		(\Var[f(X)]+ \lambda \Var[g(X)]
		)^2
		- 4\lambda \Cov(f(X), g(X))^2
	}
	\Big\}
	\end{aligned}
	\end{align}
	Since $\Var(\E[g(X) |Z]) =\Var[g(X)]$, Equation \eqref{eqn:first_step_proof_cor1} is equivalent to:
	\begin{align}
	\begin{aligned}
	&\E \Var \left(f(X)| Z\right)   \\
	\ge& \frac12 \Big\{
	\Var[f(X)] + \lambda \Var[g(X)]
	- \sqrt{
		(\Var[f(X)]+ \lambda \Var[g(X)]
		)^2
		- 4\lambda \Cov(f(X), g(X))^2
	}
	\Big\}\\
	&= \frac{2\lambda \Cov(f(X), g(X))^2}{\Var[f(X)] + \lambda \Var[g(X)] +\sqrt{
			(\Var[f(X)]+ \lambda \Var[g(X)]
			)^2
			- 4\lambda \Cov(f(X), g(X))^2
	} },
	\end{aligned}
	\end{align}
	Taking the limit $\lambda \rightarrow \infty$ completes the proof.
\end{proof}

Now we are ready to prove Theorem \ref{thm:maxvar_noisy}, \ref{thm:minvar_noisy} and \ref{thm:lower_bound_noisy}. 

\maxvarnoisy*
\begin{proof}[Proof of Theorem~\ref{thm:maxvar_noisy}]
We apply corollary \ref{cor:first} by choosing $f:= f_Y^*$ and $g:= f_A^*$, this gives us the following lower bound when $\Var(\E[f_A^*(X)|Z] ) = 0$
\begin{align*}
	 \Var (\E [f_Y^*(X)| Z]) 
	 & \le \Var [f_Y^*(X)] - \frac{\cov(f_Y^*(X), f_A^*(X))^2}{\Var [f_A^*(X)]} \\
	 &= \Var [f_Y^*(X)] (1- \rho_{YA}^2).
\end{align*}
Therefore we have completed the proof.
\end{proof}
\minvarnoisy*
\begin{proof}[Proof of Theorem~\ref{thm:minvar_noisy}]
	We apply corollary \ref{cor:first} by choosing $f:= f_A^*$ and $g:= f_Y^*$, this gives us the following lower bound when $\Var(\E[f_Y^*(X)|Z] ) = \Var[f_Y^*(X)]$
	\begin{align*}
	\Var (\E [f_A^*(X)| Z]) &\ge \frac{\Cov(f(X), g(X))^2}{\Var[g(X)]}	 \\
	&= \Var [f_Y^*(X)] \rho_{YA}^2.
	\end{align*}
	Therefore we have completed the proof.
\end{proof}
\lagrangian*
\begin{proof}[Proof of Theorem~\ref{thm:lower_bound_noisy}]

The desired inequality follows directly from applying Lemma \ref{lem:generalized_lower_bound} above to $f:= f_Y^*$ and $g:= \sqrt{\lambda}f_A^*$.
\end{proof}







\end{document}